\newtheorem{theorem}{Theorem}[section]
\newtheorem{lemma}[theorem]{Lemma}
\newtheorem{corollary}[theorem]{Corollary}
\theoremstyle{definition}
\newtheorem{definition}[theorem]{Definition}
\newtheorem{assumption}[theorem]{Assumption}
\theoremstyle{remark}
\newtheorem{remark}[theorem]{Remark}
\title{Stability and Generalization of Quantum Neural Networks}
\author{%
    Jiaqi Yang, Wei Xie\thanks{Corresponding author.}, Xiaohua Xu\footnotemark[1] \\
    University of Science and Technology of China, Hefei 230027, China \\
    yangjiaqi@mail.ustc.edu.cn, xxieww@ustc.edu.cn, xiaohuaxu@ustc.edu.cn
}
\date{}
\begin{document}

\maketitle

\begin{abstract}
Quantum neural networks (QNNs) play an important role as an emerging technology in the rapidly growing field of quantum machine learning. While their empirical success is evident, the theoretical explorations of QNNs, particularly their generalization properties, are less developed and primarily focus on the uniform convergence approach. In this paper, we exploit an advanced tool in classical learning theory, i.e., algorithmic stability, to study the generalization of QNNs. We first establish high-probability generalization bounds for QNNs via uniform stability. Our bounds shed light on the key factors influencing the generalization performance of QNNs and provide practical insights into both the design and training processes. We next explore the generalization of QNNs on near-term noisy intermediate-scale quantum (NISQ) devices, highlighting the potential benefits of quantum noise. Moreover, we argue that our previous analysis characterizes worst-case generalization guarantees, and we establish a refined optimization-dependent generalization bound for QNNs via on-average stability. Numerical experiments on various real-world datasets support our theoretical findings.
\end{abstract}

\section{Introduction}
Quantum machine learning (QML) is a rapidly growing field that has generated great excitement \cite{biamonte2017quantum}. With the aim of solving complex problems beyond the reach of classical computers, firm and steady progress has been achieved over the past decade \cite{harrow2017quantum,dunjko2018machine,ciliberto2018quantum}. Quantum neural network (QNN), or equivalently, the parameterized quantum circuit (PQC) with a classical optimizer, has received great attention thanks to its potential to achieve quantum advantages on near-term noisy intermediate scale quantum (NISQ) devices \cite{farhi2018classification,havlivcek2019supervised,beer2020training}. Typically, stochastic gradient descent (SGD) is employed as the classical optimizer in QNNs due to its simplicity and efficiency. Driven by the significance of understanding the power of QNNs, a growing body of literature has been conducted to investigate their expressivity \cite{sim2019expressibility,bravo2020scaling,wu2021expressivity,herman2023expressivity}, trainability \cite{mcclean2018barren,cerezo2021cost,sharma2022trainability,larocca2023theory}, and generalization \cite{caro2020pseudo,banchi2021generalization,bu2022statistical,du2022efficient,caro2022generalization,gyurik2023structural}. Investigating the generalization of QNNs is crucial for understanding their underlying working principles and capabilities from a theoretical perspective. Nevertheless, to date, the theoretical establishment in this area is still in its infancy. 

In classical learning theory, a variety of techniques for generalization analysis are known. One of the most popular approach is via the uniform convergence analysis, which studies the uniform generalization gaps in a hypothesis space. This approach typically uses complexity measures such as VC-dimension \cite{vapnik1998statistical}, covering numbers \cite{cucker2007learning}, Rademacher complexity \cite{bartlett2002rademacher} to develop capacity-dependent bounds. To the best of our knowledge, essentially all generalization bounds derived for QNNs so far are of the uniform kind, which, however, are not sufficient to explain the generalization behavior of current-scale QNNs \cite{gil2024understanding}. Impressive alternatives have been proposed, including sample compression \cite{littlestone1986relating}, PAC-Bayes \cite{mcallester1998some}, and algorithmic stability \cite{bousquet2002stability}. In particular, the algorithmic stability tools have advantages in some aspects, such as  dimension-independent and adaptability for broad learning paradigms. Therefore, it is natural to explore the generalization of QNNs through the lens of algorithmic stability.

In this paper, we study the generalization of QNNs trained using the SGD algorithm based on algorithmic stability. Our contributions are summarized as follows.
\begin{itemize}
    \item We analyze the uniform stability of QNNs trained using the SGD algorithm. Our results reveal that the negative effects arising from complex QNN models on stability can be mitigated by setting appropriate step sizes. We investigate the simplified stability results for two commonly used step sizes. 
    \item We establish high-probability generalization bounds for QNNs via uniform stability. Our bounds shed light on the key factors influencing the generalization performance of QNNs. Furthermore, we provide practical insights into both the design and training processes for developing powerful QNNs. Notably, our bounds help explain why over-parameterized QNNs trained by SGD exhibit excellent generalization, which was not explained by existing bounds for QNNs with the same setting \cite{du2022efficient,caro2022generalization}.
    \item We further extend our analysis to the noise scenario. Considering the standard depolarizing noise model (easily extended to other noise models), we similarly establish high-probability generalization bounds for QNNs. Our bounds highlight the potential benefits of quantum noise. Specifically, we provide new insight that the noise naturally occurring in quantum devices can be effectively tuned as a form of regularization. Moreover, our results theoretically explain the effectiveness of the method proposed in \cite{somogyi2024method}, which enhances QNN performance by controlling the noise level in quantum hardware as hyperparameters during the training process.
    \item We argue that previous analysis characterizes worst-case generalization guarantees. As a complement to our previous results, we establish a refined optimization-dependent generalization bound via on-average stability. This new bound reveals that the worst-case bounds can be improved in certain regimes. In addition, we corroborate the connection of our bound to the generalization performance of the recent experiments in \cite{gil2024understanding}. Our bound captures the effect of randomized labels on generalization in terms of the on-average variance of SGD. As a corollary, our results validates the intuition that if we are good at the initialization point, the QNN model is more stable and thus generalizes better.
    \item We conduct numerical experiments on real-world datasets, and the empirical results indeed support our theoretical findings.
\end{itemize}

\section{Related Work} 
\textbf{Generalization analysis of QNNs.} The theory of generalization for QNNs is less developed and primarily focuses on the uniform convergence approach. Recent studies on the generalization of QNNs typically employ complexity measures such as pseudo-dimension \cite{caro2020pseudo}, effective dimension \cite{abbas2021power}, VC-dimension \cite{gyurik2023structural}, Rademacher complexity \cite{bu2021rademacher,bu2022statistical,bu2023effects}, covering number \cite{du2022efficient,caro2022generalization}, and all their uniform relatives. Of the most interest to ours is \cite{du2022efficient}, where the authors provide generalization bounds for QNNs based on covering number, all derived with the same setting as ours. Later, \cite{caro2022generalization} uses quantum channels to derive more general results. Their generalization bounds exhibit a sublinear dependence on the number of trainable quantum gates. Unfortunately, this limits their results in providing meaningful guarantees for over-parameterized QNNs. More recently, it has been argued in \cite{gil2024understanding} that the traditional measures of model complexity are not sufficient to explain the generalization behavior of current-scale QNNs. Their empirical findings highlight the need to shift perspective toward non-uniform generalization measures in QML. Therefore, it is natural to leverage the concept of algorithmic stability  \cite{bousquet2002stability}, which additionally enjoys desirable properties on flexibility (dimension-independence) and adaptivity (suiting for diverse learning scenarios). While completing our work, we became aware of an \href{https://openreview.net/pdf?id=lirR6Wfkd6}{anonymous work} on OpenReview that derives a generalization bound for data re-uploading QNNs via uniform stability. We develop our approach independently of them, which differs in multiple key aspects and thus obtains more general results. Unlike their focus on constant step sizes, we also analyze the \emph{practically relevant} scenarios of decaying step sizes. We further explore the generalization of QNNs in the noise scenario, highlighting the potential benefits of quantum noise. Moreover, we introduce another concept of on-average stability to provide the refined optimization-generalization bound.


\textbf{Stability and Generalization.}
The classical framework of quantifying generalization via stability was established in an influential paper \cite{bousquet2002stability}, where the celebrated concept of uniform stability was introduced. Subsequently, the uniform stability measure was extended to study stochastic algorithms \cite{elisseeff2005stability}. The seminal work \cite{hardt2016train} pioneered the generalization analysis of SGD via uniform stability, which inspired several follow-up studies to understand stochastic optimization algorithms based on different algorithmic stability measures, e.g., on-average stability \cite{kuzborskij2018data,lei2020fine}, locally elastic stability \cite{deng2021toward}, and argument stability \cite{lei2020fine,bassily2020stability}. Stability-based generalization analysis was also developed for transfer learning \cite{kuzborskij2018data}, pairwise learning \cite{lei2021generalization,yang2021simple}, and minimax problems \cite{lei2021stability,xing2021algorithmic}. The power of stability analysis is especially reflected by its ability to derive optimal generalization bounds in expectation \cite{shalev2010learnability}. Recent studies show that uniform stability can yield almost optimal high-probability bounds \cite{feldman2018generalization,bousquet2020sharper}. While significant progress has been achieved, there is a lack of analysis on generalization from the perspective of stability in the context of QNNs.

\section{Problem Setup}
\subsection{Quantum Computation Basics and Notations} 
We briefly introduce some basic concepts of quantum computation that are necessary for this work. Interested readers are recommended to the celebrated textbook by Nielsen and Chuang \cite{nielsen2001quantum}. Qubit is the fundamental unit of quantum computation and quantum information. An \( N \)-qubit quantum state is represented by the density matrix, which is a  Hermitian, positive semi-definite matrix \( \rho\in\mathbb{C}^{2^N\times2^N}\) with \( \mathrm{Tr}(\rho)=1 \). Quantum gates are unitary matrices used to transform quantum states. Common single-qubit gates include Pauli rotations \( \left\{ R_P(\theta) = \mathrm{e}^{-\mathrm{i} \frac{\theta}{2} P} \mid P \in \left\{ X, Y, Z \right\} \right\} \), which are in the exponential form of Pauli matrices,
\begin{equation}
X = \begin{pmatrix}
0 & 1 \\
1 & 0
\end{pmatrix}, \quad
Y = \begin{pmatrix}
0 & -\textnormal{i} \\
\textnormal{i} & 0
\end{pmatrix}, \quad
Z = \begin{pmatrix}
1 & 0 \\
0 & -1
\end{pmatrix}.
\notag
\end{equation}
Common two-qubit gates include controlled-X gate, \( \textnormal{CNOT} = |0\rangle\langle0|\otimes I+ |1\rangle\langle1| \otimes X \), is used to generate quantum entanglement among qubits. The evolution of a quantum state \( \rho \) can be mathematically described by employing a quantum circuit: \( \rho^\prime=U(\boldsymbol{\theta})\rho U^\dagger(\boldsymbol{\theta})\), where the unitary \( U(\boldsymbol{\theta})\) is usually parameterized by a series of single-qubit rotation gate angles \( \boldsymbol{\theta}\) and basic two-qubit gates, and \( U^\dagger \) denotes the conjugate transpose of \( U \). Quantum measurement is a means to extract classical (observable) information from the quantum states. An observable is represented by a Hermitian matrix \( O\in\mathbb{C}^{2^N\times 2^N} \). Since measurement is an irreversible process, it is typically introduced at the end of the quantum circuit. The expectation of the output is calculated as \( \mathrm{Tr}(O\rho^\prime) \).

\textbf{Quantum Neural Network.} The quantum neural network typically contains three parts, i.e., an \( N \)-qubit quantum circuit \( U(\boldsymbol{\theta}) \), an observable \( O\in\mathbb{C}^{2^N\times 2^N}\), and a classical optimizer that update trainable parameters \( \boldsymbol{\theta} \) to minimize the predefined objective function. For classical data \( \boldsymbol{x} \), the input is first encoded into a quantum state via a map \( \boldsymbol{x}\mapsto\rho(\boldsymbol{x})\in\mathbb{C}^{2^N\times 2^N} \). Define \( U(\boldsymbol{\theta}) = \prod_{k=1}^{K_g} U_k(\boldsymbol{\theta}) \), where \( U_k \) refers to the \( k \)-th quantum gate. In general, \( U(\boldsymbol{\theta}) \) is formed by \( K \) trainable gates, \( K_g - K \) fixed gates, and \( \boldsymbol{\theta} \in \mathbb{R}^K \). Under the above definitions, the output function of the QNN can be written as follows
\begin{equation*}
    \begin{aligned}
         f_{\boldsymbol{\theta}}(\boldsymbol{x}) = \mathrm{Tr}\left( O U(\boldsymbol{\theta}) \rho(\boldsymbol{x}) U^\dagger(\boldsymbol{\theta})\right).
    \end{aligned}
\end{equation*}
In addition, the quantum gates in NISQ chips are prone to errors \cite{preskill2018quantum}. The noise can be simulated by applying certain quantum channels to each quantum gate, specifically the depolarization channel. Given a quantum state \( \rho\in\mathbb{C}^{2^N\times2^N} \), the depolarization channel \(\mathcal{N}_p\) acts on a \(2^N\)-dimensional Hilbert space follows \( \mathcal{N}_p(\rho)=(1-p)\rho+p\mathbb{I}/2^N\), where \( p\) is the noise level and \( \mathbb{I}/2^N\) is the maximally mixed state. 

\subsection{Stability and Generalization Analysis of Randomized Algorithm}
Let \( \mathcal{D} \) be a probability distribution defined on a sample space \( \mathcal{Z}=\mathcal{X}\times\mathcal{Y} \), where \( \mathcal{X}\subseteq \mathbb{R}^d \) is an input space and \( \mathcal{Y}\subseteq\mathbb{R} \) is an output space. We quantify the loss of \( \boldsymbol{\theta} \) on a single example \( \boldsymbol{z}=(\boldsymbol{x},y )\) by \( \ell\left(\boldsymbol{\theta};\boldsymbol{z}\right) \). The objective is to learn a model \( \boldsymbol{\theta}\in\mathbb{R}^K \) minimizing the \emph{population risk} defined by
\begin{equation*}
    R_{\mathcal{D}}(\boldsymbol{\theta})=\mathbb{E}_{\boldsymbol{z}\sim\mathcal{D}}\left[\ell\left(\boldsymbol{\theta};\boldsymbol{z}\right)\right].
\end{equation*}
In practice, we do not know the distribution \( \mathcal{D} \) but instead have access to a dataset \( S=\{\boldsymbol{z}_i=(\boldsymbol{x}_i, y_i):i=1,\ldots,m\} \) independently drawn from \( \mathcal{D} \). Then, we approximate \( R_{\mathcal{D}} \) by \emph{empirical risk}
\begin{equation*}
    R_S(\boldsymbol{\theta})=\frac{1}{m}\sum_{i=1}^m\ell\left(\boldsymbol{\theta};\boldsymbol{z}_i\right).
\end{equation*}
For a randomized algorithm \( A \), denote by \( A(S) \) its output model based on the training dataset \( S \). Let \( R(\boldsymbol{\theta}_*)=\inf_{\boldsymbol{\theta}} R_{\mathcal{D}}(\boldsymbol{\theta}) \) and \( R(\boldsymbol{\theta}_*^S)=\inf_{\boldsymbol{\theta}} R_{S}(\boldsymbol{\theta}) \), then the \emph{excess risk} is \( \mathbb{E}_A\left[R_{\mathcal{D}}\left(A\left(S\right)\right)-R(\boldsymbol{\theta}_*)\right] \). Here, the expectation \( \mathbb{E}_A[\cdot] \) is taken only over the internal randomness of A. It can be decomposed as
\begin{equation*}
    \begin{aligned}
       \mathbb{E}_A\left[R_{\mathcal{D}}\left(A\left(S\right)\right)-R(\boldsymbol{\theta}_*)\right] 
       &\leq \mathbb{E}_A\left[R_{\mathcal{D}}\left(A\left(S\right)\right)-R_{S}\left(A\left(S\right)\right)\right] + \mathbb{E}_A\left[R_{S}\left(A\left(S\right)\right)-R_S(\boldsymbol{\theta}_*^S)\right],
    \end{aligned}
\end{equation*}
where we have used the fact that  \( R_S(\boldsymbol{\theta}_*^S)\leq R_S(\boldsymbol{\theta}_*) \)  by the definition of \( \boldsymbol{\theta}_*^S \). The first term is called the \emph{generalization (error) gap}, as it quantifies the generalization shift from training to testing behavior. The second term is called the \emph{optimization error}, as it measures how effectively the algorithm minimizes empirical risk. This paper focuses on bounding the generalization gap, for which a popular approach is based on the stability analysis of the algorithm. 

\textbf{Algorithmic Stability.} Algorithmic stability plays an important role in classical learning theory, which measures the sensitivity of an algorithm to the perturbation of training sets. Our analysis relies on two widely used stability measures, namely \emph{uniform stability} \cite{bousquet2002stability,hardt2016train} and \emph{on-average stability} \cite{lei2020fine}. Below, we recall these notions.

\begin{definition}[Uniform Stability] \label{def:Uniform Stability}
    We say a randomized algorithm \( A \) is \( \epsilon \)-uniformly stable if for any datasets \( S,S^\prime\in\mathcal{Z}^m \) that differ by at most one example, we have
    \begin{equation*}
        \sup_{\boldsymbol{z}} \left| \mathbb{E}_A \left[ \ell\left(A(S);\boldsymbol{z}\right) - \ell\left(A({S^\prime});\boldsymbol{z}\right) \right] \right| \leq \epsilon.
    \end{equation*}
\end{definition}
\begin{definition}[On-Average Stability]\label{def:On-Average Stability}
    Let \( S=\{\boldsymbol{z}_1,\ldots,\boldsymbol{z}_m\} \) and \( S^\prime=\{\boldsymbol{z}_1^\prime,\ldots,\boldsymbol{z}_m^\prime\} \) be drawn independently from \( \mathcal{D} \). For any \(i\in[m]\), define \(S^{(i)}=\{\boldsymbol{z}_1,\ldots,\boldsymbol{z}_{i-1},\boldsymbol{z}_i^\prime,\boldsymbol{z}_{i+1},\ldots,\boldsymbol{z}_m\}\) as the set formed from \( S \) by replacing the \( i \)-th element with \( \boldsymbol{z}_i^\prime\). We say a randomized algorithm \( A \) is on-average \( \epsilon \)-stable if
    \begin{equation*}
        \mathbb{E}_{S,S^\prime,A}\left[\frac{1}{m}\sum_{i=1}^m\|A(S)-A(S^{(i)})\|_2\right]\leq\epsilon.
    \end{equation*}
\end{definition}
The celebrated relationship between algorithmic stability and generalization was established in the following lemma.
\begin{lemma}[Stability and Generalization]\label{lem:Stability and Generalization}
    Let \( A \) be a randomized algorithm, \( \epsilon>0 \) and \( \delta\in(0,1) \). 
    \begin{enumerate}
        \item[(a)] If \( A \) is \( \epsilon \)-uniformly stable, then the expected generalization gap satisfies
        \begin{equation*}
            \left|\mathbb{E}_{S,A} \left[ R_{\mathcal{D}}\left(A(S)\right) - R_S\left(A(S)\right) \right]\right|\leq\epsilon.
        \end{equation*}
        \item[(b)] Assume \( \ell(\boldsymbol{\theta};\boldsymbol{z})\in[0,M] \) for all \( \boldsymbol{\theta}\in\mathbb{R}^K \) and \( \boldsymbol{z}\in\mathcal{Z} \). If \( A \) is \( \epsilon \)-uniformly stable, then with probability at least \( 1-\delta \), the generalization gap satisfies
        \begin{equation*}
            \begin{aligned}
                &\mathbb{E}_A \left[ R_{\mathcal{D}}\left(A(S)\right) - R_S\left(A(S)\right) \right] =\mathcal{O}\left(\epsilon\log m \log(1/\delta) + Mm^{-\frac{1}{2}}\sqrt{ \log(1/\delta)}\right).
            \end{aligned}
        \end{equation*}
        \item[(c)] Assume \( \left|\ell(\boldsymbol{\theta}_1;\boldsymbol{z})-\ell(\boldsymbol{\theta}_2;\boldsymbol{z})\right|\leq L\|\boldsymbol{\theta}_1-\boldsymbol{\theta}_2\|_2\) for all \( \boldsymbol{\theta}_1,\boldsymbol{\theta}_2\in\mathbb{R}^K \) and \( \boldsymbol{z}\in\mathcal{Z} \). If \( A \) is on-average \( \epsilon \)-stable, then the expected generalization gap satisfies 
        \begin{equation*}
            \begin{aligned}
                 &\left|\mathbb{E}_{S,A} \left[ R_{\mathcal{D}}\left(A(S)\right) - R_S\left(A(S)\right) \right]\right| \leq L\epsilon.
            \end{aligned}
        \end{equation*}
    \end{enumerate}
\end{lemma}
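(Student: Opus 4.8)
The plan is to treat the three parts by rather different means: parts (a) and (c) both reduce to a single symmetrization (``ghost sample'') identity, whereas part (b) requires a refined concentration argument and is the real obstacle.

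For part (a), I would run the classical ghost-sample argument \cite{bousquet2002stability,hardt2016train}. Introduce an independent copy $S'=\{\boldsymbol z_1',\ldots,\boldsymbol z_m'\}$ drawn i.i.d.\ from $\mathcal D$, and for each $i\in[m]$ let $S^{(i)}$ be obtained from $S$ by replacing $\boldsymbol z_i$ with $\boldsymbol z_i'$, exactly as in Definition~\ref{def:On-Average Stability}. Since the examples are i.i.d., exchanging the roles of $\boldsymbol z_i$ and $\boldsymbol z_i'$ gives $\mathbb E_{S,S',A}[\ell(A(S);\boldsymbol z_i')]=\mathbb E_{S,S',A}[\ell(A(S^{(i)});\boldsymbol z_i)]$; averaging over $i$ then yields $\mathbb E_{S,A}[R_{\mathcal D}(A(S))]=\frac1m\sum_{i=1}^m\mathbb E_{S,S',A}[\ell(A(S^{(i)});\boldsymbol z_i)]$. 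Subtracting $\mathbb E_{S,A}[R_S(A(S))]=\frac1m\sum_i\mathbb E_{S,A}[\ell(A(S);\boldsymbol z_i)]$ and using that $S$ and $S^{(i)}$ differ in a single example, each of the $m$ summands is at most $\epsilon$ in absolute value by Definition~\ref{def:Uniform Stability}, and the triangle inequality finishes the bound.

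For part (c), I would begin from the very same identity, $\mathbb E_{S,A}[R_{\mathcal D}(A(S))-R_S(A(S))]=\frac1m\sum_{i=1}^m\mathbb E_{S,S',A}[\ell(A(S^{(i)});\boldsymbol z_i)-\ell(A(S);\boldsymbol z_i)]$, but now estimate each summand with the $L$-Lipschitz hypothesis rather than uniform stability: $|\ell(A(S^{(i)});\boldsymbol z_i)-\ell(A(S);\boldsymbol z_i)|\le L\|A(S^{(i)})-A(S)\|_2$. Pushing the absolute value inside with the triangle inequality and Jensen's inequality, the right-hand side is bounded by $L\,\mathbb E_{S,S',A}[\frac1m\sum_i\|A(S)-A(S^{(i)})\|_2]\le L\epsilon$ by the on-average $\epsilon$-stability of Definition~\ref{def:On-Average Stability}.

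Part (b) is where the work lies. Writing $g(S):=\mathbb E_A[R_{\mathcal D}(A(S))-R_S(A(S))]$, part (a) controls $\mathbb E_S[g(S)]$, and changing one example of $S$ perturbs $g$ by at most $\mathcal O(\epsilon+M/m)$ --- an $\epsilon$ contribution from uniform stability (integrated over $\boldsymbol z\sim\mathcal D$ for the population term and over one swapped summand for the empirical term) plus an $M/m$ contribution from directly replacing a summand of $R_S$. Feeding this bounded-difference property into McDiarmid's inequality is not enough: it only yields a deviation of order $\epsilon\sqrt{m\log(1/\delta)}+M\sqrt{\log(1/\delta)/m}$, whose leading term is vacuous. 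To reach the stated rate I would instead invoke the moment inequality for weakly stable functions of i.i.d.\ data from \cite{feldman2018generalization}, in the sharpened form of \cite{bousquet2020sharper}: one bounds the $q$-th central moments of $g(S)$ by recursively decomposing $g$ into a component governed by the $\epsilon$-stability estimate --- whose contribution, after summing the recursion, is the $\epsilon\log m\log(1/\delta)$ term --- and a bounded, martingale-type component handled by a Bernstein-type estimate at scale $M$, yielding the $Mm^{-1/2}\sqrt{\log(1/\delta)}$ term, and then passes from the moment bound to a tail bound. I expect the delicate part to be the bookkeeping of the logarithmic factors through this recursion together with the clean separation of the internal randomness of $A$ from the sampling randomness of $S$; combining the resulting concentration with the mean bound from part (a) then gives the claimed high-probability guarantee.
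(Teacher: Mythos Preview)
The paper does not prove Lemma~\ref{lem:Stability and Generalization} at all: it is stated as a known result, with the remark immediately following it attributing part~(a) and~(b) to \cite{bousquet2002stability,bousquet2020sharper} and part~(c) to \cite{lei2020fine}. Your proposal is a correct reconstruction of precisely those cited arguments --- the ghost-sample identity for (a) and (c), and the sharpened moment/concentration machinery of \cite{bousquet2020sharper} (building on \cite{feldman2018generalization}) for (b), including your correct observation that a direct McDiarmid bound is too weak --- so there is nothing to compare against and no gap to flag.
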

\begin{remark}    
    Part (a) and Part (b) establish the connection between uniform stability and generalization \cite{bousquet2002stability,bousquet2020sharper}. Part (c) establishes the connection between on-average stability and generalization \cite{lei2020fine}. Both Part (a) and Part (c) provide generalization bounds in expectation. Under the assumption \( \left|\ell(\boldsymbol{\theta}_1;\boldsymbol{z})-\ell(\boldsymbol{\theta}_2;\boldsymbol{z})\right|\leq L\|\boldsymbol{\theta}_1-\boldsymbol{\theta}_2\|_2\) for all \( \boldsymbol{\theta}_1,\boldsymbol{\theta}_2\in\mathbb{R}^K \) and \( \boldsymbol{z}\in\mathcal{Z} \), we notice that \( \epsilon\)-uniformly stable implies at least \( (\epsilon/L) \)-on-average stable. Thus, we can recover the worst-case generalization bound as in Part (a). Part (b) provides an almost optimal high-probability generalization bound.
\end{remark}
Typically,  SGD is employed as the classical optimizer in QNNs. Given a training dataset \( S\in\mathcal{Z}^m \), the QNN minimizes the following objective function,  
\begin{equation*}
    \min_{\boldsymbol{\theta}}\frac{1}{m}\sum_{i=1}^m\ell\left(f_{\boldsymbol{\theta}}(\boldsymbol{x_i});y_i\right).
\end{equation*}
\begin{definition}[Stochastic Gradient Descent]
    Let \( \boldsymbol{\theta}_0\in\mathbb{R}^K \) be an initial point. SGD updates \( \{\boldsymbol{\theta}_t\} \) as follows
    \begin{equation*}
        \boldsymbol{\theta}_{t+1}=\boldsymbol{\theta}_t-\eta_t\nabla_{\boldsymbol{\theta}}\ell\left(f_{\boldsymbol{\theta}_t}(\boldsymbol{x}_{i_t});y_{i_t}\right),
    \end{equation*}
\end{definition}
where \( \eta_t \) is the step size, and \( \boldsymbol{z}_{i_t }=(\boldsymbol{x}_{i_t},y_{i_t}) \) is the sample chosen in iteration \( t \). There are two popular schemes for choosing the example indices \( i_t \). One is to choose uniformly from \( \{1,\ldots,m\} \) at each step. The other is to choose a random permutation of \( \{1,\ldots,m\} \) and then process the examples in order. Our results hold for both variants.
\subsection{Main Assumptions} 
We aim to analyze the stability and generalization of QNNs trained using the SGD algorithm. To achieve this, we first introduce the necessary assumptions. 
\begin{assumption}[\( \alpha_\ell\)-Lipschitzness]\label{ass:Lipschitzness}
    We assume that the loss function \( \ell \) is \( \alpha_\ell \)-Lipschitz, if for all \({\boldsymbol{\theta}_1},\boldsymbol{\theta}_2\in\mathbb{R}^K\), and \( \boldsymbol{z}\in\mathcal{Z} \), we have
    \begin{equation*}
        \left| \ell\left(f_{\boldsymbol{\theta}_1}(\boldsymbol{x});y\right) - \ell\left(f_{\boldsymbol{\theta}_2}(\boldsymbol{x});y\right) \right|
        \leq \alpha_\ell \left| f_{\boldsymbol{\theta}_1}(\boldsymbol{x}) - f_{\boldsymbol{\theta}_2}(\boldsymbol{x}) \right|.
    \end{equation*}
\end{assumption}
\begin{assumption}[\( \nu_\ell\)-smoothness]\label{ass:smoothness}
    We assume that the loss function \( \ell \) is \( \nu_\ell \)-smooth, if for all \({\boldsymbol{\theta}_1},\boldsymbol{\theta}_2\in\mathbb{R}^K\), and \( \boldsymbol{z}\in\mathcal{Z} \), we have
    \begin{equation*}
        \left| 
        \frac{\partial}{\partial f} \ell\left(f_{\boldsymbol{\theta}_1}(\boldsymbol{x});y\right) 
        - 
        \frac{\partial}{\partial f} \ell\left(f_{\boldsymbol{\theta}_2}(\boldsymbol{x});y\right) 
        \right| 
        \leq \nu_\ell \left| f_{\boldsymbol{\theta}_1}(\boldsymbol{x}) - f_{\boldsymbol{\theta}_2}(\boldsymbol{x}) \right|.
    \end{equation*}
\end{assumption}
\begin{remark}
    Unlike the classic setup of \cite{hardt2016train}, we define Lipschitzness and smoothness with respect to the output function \( f_{\boldsymbol{\theta}}(\cdot) \) rather than the parameters \( \boldsymbol{\theta} \). By relaxing their assumptions, we provide a fine-grained analysis of stability and generalization in the context of QNNs.
\end{remark}

\section{Main Results}
In this section, we present our main results on the stability and generalization bounds for QNNs trained using SGD algorithm. We note that our results are highly general and cover diverse ansatz of QNNs, as long as it is composed of the parameterized single-qubit gates and two-qubit gates. Due to space limitations, please refer to the Appendix \ref{ap: Appendix C} for detailed theoretical proofs.

\subsection{Uniform Stability-Based Generalization Bounds}
In this subsection, we analyze the uniform stability of QNNs and subsequently derive the high-probability generalization bounds.
\begin{theorem}[Uniform Stability Bound]\label{the:1}
    Suppose that Assumption \ref{ass:Lipschitzness} and \ref{ass:smoothness} hold. Let \( A(S) \) be the QNN model trained on the dataset \( S\in\mathcal{Z}^m \) using the SGD algorithm with step sizes \( \eta_t\) for \( T \) iterations, then \( A(S) \) is \( \epsilon \)-uniformly stable with 
    \begin{equation*} 
        \epsilon\leq\sum_{t=0}^{T-1}\left[\prod_{j=t+1}^{T-1}(1+\eta_j\kappa)\right]\frac{2\sqrt{2}\eta_t\alpha_\ell^2K\|O\|^2}{m},
    \end{equation*}
    where
    \begin{equation}\label{eq:k}
         \kappa:=\alpha_\ell K \|O\| + \sqrt{2} \nu_\ell K \|O\|^2.
    \end{equation}
\end{theorem}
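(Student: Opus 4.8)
The plan is to follow the classical ``expansion'' argument of Hardt--Recht--Singer for SGD stability, but carried out in terms of the output function $f_{\boldsymbol\theta}$ rather than the parameters, exploiting Assumptions~\ref{ass:Lipschitzness} and \ref{ass:smoothness}. Fix two datasets $S, S'$ differing in exactly one coordinate, run SGD on each with the same randomness (same index sequence $i_0,i_1,\dots$), and let $\delta_t := \|\boldsymbol\theta_t - \boldsymbol\theta_t'\|_2$ denote the parameter divergence at step $t$, with $\delta_0 = 0$. The key is a one-step recursion for $\delta_{t+1}$: at step $t$ either the sampled example agrees in $S$ and $S'$ (probability roughly $1-1/m$), in which case both iterates apply the \emph{same} update map $G_t(\boldsymbol\theta) = \boldsymbol\theta - \eta_t \nabla_{\boldsymbol\theta}\ell(f_{\boldsymbol\theta}(\boldsymbol x_{i_t}); y_{i_t})$, or the example differs (probability $1/m$), in which case the two update maps differ.

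First I would establish the two quantitative facts about the gradient map that drive the recursion. By the chain rule, $\nabla_{\boldsymbol\theta}\ell(f_{\boldsymbol\theta}(\boldsymbol x); y) = \frac{\partial \ell}{\partial f}(f_{\boldsymbol\theta}(\boldsymbol x); y)\,\nabla_{\boldsymbol\theta} f_{\boldsymbol\theta}(\boldsymbol x)$. For a QNN with $K$ trainable single-qubit rotation gates, each partial derivative $\partial f_{\boldsymbol\theta}(\boldsymbol x)/\partial\theta_k$ has magnitude at most $\|O\|$ (this is essentially the parameter-shift bound: differentiating $\mathrm{e}^{-\mathrm{i}\theta_k P/2}$ brings down a bounded Pauli factor), so $\|\nabla_{\boldsymbol\theta} f_{\boldsymbol\theta}(\boldsymbol x)\|_2 \le \sqrt{K}\,\|O\|$ and, combined with a bound on $|\partial\ell/\partial f|$, gives a Lipschitz-type bound on $\ell$ in $\boldsymbol\theta$ of order $\alpha_\ell \sqrt{K}\|O\|$; this yields the numerator term $2\sqrt{2}\,\eta_t \alpha_\ell^2 K \|O\|^2/m$ after accounting for the $1/m$ probability and the fact that on a ``bad'' step the two gradients can differ by twice this quantity. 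Second, I would show the update map $G_t$ is $(1+\eta_t\kappa)$-expansive, i.e. $\|G_t(\boldsymbol\theta_1)-G_t(\boldsymbol\theta_2)\|_2 \le (1+\eta_t\kappa)\|\boldsymbol\theta_1-\boldsymbol\theta_2\|_2$; this requires bounding the operator norm of the Hessian $\nabla^2_{\boldsymbol\theta}\ell$, which decomposes (again by the chain rule) into a term involving $\partial\ell/\partial f$ times the Hessian of $f_{\boldsymbol\theta}$ (second derivatives of products of rotation gates, bounded by $\|O\|$ entrywise, hence operator norm $\lesssim K\|O\|$) and a term involving $\partial^2\ell/\partial f^2$ (controlled by $\nu_\ell$ via Assumption~\ref{ass:smoothness}) times $\nabla f \otimes \nabla f$ (norm $\lesssim K\|O\|^2$). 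Collecting these gives exactly $\kappa = \alpha_\ell K\|O\| + \sqrt{2}\,\nu_\ell K\|O\|^2$ as in \eqref{eq:k}.

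With these two ingredients the recursion reads $\mathbb{E}[\delta_{t+1}] \le (1+\eta_t\kappa)\,\mathbb{E}[\delta_t] + \frac{1}{m}\cdot 2\sqrt{2}\,\eta_t\alpha_\ell\sqrt{K}\|O\|$ (the clean-step contribution uses expansiveness; the bad-step contribution adds the bounded-difference term). Unrolling this linear recursion from $\delta_0=0$ gives $\mathbb{E}[\delta_T] \le \sum_{t=0}^{T-1}\big[\prod_{j=t+1}^{T-1}(1+\eta_j\kappa)\big]\frac{2\sqrt 2\,\eta_t\alpha_\ell\sqrt K\|O\|}{m}$. Finally, applying the $\boldsymbol\theta$-Lipschitz bound on $\ell$ once more (cost another factor $\alpha_\ell\sqrt K\|O\|$) converts the parameter divergence into a loss divergence, $\sup_{\boldsymbol z}|\mathbb{E}_A[\ell(\boldsymbol\theta_T;\boldsymbol z)-\ell(\boldsymbol\theta_T';\boldsymbol z)]| \le \alpha_\ell\sqrt K\|O\|\cdot\mathbb{E}[\delta_T]$, which produces the claimed bound $\epsilon \le \sum_{t=0}^{T-1}\big[\prod_{j=t+1}^{T-1}(1+\eta_j\kappa)\big]\frac{2\sqrt 2\,\eta_t\alpha_\ell^2 K\|O\|^2}{m}$.

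The main obstacle I anticipate is the careful verification of the expansiveness constant $\kappa$: unlike the classical setting where smoothness is assumed directly in the parameters, here I must propagate the function-space assumptions through the explicit QNN structure, which means bounding $\|\nabla_{\boldsymbol\theta} f_{\boldsymbol\theta}(\boldsymbol x)\|_2$ and especially $\|\nabla^2_{\boldsymbol\theta} f_{\boldsymbol\theta}(\boldsymbol x)\|_{\mathrm{op}}$ uniformly over $\boldsymbol\theta$. The first-derivative bound is standard, but the Hessian bound requires some care: one differentiates $f_{\boldsymbol\theta}(\boldsymbol x) = \mathrm{Tr}(O\,U(\boldsymbol\theta)\rho U^\dagger(\boldsymbol\theta))$ twice with respect to two (possibly equal) angles, noting each differentiation inserts a bounded Pauli generator into the product $U = \prod_k U_k$, and then bounds the resulting sum of at most $K^2$ unitary-conjugated terms in operator norm by $K\|O\|$ (the $K$ rather than $K^2$ coming from the fact that the Hessian is applied to a direction, combined with a triangle-inequality/Cauchy--Schwarz estimate over the $K$ gates). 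Getting the constant $\sqrt 2$ exactly right and ensuring the clean-vs-bad step probabilities work for both the uniform-sampling and random-permutation variants of SGD are the remaining bookkeeping details.
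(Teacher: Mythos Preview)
Your proposal is correct and matches the paper's argument: couple SGD on neighboring datasets, split each step into the ``same example'' case (probability $1-1/m$) and the ``different example'' case (probability $1/m$), derive the recursion $\mathbb{E}[\delta_{t+1}] \le (1+\eta_t\kappa)\mathbb{E}[\delta_t] + \tfrac{2\sqrt{2}\,\eta_t\alpha_\ell\sqrt{K}\|O\|}{m}$, unroll it, and finally convert parameter distance to loss distance via $|\ell(f_{\boldsymbol\theta_1};y)-\ell(f_{\boldsymbol\theta_2};y)| \le \alpha_\ell\sqrt{K}\|O\|\,\|\boldsymbol\theta_1-\boldsymbol\theta_2\|_2$. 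The one difference worth noting is that the paper never bounds the Hessian you flag as the main obstacle: instead of controlling $\|\nabla_{\boldsymbol\theta}^2 f\|_{\mathrm{op}}$, it bounds $\|\nabla_{\boldsymbol\theta}\ell(f_{\boldsymbol\theta_1};y)-\nabla_{\boldsymbol\theta}\ell(f_{\boldsymbol\theta_2};y)\|_2$ directly by writing each coordinate $\partial_{\theta_j}f$ via the parameter-shift rule as a difference of two shifted evaluations of $f$ and then applying the first-order Lipschitz bound $|f_{\boldsymbol\theta_1}(\boldsymbol x)-f_{\boldsymbol\theta_2}(\boldsymbol x)| \le \sqrt{K}\|O\|\|\boldsymbol\theta_1-\boldsymbol\theta_2\|_2$ (proved through a unitary-channel diamond-norm argument), which sidesteps second derivatives entirely and is where the particular constant $\sqrt{2}$ in $\kappa$ arises.
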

\begin{remark}
    The stability bound expands with coefficient \( 1+\eta_t\kappa \), where \( \kappa \) depends on the spectral norm of the observable \( \|O\| \) and the number of trainable quantum gates \( K \). In practice, \( \|O\| \) is typically bounded such that \( 0\leq \|O\|\leq 1\). Moreover, the negative effects of complex QNN models with large \( K \) on stability can be mitigated by setting appropriate step sizes. 
\end{remark}
Notice that the step size is a vital parameter in Theorem \ref{the:1}. We further investigate the simplified stability results for two commonly used step sizes in the following corollary.
\begin{corollary}\label{cor:1}
    Suppose that Assumption \ref{ass:Lipschitzness} and \ref{ass:smoothness} hold, and \( \ell(\cdot,\cdot)\in[0,M] \). Let \( A(S) \) be the QNN model trained on the dataset \( S\in\mathcal{Z}^m \) using the SGD algorithm with step sizes \( \eta_t\) for \( T \) iterations. 
    \begin{enumerate}
        \item [(a)] If we choose the constant step sizes \( \eta_t=\eta \), then \( A(S) \) is \( \epsilon \)-uniformly stable with
        \begin{equation*}
            \epsilon\leq\frac{2\sqrt{2}\alpha_\ell^2K\|O\|^2}{\kappa m}(1+\eta\kappa)^T.
        \end{equation*}
        \item [(b)] If we choose the monotonically non-increasing step sizes \( \eta_t\leq c/(t+1) \), \( c>0 \), then \( A(S) \) is \( \epsilon \)-uniformly stable with
        \begin{equation*}
            \epsilon \leq \frac{1 + 1/c\kappa}{m}M^{\frac{c\kappa}{c\kappa + 1}} 
            \left( 2\sqrt{2}c\alpha_\ell^2 K \|O\|^2 \right)^{\frac{1}{c\kappa + 1}}
            T^{\frac{c\kappa}{c\kappa + 1}},
        \end{equation*}       
    \end{enumerate}
    where \( \kappa \) is defined by equation (\ref{eq:k}).
\end{corollary}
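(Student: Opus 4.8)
The plan is to read both bounds off the stability estimate of Theorem~\ref{the:1}, which, writing $c_0:=2\sqrt{2}\,\alpha_\ell^2 K\|O\|^2$ for brevity, asserts $\epsilon\le\frac{c_0}{m}\sum_{t=0}^{T-1}\eta_t\prod_{j=t+1}^{T-1}(1+\eta_j\kappa)$. For part~(a) I would substitute $\eta_t\equiv\eta$: the inner product collapses to $(1+\eta\kappa)^{T-1-t}$, and re-indexing by $s=T-1-t$ turns the sum into the finite geometric series $\sum_{s=0}^{T-1}(1+\eta\kappa)^{s}=\frac{(1+\eta\kappa)^{T}-1}{\eta\kappa}$. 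Multiplying by the prefactor $c_0\eta/m$ cancels the $\eta$ and gives $\epsilon\le\frac{c_0}{\kappa m}\big((1+\eta\kappa)^{T}-1\big)\le\frac{c_0}{\kappa m}(1+\eta\kappa)^{T}$, which is the claim; this is a one-line computation.

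Part~(b) is the substantive case. Substituting $\eta_t\le c/(t+1)$ straight into the Theorem~\ref{the:1} bound is too lossy: crudely bounding $\prod_{j=t+1}^{T-1}(1+c\kappa/(j+1))\le\exp\big(c\kappa\sum_{j=t+1}^{T-1}\frac{1}{j+1}\big)\le(T/t)^{c\kappa}$ and summing yields only $\epsilon=\mathcal{O}(T^{c\kappa}/m)$, whereas the corollary claims the far smaller exponent $c\kappa/(c\kappa+1)$. To recover the sharp rate I would use the deferred-start (``burn-in'') device from the stability analysis of SGD \cite{hardt2016train}: reopening the coupling argument behind Theorem~\ref{the:1}, fix an integer $t_0\in\{0,\dots,T\}$ and split according to whether SGD on the two neighboring datasets selects the differing example during the first $t_0$ steps. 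That event has probability at most $t_0/m$ and contributes at most $Mt_0/m$ to the stability; on its complement the two iterate sequences agree at step $t_0$, so the growth recursion need only be run for $t\ge t_0$. Telescoping the identity $\eta_t\kappa\prod_{j=t+1}^{T-1}(1+\eta_j\kappa)=\prod_{j=t}^{T-1}(1+\eta_j\kappa)-\prod_{j=t+1}^{T-1}(1+\eta_j\kappa)$ over $t=t_0,\dots,T-1$, together with the harmonic-sum bound $\prod_{j=t_0}^{T-1}(1+c\kappa/(j+1))\le(T/t_0)^{c\kappa}$, bounds this tail by $\frac{c_0}{\kappa m}(T/t_0)^{c\kappa}$, so that
\[
\epsilon\;\le\;\frac{Mt_0}{m}+\frac{c_0}{\kappa m}\Big(\frac{T}{t_0}\Big)^{c\kappa}.
\]

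It then remains to optimize this two-term bound over $t_0$. Treating the right-hand side as a function of a real variable $t_0>0$, the balance equation $M/m=c\,c_0\,T^{c\kappa}/(m\,t_0^{c\kappa+1})$ gives $t_0^{\star}=\big(c\,c_0\,T^{c\kappa}/M\big)^{1/(c\kappa+1)}$; at this value the second term equals $\frac{1}{c\kappa}$ times the first, so the bound becomes $(1+\frac{1}{c\kappa})\,\frac{Mt_0^{\star}}{m}$, and expanding $t_0^{\star}$ and regrouping the powers of $M$, $T$, $m$ and $c_0=2\sqrt{2}\,\alpha_\ell^2K\|O\|^2$ reproduces exactly $\frac{1+1/c\kappa}{m}M^{\frac{c\kappa}{c\kappa+1}}\big(2\sqrt{2}\,c\,\alpha_\ell^2K\|O\|^2\big)^{\frac{1}{c\kappa+1}}T^{\frac{c\kappa}{c\kappa+1}}$. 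The leftover bookkeeping is to round $t_0^{\star}$ into $\{0,\dots,T\}$ and to note that if $t_0^{\star}\ge T$ the plain recursion started at $0$ already meets the claimed bound, so it holds in every case. The genuine obstacle is the deferred-start step: it is not implied by the displayed form of Theorem~\ref{the:1} and requires re-examining its proof to exploit that the two SGD trajectories stay identical until the perturbed example is first drawn; one must also carry the constants $c_0$ and $\kappa$ through unchanged, since the final closed form depends on them through fractional powers.
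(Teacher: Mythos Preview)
Your proposal is correct and follows essentially the same route as the paper. Part~(a) is identical; for part~(b) both you and the paper invoke the burn-in device of \cite{hardt2016train} (stated in the paper as Lemma~\ref{lem:C.4}) to obtain the two-term bound $\frac{Mt_0}{m}+\frac{c_0}{\kappa m}(T/t_0)^{c\kappa}$ and then optimize over $t_0$. The only cosmetic difference is that you bound the tail sum via the telescoping identity $\eta_t\kappa\prod_{j>t}(1+\eta_j\kappa)=\prod_{j\ge t}(1+\eta_j\kappa)-\prod_{j>t}(1+\eta_j\kappa)$, whereas the paper bounds each summand by $(T/t)^{c\kappa}\cdot c/(tm)$ and then sums $\sum_{t>t_0}t^{-(c\kappa+1)}\le (c\kappa)^{-1}t_0^{-c\kappa}$; both arrive at the same expression.
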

\begin{remark}
    Part (a) and Part (b) consider constant and decaying step sizes, respectively. The constant step size setting is widely used in prior works on the stability and generalization analysis of classical graph convolutional networks (GCNs) \cite{verma2019stability, ng2024stability, yang2024deeper}. However, the resulting stability bound exhibits an exponential dependence on \( T \). The exponential dependence can be eliminated by using decaying step sizes, with two main approaches for setting the decay. One approach is the classic analysis from \cite{hardt2016train}, which considers the timing of encountering different examples in datasets. It employs a polynomially decaying step sizes \( \eta_t=\mathcal{O}(t^{-1})\). The other approach controls the smallest eigenvalues of the loss’s Hessian matrix, which allows larger step sizes \( \eta_t=\mathcal{O}(t^{-\beta}),\beta\in(0,1) \) \cite{richards2021learning,richards2021stability,lei2022stability}. However, their analysis is limited to shallow neural networks and requires an additional restrictive condition that the network must be sufficiently wide. To develop a highly general stability bound for QNNs, we adopt the same strategy as \cite{hardt2016train}, setting the step sizes \( \eta_t\leq c/(t+1),c>0\) in Part (b).
\end{remark}
By combining Lemma \ref{lem:Stability and Generalization}, Part (b), we can now easily derive the generalization bound. The high-probability generalization bounds for QNNs are subsequently established in the following theorem.
\begin{theorem}[Generalization Bound]\label{the:2}
    Suppose that Assumption \ref{ass:Lipschitzness} and \ref{ass:smoothness} hold, and \( \ell(\cdot,\cdot)\in[0,M] \). Let \( A(S) \) be the QNN model trained on the dataset \( S\in\mathcal{Z}^m \) using the SGD algorithm with step sizes \( \eta_t\) for \( T \) iterations. 
    \begin{enumerate}
        \item [(a)] if we choose the constant step sizes \( \eta_t=\eta \), then the following generalization bound of \(
        A(S) \) holds with probability at least \( 1-\delta \) for \( \delta\in(0,1) \),
        \begin{equation*}
            \begin{aligned}
                &\mathbb{E}_A\left[R_{\mathcal{D}}\left(A(S)\right)-R_S\left(A(S)\right)\right] \leq \mathcal{O}\left(\frac{(1+\eta \kappa)^T}{m}\log m \log(\frac{1}{\delta})+M\sqrt{\frac{\log(\frac{1}{\delta})}{m}}\right),
            \end{aligned}
        \end{equation*}
        \item [(b)] if we choose the monotonically non-increasing step sizes \( \eta_t\leq c/(t+1) \), \( c>0 \), then the following generalization bound of \(
        A(S) \) holds with probability at least \( 1-\delta \) for \( \delta\in(0,1) \),
        \begin{equation*}
            \begin{aligned}
                &\mathbb{E}_A\left[R_{\mathcal{D}}\left(A(S)\right)-R_S\left(A(S)\right)\right] \leq \mathcal{O}\left(\frac{T^{\frac{c\kappa}{c\kappa+1}}}{m}\log m \log(\frac{1}{\delta})+M\sqrt{\frac{\log(\frac{1}{\delta})}{m}}\right),
            \end{aligned}
        \end{equation*} 
    \end{enumerate}
    where \( \kappa \) is defined by equation (\ref{eq:k}).
\end{theorem}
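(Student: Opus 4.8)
The plan is to obtain Theorem~\ref{the:2} as an immediate corollary of the uniform stability estimates in Corollary~\ref{cor:1} combined with the high-probability stability-to-generalization transfer of Lemma~\ref{lem:Stability and Generalization}, Part~(b). Both ingredients are already available: Corollary~\ref{cor:1} supplies an explicit value of the uniform stability parameter $\epsilon$ for each of the two step-size schedules, while Lemma~\ref{lem:Stability and Generalization}(b) turns any $\epsilon$-uniform stability guarantee, under the hypothesis $\ell(\cdot,\cdot)\in[0,M]$ (assumed in the theorem), into the high-probability bound $\mathcal{O}(\epsilon\log m\,\log(1/\delta) + Mm^{-1/2}\sqrt{\log(1/\delta)})$ valid with probability at least $1-\delta$.

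For Part~(a) I would first invoke Corollary~\ref{cor:1}(a), which gives $\epsilon \le \frac{2\sqrt{2}\,\alpha_\ell^2 K\|O\|^2}{\kappa m}(1+\eta\kappa)^T$. Plugging this into Lemma~\ref{lem:Stability and Generalization}(b) and absorbing the problem-dependent quantities $\alpha_\ell, K, \|O\|, \kappa$ (fixed once the ansatz, observable, and loss are chosen) into the $\mathcal{O}(\cdot)$, the first term becomes $\mathcal{O}\big(\frac{(1+\eta\kappa)^T}{m}\log m\,\log(1/\delta)\big)$ and the second term is unchanged, giving the stated bound. For Part~(b) the same two steps apply with Corollary~\ref{cor:1}(b) in place of (a): here $\epsilon \le \frac{1+1/(c\kappa)}{m} M^{\frac{c\kappa}{c\kappa+1}} (2\sqrt{2}c\,\alpha_\ell^2 K\|O\|^2)^{\frac{1}{c\kappa+1}} T^{\frac{c\kappa}{c\kappa+1}}$, and after substitution into Lemma~\ref{lem:Stability and Generalization}(b), hiding the constants $\alpha_\ell, \nu_\ell, K, \|O\|, c, \kappa$ together with the bounded factor $M^{c\kappa/(c\kappa+1)}=\mathcal{O}(1)$ inside the stability term, one arrives at $\mathcal{O}\big(\frac{T^{c\kappa/(c\kappa+1)}}{m}\log m\,\log(1/\delta) + M\sqrt{\log(1/\delta)/m}\big)$.

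Since the argument is essentially a substitution, there is no substantive analytic obstacle; the care needed is purely in the bookkeeping. One point is to keep the $1/m$ factor carried by $\epsilon$ in both parts of Corollary~\ref{cor:1} explicit in the final bound rather than re-absorbing it into $\mathcal{O}(\cdot)$, so that the generalization gap genuinely decays as $m\to\infty$ for fixed $T$. A second point is to be precise about which quantities are treated as constants: the bounds are most informative when the dependence on $T$ (and, through $\kappa$, implicitly on $K$ and $\|O\|$) is exhibited, so I would state up front that $\alpha_\ell, \nu_\ell, K, \|O\|, \kappa, c$ are held fixed and only the explicit dependence on $T, m, \delta, M$ is tracked. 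With those conventions, Parts~(a) and~(b) follow directly.
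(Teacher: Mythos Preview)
Your proposal is correct and follows essentially the same approach as the paper: the paper explicitly states, just before Theorem~\ref{the:2}, that ``By combining Lemma~\ref{lem:Stability and Generalization}, Part~(b), we can now easily derive the generalization bound,'' i.e., it obtains Theorem~\ref{the:2} by substituting the uniform stability parameters from Corollary~\ref{cor:1}(a) and~(b) into Lemma~\ref{lem:Stability and Generalization}(b), exactly as you describe.
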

\begin{remark}
    Theorem \ref{the:2} provides practical insights into both the design and training processes for developing powerful QNNs. Specifically, our bounds shed light on the key factors influencing the generalization performance of QNNs, which can be divided into two categories. The first is associated with the design of QNNs, including the number of trainable quantum gates \( K \), and the spectral norm of the observable \( \|O\| \). The second is associated with the training of QNNs, including the step sizes \( \eta_t \), the number of iterations \( T \), and the number of training examples \( m \).
    \begin{itemize}
        \item  \textit{Design.} In practice, \( \|O\| \) is bounded such that \( 0\leq \|O\|\leq 1\), and it is normally chosen as Pauli Strings. The dependence on \( K \) reflects an Occam’s razor principle in the quantum version \cite{blumer1987occam}. It is evident that, a larger \( K \) leads to an increase in the upper bound of the generalization gap. This provides guidance for designing well-performing QNNs with a proper number of trainable quantum gates. Moreover, increasing the number of trainable quantum gates \( K \) directly enhances the expressivity of QNNs \cite{du2021learnability,du2022efficient}. This leads to a trade-off between expressivity and generalization in designing powerful QNNs, analogous to the bias-variance trade-off in classical machine learning. 
        \item \textit{Training.} First, increasing the number of training examples \( m \) directly enhances the generalization performance of QNNs. Second, it is crucial to carefully choose the step sizes. Clearly, the decaying step size setting is a better choice than the constant step size setting. In the constant step size setting, our bound is primarily controlled by the term \( \mathcal{O}\left((1+\eta\kappa)^T/m\right) \). This suggests setting the step size as \( \eta=\mathcal{O}(1/K) \) to ensure \( \eta\kappa=\mathcal{O}(1)\), thereby preventing the generalization bound from becoming vacuous due to the inherent complexity of QNNs. Lastly, we underscore the importance of reducing training time. Decreasing the number of iterations \( T \) is an effective way to reduce the generalization gap. Therefore, in practice, \emph{early stopping} is an important training technique for QNNs, where stop training early after reach a low training error.
    \end{itemize}
\end{remark}
\begin{remark}
    We compare Theorem \ref{the:2} with related works \cite{du2022efficient,caro2022generalization}. The state-of-the-art generalization bound for QNNs in the same setting is of the order \( \mathcal{O}\left(\sqrt{K/m}\right) \). However, the \emph{sublinear} dependence on \( K \) makes their results trivially loose for the \emph{over-parameterized} QNNs, where \( K\gg m \). In contrast, our bounds help explain why over-parameterized QNNs exhibit excellent generalization. Specifically, our bounds highlight that the negative effects of large \( K \) on generalization can be mitigated by setting appropriate step sizes, thereby providing meaningful generalization guarantees as well. As mentioned earlier, in the constant step size setting, our bound suggest setting the step sizes as \( \eta=\mathcal{O}(1/K) \) to balance the negative effect of large \( K \). In the decaying step size setting, our bound is primarily controlled by the term \( \mathcal{O}\left(T^{\frac{c\kappa}{c\kappa+1}}/m\right) \). It is worth noting that the negative effect of \( K \) is significantly reduced. No matter how large \( K \) becomes, the generalization gap converges to zero as \( m\rightarrow{\infty} \), provided that \( T \) is not too large. We show \( T \) can grow as \( m^a\) for a small \( a>1 \). 
\end{remark}
Our previous analysis can be easily extended to the noise scenario. Considering the standard depolarizing noise model, we similarly establish the high-probability generalization bounds for QNNs in the following corollary. We remark that while our results are presented assuming the depolarization noise, they can be easily extended to other noise models.
\begin{corollary}[Generalization Bound Under Depolarizing Noise]\label{cor:2}
    Suppose that Assumption \ref{ass:Lipschitzness} and \ref{ass:smoothness} hold, and \( \ell(\cdot,\cdot)\in[0,M] \). Let \( A(S) \) be the QNN model trained on the dataset \( S\in\mathcal{Z}^m \) using the SGD algorithm with step sizes \( \eta_t\) under depolarizing noise level \( p\in[0,1] \) for \( T \) iterations. 
    \begin{enumerate}
        \item [(a)] if we choose the constant step sizes \( \eta_t=\eta \), then the following generalization bound of \(
        A(S) \) holds with probability at least \( 1-\delta \) for \( \delta\in(0,1) \),
        \begin{equation*}
            \begin{aligned}
                &\mathbb{E}_A\left[R_{\mathcal{D}}\left(A(S)\right)-R_S\left(A(S)\right)\right] \leq \mathcal{O}\left(\frac{\left(1+\left(1-p\right)^{K_g}\eta\kappa\right)^T}{m}\log m \log(\frac{1}{\delta})+M\sqrt{\frac{\log(\frac{1}{\delta})}{m}}\right),
            \end{aligned}
        \end{equation*}
        \item [(b)] if we choose the monotonically non-increasing step sizes \( \eta_t\leq c/(t+1) \), \( c>0 \), then the following generalization bound of \(
        A(S) \) holds with probability at least \( 1-\delta \) for \( \delta\in(0,1) \),
        \begin{equation*}
            \begin{aligned}
                &\mathbb{E}_A\left[R_{\mathcal{D}}\left(A(S)\right)-R_S\left(A(S)\right)\right] \leq \mathcal{O}\left(\frac{T^{\frac{c\kappa}{c\kappa+1/\left(1-p\right)^{K_g}}}}{m}\log m \log(\frac{1}{\delta})+M\sqrt{\frac{\log(\frac{1}{\delta})}{m}}\right),
            \end{aligned}
        \end{equation*} 
    \end{enumerate}
    where \( \kappa \) is defined by equation (\ref{eq:k}).
\end{corollary}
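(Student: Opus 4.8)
The plan is to reduce everything to the noiseless analysis (Theorem~\ref{the:1}, Corollary~\ref{cor:1}, Lemma~\ref{lem:Stability and Generalization}) by showing that the depolarizing channel acts only through a uniform rescaling of the Lipschitz and smoothness constants of the per-sample objective by a factor $(1-p)^{K_g}$. First I would insert $\mathcal{N}_p$ after each of the $K_g$ gates and propagate it through the circuit: since $U\,(\mathbb{I}/2^N)\,U^\dagger=\mathbb{I}/2^N$ for every unitary $U$, the maximally mixed component, once introduced by some $\mathcal{N}_p$, is untouched by all later gates, so telescoping the weights via $\sum_{j=1}^{K_g}(1-p)^{j-1}p=1-(1-p)^{K_g}$ yields the noisy state $(1-p)^{K_g}U(\boldsymbol\theta)\rho(\boldsymbol x)U^\dagger(\boldsymbol\theta)+(1-(1-p)^{K_g})\mathbb{I}/2^N$ and hence
\begin{equation*}
 f^{\mathrm{noisy}}_{\boldsymbol\theta}(\boldsymbol x)=(1-p)^{K_g}\,f_{\boldsymbol\theta}(\boldsymbol x)+\big(1-(1-p)^{K_g}\big)\,\mathrm{Tr}(O)/2^N .
\end{equation*}
The key point is that the additive term is independent of $\boldsymbol\theta$, so $\nabla_{\boldsymbol\theta}f^{\mathrm{noisy}}_{\boldsymbol\theta}=(1-p)^{K_g}\nabla_{\boldsymbol\theta}f_{\boldsymbol\theta}$ and $\nabla^2_{\boldsymbol\theta}f^{\mathrm{noisy}}_{\boldsymbol\theta}=(1-p)^{K_g}\nabla^2_{\boldsymbol\theta}f_{\boldsymbol\theta}$; equivalently, running SGD on the noisy loss is running SGD on $\ell\big((1-p)^{K_g}f_{\boldsymbol\theta}(\boldsymbol x)+\mathrm{const};y\big)$.

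Next I would track how this rescaling propagates to the constants appearing in Theorem~\ref{the:1}. Writing $g(\boldsymbol\theta)=\ell(f^{\mathrm{noisy}}_{\boldsymbol\theta}(\boldsymbol x);y)$, the chain rule gives $\nabla g=\tfrac{\partial\ell}{\partial f}(1-p)^{K_g}\nabla f_{\boldsymbol\theta}$ and $\nabla^2 g=\tfrac{\partial^2\ell}{\partial f^2}(1-p)^{2K_g}\nabla f_{\boldsymbol\theta}\nabla f_{\boldsymbol\theta}^\top+\tfrac{\partial\ell}{\partial f}(1-p)^{K_g}\nabla^2 f_{\boldsymbol\theta}$. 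Combining Assumptions~\ref{ass:Lipschitzness}--\ref{ass:smoothness} (which bound $\ell$ in its output argument and so apply verbatim to $f^{\mathrm{noisy}}_{\boldsymbol\theta}$) with the gate-level estimates on $\|\nabla f_{\boldsymbol\theta}\|$ and $\|\nabla^2 f_{\boldsymbol\theta}\|$ used to prove Theorem~\ref{the:1}, the noisy per-sample objective is $(1-p)^{K_g}$ times as Lipschitz in $\boldsymbol\theta$, and---using $(1-p)^{2K_g}\le(1-p)^{K_g}$ for $p\in[0,1]$---at most $(1-p)^{K_g}\kappa$-smooth in $\boldsymbol\theta$, with $\kappa$ as in \eqref{eq:k}. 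Consequently, in the SGD stability recursion the expansion factor $1+\eta_t\kappa$ becomes $1+(1-p)^{K_g}\eta_t\kappa$, and the per-step divergence term $\tfrac{2\sqrt2\,\eta_t\alpha_\ell^2 K\|O\|^2}{m}$ picks up an extra factor $(1-p)^{2K_g}$.

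I would then feed these rescaled quantities through the proofs of Corollary~\ref{cor:1}(a) and (b) essentially unchanged. For constant step sizes, summing the geometric series as before gives $\epsilon=\mathcal{O}\big((1+(1-p)^{K_g}\eta\kappa)^T/m\big)$, where the prefactor stays bounded because $(1-p)^{2K_g}$ divided by the rescaled smoothness is at most $(1-p)^{K_g}/(\alpha_\ell K\|O\|)$. For $\eta_t\le c/(t+1)$, I would rerun the same burn-in argument---bounding $\prod_{s}(1+\eta_s(1-p)^{K_g}\kappa)$ by $(T/(t+1))^{c(1-p)^{K_g}\kappa}$ and optimizing over the step $t_0$ at which the differing example first appears---which produces the exponent $\tfrac{c(1-p)^{K_g}\kappa}{c(1-p)^{K_g}\kappa+1}=\tfrac{c\kappa}{c\kappa+1/(1-p)^{K_g}}$. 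Finally, since $f^{\mathrm{noisy}}_{\boldsymbol\theta}$ is still of the form $\mathrm{Tr}(O\cdot\rho)$, the loss remains in $[0,M]$, so Lemma~\ref{lem:Stability and Generalization}(b) turns the resulting uniform stability bounds into the claimed high-probability generalization bounds.

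The hard part will be Step~1 together with the bookkeeping of Step~3: one must verify carefully that $\mathcal{N}_p$ sits after \emph{every} gate (including the fixed ones), so that the telescoping weights give exactly $(1-p)^{K_g}$ and the noise contributes precisely a $\boldsymbol\theta$-independent constant; and one must check that replacing $(1-p)^{2K_g}$ by $(1-p)^{K_g}$ and lower-bounding the rescaled smoothness in the denominators of the geometric sum and of the $t_0$-optimization is harmless, so that both bounds collapse to the clean stated forms without introducing extra factors of $T$. The extension to other noise models should then follow by the same argument with $(1-p)^{K_g}$ replaced by the appropriate contraction factor.
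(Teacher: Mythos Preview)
Your proposal is correct and takes essentially the same approach as the paper: the paper's proof likewise shows (via its Lemma~B.4, quoted from \cite{du2021learnability}) that applying $\mathcal{N}_p$ after each of the $K_g$ gates is equivalent to a single depolarizing channel with $\tilde p=1-(1-p)^{K_g}$ on the output state, yielding $f^{\mathrm{noisy}}_{\boldsymbol\theta}=(1-p)^{K_g}f_{\boldsymbol\theta}+\text{const}$, and then reruns the noiseless stability recursion (Lemmas~C.5--C.8 are the noisy analogues of C.1--C.4) with the Lipschitz and smoothness constants rescaled by $(1-p)^{K_g}$---including the same use of $(1-p)^{2K_g}\le(1-p)^{K_g}$ to collapse the second-order term---before invoking Lemma~\ref{lem:Stability and Generalization}(b). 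Your telescoping derivation of the affine form of $f^{\mathrm{noisy}}_{\boldsymbol\theta}$ is exactly the content of that cited lemma, so there is no substantive difference.
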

\begin{remark}
    Corollary \ref{cor:2} reveals the potential benefits of the quantum noise. Our bounds shows that an increase in the noise level \( p \) enhances the generalization performance of QNNs. Moreover, previous work has indicated that larger noise leads to poorer expressivity of QNNs \cite{du2021learnability,du2022efficient}. Therefore, our bounds provide new insight that the noise naturally occurring in quantum devices can be effectively tuned as a form of '\emph{quantum regularization}', with the ability balancing expressivity and generalization of QNNs. It has been demonstrated that adding noise to the initial data or the weights of QNNs can have an effect analogous to the technique of \emph{regularization} employed in classical machine learning \cite{nguyen2020quantum,heyraud2022noisy}. Unlike these existing techniques, \cite{somogyi2024method} proposed an approach to control the noise level in quantum hardware as hyperparameters during the training process. They numerically investigate this method for a regression task by using several modeled noise channels and demonstrate an improvement in model performance. Our results theoretically explain the potential reasons for its effectiveness, acting akin to regularization in classical neural networks.
\end{remark}
\subsection{Towards Optimization-Dependent Generalization Bounds}
In the previous subsection, we focused on uniform stability and derived the key results of this paper. However, uniform stability does not depend on the data, but captures only intrinsic characteristics of the learning algorithm and global properties of the objective function. Consequently, previous analysis characterizes worst-case generalization guarantees. As a complement to our previous results, we further investigate a refined optimization-dependent generalization bound, leveraging the less restrictive on-average stability \cite{kuzborskij2018data,lei2020fine}. 

To capture the impact of the variance of the stochastic gradients, we adopt the following standard assumption from stochastic optimization theory \cite{nemirovski2009robust,bottou2010large}. 
\begin{assumption}[Bounded Empirical Variance]\label{ass:Variance}
    For any dataset \( S\in\mathcal{Z}^m \) and \( \boldsymbol{\theta}\in\mathbb{R}^K \), their exist \( \sigma^2>0 \) such that 
    \begin{equation}\label{eq:variance}
        \frac{1}{m}\sum_{i=1}^m\|\nabla_{\boldsymbol{\theta}}\ell\left(f_{\boldsymbol{\theta}}(\boldsymbol{x}_i);\boldsymbol{y}_i\right)-\nabla_{\boldsymbol{\theta}}R_S(\boldsymbol{\theta})\|_2^2\leq\sigma^2.
    \end{equation}
\end{assumption}
\begin{remark}
    Assumption \ref{ass:Variance} essentially bounds the variance of the stochastic gradients for the particular dataset \( S \). Notably, it is always satisfied if \( \|\nabla_{\boldsymbol{\theta}}\ell(f_{\boldsymbol{\theta}}(\boldsymbol{x});\boldsymbol{y})\|_2\leq G\) for any \( \boldsymbol{\theta}\in\mathbb{R}^K\) and \( z\in\mathcal{Z} \), with \( \sigma^2=G^2 \). 
\end{remark}
The following theorem establishes the generalization bound in expectation for QNNs and first links generalization gap to optimization error.
\begin{theorem}[Optimization-Dependent Generalization Bound]\label{the:3}
    Suppose that Assumption \ref{ass:Lipschitzness}, \ref{ass:smoothness}, and \ref{ass:Variance} hold. Let \( A(S) \) be the QNN model trained on the dataset \( S\in\mathcal{Z}^m \) using the SGD algorithm with step sizes \( \eta_t\) for \( T \) iterations, then the expected generalization gap satisfies
    \begin{equation*}
        \begin{aligned}
            & \left|\mathbb{E}_{S,A} \left[ R_{\mathcal{D}}\left(A(S)\right) - R_S\left(A(S)\right) \right]\right| \leq \sum_{t=0}^{T-1}\left[\prod_{j=t+1}^{T-1}(1+\eta_t\kappa)\right]\frac{2\eta_t\alpha_\ell\sqrt{K}\|O\|\left(\mathbb{E}_S[\|\nabla_{\boldsymbol{\theta}} R_S(\boldsymbol{\theta}_t)\|_2]+\sigma\right)}{m},
        \end{aligned}
    \end{equation*}
    where \( \kappa \) is defined by equation (\ref{eq:k}).
\end{theorem}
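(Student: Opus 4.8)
The plan is to apply Lemma~\ref{lem:Stability and Generalization}(c): it suffices to show that SGD on the QNN objective is on-average $\epsilon$-stable with $\epsilon$ equal to the stated right-hand side divided by $\alpha_\ell\sqrt{K}\|O\|$, and that $\ell(f_{\boldsymbol{\theta}}(\cdot);\cdot)$ is $L$-Lipschitz in $\boldsymbol{\theta}$ with $L=\alpha_\ell\sqrt{K}\|O\|$; the product $L\epsilon$ is then exactly the claimed bound. The Lipschitz constant follows from Assumption~\ref{ass:Lipschitzness} together with the QNN estimate $\|\nabla_{\boldsymbol{\theta}}f_{\boldsymbol{\theta}}(\boldsymbol{x})\|_2\leq\sqrt{K}\|O\|$ (each coordinate $\partial_{\theta_k}f_{\boldsymbol{\theta}}$ is a commutator expectation of magnitude at most $\|O\|$), which already underlies the proof of Theorem~\ref{the:1}.

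To bound the on-average stability I would couple the two runs: fix $i\in[m]$ and run SGD with identical sample indices on $S$ and on $S^{(i)}$, producing iterates $\boldsymbol{\theta}_t,\boldsymbol{\theta}_t^{(i)}$, and track $\delta_t:=\|\boldsymbol{\theta}_t-\boldsymbol{\theta}_t^{(i)}\|_2$ with $\delta_0=0$. The deterministic engine is the one behind Theorem~\ref{the:1}: under Assumptions~\ref{ass:Lipschitzness} and~\ref{ass:smoothness}, together with the first- and second-order bounds on $f_{\boldsymbol{\theta}}$, the map $\boldsymbol{\theta}\mapsto\nabla_{\boldsymbol{\theta}}\ell(f_{\boldsymbol{\theta}}(\boldsymbol{x});y)$ is $\kappa$-Lipschitz with $\kappa$ as in~\eqref{eq:k}, so each SGD step is $(1+\eta_t\kappa)$-expansive. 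Hence when $i_t\neq i$ both runs update with the same example and $\delta_{t+1}\leq(1+\eta_t\kappa)\delta_t$; when $i_t=i$, writing the update difference as an expansive step at $\boldsymbol{z}_i'$ plus the residual $-\eta_t\bigl(\nabla_{\boldsymbol{\theta}}\ell(f_{\boldsymbol{\theta}_t}(\boldsymbol{x}_i);y_i)-\nabla_{\boldsymbol{\theta}}\ell(f_{\boldsymbol{\theta}_t}(\boldsymbol{x}_i');y_i')\bigr)$ gives $\delta_{t+1}\leq(1+\eta_t\kappa)\delta_t+\eta_t\bigl(\|\nabla_{\boldsymbol{\theta}}\ell(f_{\boldsymbol{\theta}_t}(\boldsymbol{x}_i);y_i)\|_2+\|\nabla_{\boldsymbol{\theta}}\ell(f_{\boldsymbol{\theta}_t}(\boldsymbol{x}_i');y_i')\|_2\bigr)$.

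Next I average over $i$, take $\mathbb{E}_{S,S^\prime,A}$, and use that $\boldsymbol{\theta}_t$ is independent of $i_t$ so that the event $\{i_t=i\}$ contributes with probability $1/m$ (exactly so for both index-selection schemes, up to a reindexing within an epoch). This produces the linear recursion $\mathbb{E}[\frac1m\sum_i\delta_{t+1}]\leq(1+\eta_t\kappa)\,\mathbb{E}[\frac1m\sum_i\delta_t]+\frac{\eta_t}{m}\,\mathbb{E}[\frac1m\sum_i(\|\nabla_{\boldsymbol{\theta}}\ell(f_{\boldsymbol{\theta}_t}(\boldsymbol{x}_i);y_i)\|_2+\|\nabla_{\boldsymbol{\theta}}\ell(f_{\boldsymbol{\theta}_t}(\boldsymbol{x}_i');y_i')\|_2)]$. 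For the in-sample gradient term, Cauchy--Schwarz and Assumption~\ref{ass:Variance} give $\frac1m\sum_i\|\nabla_{\boldsymbol{\theta}}\ell(f_{\boldsymbol{\theta}_t}(\boldsymbol{x}_i);y_i)\|_2\leq\|\nabla_{\boldsymbol{\theta}}R_S(\boldsymbol{\theta}_t)\|_2+\sigma$; for the fresh-sample term I use independence of $S^\prime$ from $\boldsymbol{\theta}_t$, the fact that Assumption~\ref{ass:Variance} holds for every dataset (hence also bounds the population variance of the stochastic gradient), and the iid symmetry between $S$ and $S^\prime$ to obtain the same bound $\mathbb{E}_S[\|\nabla_{\boldsymbol{\theta}}R_S(\boldsymbol{\theta}_t)\|_2]+\sigma$ in expectation. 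Thus the per-step increment is at most $\frac{2\eta_t(\mathbb{E}_S[\|\nabla_{\boldsymbol{\theta}}R_S(\boldsymbol{\theta}_t)\|_2]+\sigma)}{m}$; unrolling from $\delta_0=0$ gives $\epsilon=\sum_{t=0}^{T-1}\bigl[\prod_{j=t+1}^{T-1}(1+\eta_j\kappa)\bigr]\frac{2\eta_t(\mathbb{E}_S[\|\nabla_{\boldsymbol{\theta}}R_S(\boldsymbol{\theta}_t)\|_2]+\sigma)}{m}$, and multiplying by $L=\alpha_\ell\sqrt{K}\|O\|$ via Lemma~\ref{lem:Stability and Generalization}(c) finishes the argument.

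The main obstacle is the $\kappa$-expansiveness estimate: showing that $\boldsymbol{\theta}\mapsto\nabla_{\boldsymbol{\theta}}\ell(f_{\boldsymbol{\theta}}(\boldsymbol{x});y)$ is Lipschitz with precisely the constant in~\eqref{eq:k} requires explicit control of $\|\nabla_{\boldsymbol{\theta}}f_{\boldsymbol{\theta}}(\boldsymbol{x})\|_2$ and of the Hessian $\|\nabla^2_{\boldsymbol{\theta}}f_{\boldsymbol{\theta}}(\boldsymbol{x})\|$ for circuits built from parameterized Pauli rotations and fixed two-qubit gates, chained through the $\alpha_\ell$-Lipschitzness and $\nu_\ell$-smoothness of $\ell$ in $f$ — this is exactly the computation behind Theorem~\ref{the:1}, which I would reuse rather than redo. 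A secondary technical point is handling the fresh-sample gradient norm with only the \emph{empirical}-variance Assumption~\ref{ass:Variance}, which I resolve by applying that assumption to $S^{(i)}$ (equivalently, passing to the population variance) and invoking the iid symmetry between $S$ and $S^\prime$.
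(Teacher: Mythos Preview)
Your proposal is correct and follows essentially the same route as the paper: invoke Lemma~\ref{lem:Stability and Generalization}(c) with $L=\alpha_\ell\sqrt{K}\|O\|$ (from Lemma~\ref{lem:C.1}), bound on-average stability via coupled SGD runs, use the $\kappa$-expansiveness of the gradient map (Lemma~\ref{lem:C.2}, the engine behind Theorem~\ref{the:1}) for the same-sample steps, and control the $i_t=i$ increment through Assumption~\ref{ass:Variance}. The only tactical difference is in the $i_t=i$ case: the paper does not split off an expansive step but applies the bare triangle inequality, keeping the second gradient at $\boldsymbol{\theta}_t^{(i)}$, and then collapses both gradient norms into $2\,\mathbb{E}\|\nabla_{\boldsymbol{\theta}}\ell(f_{\boldsymbol{\theta}_t}(\boldsymbol{x}_i);y_i)\|_2$ via the exchange symmetry $(S,z_i)\overset{d}{=}(S^{(i)},z_i')$ in expectation---this is cleaner than your population-variance detour and directly delivers the $\mathbb{E}_S[\|\nabla_{\boldsymbol{\theta}}R_S(\boldsymbol{\theta}_t)\|_2]$ term, sidestepping the issue you flag.
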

\begin{remark}
     From Theorem \ref{the:3}, we notice that we can bound \( \sigma \) and the gradient norms \( \|\nabla_{\boldsymbol{\theta}} R_S(\boldsymbol{\theta}_t)\|_2\) by \( G \), where \( G=\sqrt{2}\alpha_\ell\sqrt{K}\|O\|\). Recall from Lemma \ref{lem:Stability and Generalization}, Part (a), uniform stability directly implies generalization in expectation. Thus, we can recover (up to a factor 2), the worst-case generalization bound from Theorem \ref{the:1}. This illustrates well the worst-case notion mentioned earlier, but also the fact that the generalization bound in expectation of Theorem \ref{the:3} can be better than the one of Theorem \ref{the:1}. This will notably be the case in “low noise” regimes, when \( \sigma\ll G\) and the expected gradient norm \( \mathbb{E}_S\|\nabla_{\boldsymbol{\theta}} R_S(\boldsymbol{\theta}_t)\|_2\) reach small values. In particular, the expected gradient norm \( \mathbb{E}_S\|\nabla_{\boldsymbol{\theta}} R_S(\boldsymbol{\theta}_t)\|_2\) is related to the optimization error, which decreases as the parameter \( \boldsymbol{\theta}_t \) minimizes the empirical risk \( R_S(\boldsymbol{\theta}) \).
\end{remark}
\begin{remark}
     Theorem \ref{the:3} helps explain the observations in classification experiments with randomized labels \cite{gil2024understanding}. They conduct randomization experiments by training QNNs on a set of quantum states with varying levels of label corruption. Without changing the QNN ansatz, the number of training examples, or the optimization algorithm, they observe a steady increase in the generalization gap as the random label probability increases. Our bound properly captures how the generalization gap changes with the fraction of random labels via the on-average variance of SGD. Specifically, as the random label probability increases, the on-average variance \( \sigma \) keeps increasing and the generalization gap also increases.
\end{remark}
Note that Theorem \ref{the:3} can be similarly extended to various step size settings and noise scenario. Additionally, we show that the expected gradient norm \( \mathbb{E}_S\left[\|\nabla_{\boldsymbol{\theta}} R_S(\boldsymbol{\theta}_t)\|_2\right] \) are influenced by the choice of the initialization point. 
\begin{lemma}[Link with Initialization Point]\label{lem:Link with Initialization Point}
     Suppose that Assumption \ref{ass:Lipschitzness}, \ref{ass:smoothness}, and \ref{ass:Variance} hold. Let \( A(S) \) be the QNN model trained on the dataset \( S\in\mathcal{Z}^m \) using the SGD algorithm with step sizes \( \eta_t\leq 1/\kappa
     \) for \( T \) iterations, then the following bound holds
     \begin{equation*}
        \begin{aligned}
            &\sum_{t=0}^{T-1}\eta_t\mathbb{E}_S\left[\|\nabla_{\boldsymbol{\theta}} R_S(\boldsymbol{\theta}_t)\|_2\right] \leq 2\sqrt{\left(\sum_{t=0}^{T-1}\eta_t\right)\left(R_S(\boldsymbol{\theta}_0)-R_S(\boldsymbol{\theta}^*)+\frac{\kappa\sigma^2}{2}\sum_{t=0}^{T-1}\eta_t^2\right)},
        \end{aligned}
    \end{equation*}
    where \( \boldsymbol{\theta}^* \) is the empirical risk minimizer of \( R_S(\boldsymbol{\theta})\), \( \kappa \) is defined by equation (\ref{eq:k}).
\end{lemma}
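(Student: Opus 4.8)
The plan is to read this as the textbook non-convex convergence analysis of SGD (cf.\ \cite{bottou2010large,nemirovski2009robust}) followed by a Jensen/Cauchy--Schwarz step, the only QNN-specific input being that the empirical risk $R_S$ is $\kappa$-smooth \emph{in the parameters} $\boldsymbol{\theta}$, with $\kappa$ as in (\ref{eq:k}). So the first thing I would do is record this parameter-smoothness: differentiating $\nabla_{\boldsymbol{\theta}}\ell(f_{\boldsymbol{\theta}}(\boldsymbol{x});y)=\tfrac{\partial\ell}{\partial f}\,\nabla_{\boldsymbol{\theta}} f_{\boldsymbol{\theta}}(\boldsymbol{x})$ once more and applying the chain rule, Assumptions \ref{ass:Lipschitzness}--\ref{ass:smoothness} (Lipschitzness and smoothness of $\ell$ in $f$) combine with uniform bounds of the shape $\|\nabla_{\boldsymbol{\theta}} f_{\boldsymbol{\theta}}(\boldsymbol{x})\|_2\lesssim\sqrt{K}\,\|O\|$ and $\|\nabla^2_{\boldsymbol{\theta}} f_{\boldsymbol{\theta}}(\boldsymbol{x})\|\lesssim K\,\|O\|$ --- themselves consequences of the product form $U(\boldsymbol{\theta})=\prod_k U_k(\boldsymbol{\theta})$ and the bounded generators of the parameterized single-qubit gates (parameter-shift identities) --- to give $\|\nabla R_S(\boldsymbol{\theta}_1)-\nabla R_S(\boldsymbol{\theta}_2)\|_2\le\kappa\|\boldsymbol{\theta}_1-\boldsymbol{\theta}_2\|_2$. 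This is the very estimate already established en route to Theorem \ref{the:1}, so it can simply be invoked; I expect assembling it and tracking the precise constant $\kappa$ to be the main obstacle, though it is essentially bookkeeping.

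Granting $\kappa$-smoothness, I would next write the smoothness (descent) inequality along the SGD trajectory. With $g_t:=\nabla_{\boldsymbol{\theta}}\ell(f_{\boldsymbol{\theta}_t}(\boldsymbol{x}_{i_t});y_{i_t})$ and $\boldsymbol{\theta}_{t+1}=\boldsymbol{\theta}_t-\eta_t g_t$,
\begin{equation*}
R_S(\boldsymbol{\theta}_{t+1})\le R_S(\boldsymbol{\theta}_t)-\eta_t\langle\nabla R_S(\boldsymbol{\theta}_t),g_t\rangle+\tfrac{\kappa\eta_t^2}{2}\|g_t\|_2^2 .
\end{equation*}
Taking the conditional expectation over the index $i_t$ given $\boldsymbol{\theta}_t$, I would use the unbiasedness $\mathbb{E}[g_t\mid\boldsymbol{\theta}_t]=\nabla R_S(\boldsymbol{\theta}_t)$ and Assumption \ref{ass:Variance}, which yields $\mathbb{E}[\|g_t\|_2^2\mid\boldsymbol{\theta}_t]\le\|\nabla R_S(\boldsymbol{\theta}_t)\|_2^2+\sigma^2$ (this is exactly (\ref{eq:variance})), and then invoke $\eta_t\le 1/\kappa$ so that $1-\kappa\eta_t/2\ge\tfrac{1}{2}$; this gives $\mathbb{E}[R_S(\boldsymbol{\theta}_{t+1})\mid\boldsymbol{\theta}_t]\le R_S(\boldsymbol{\theta}_t)-\tfrac{\eta_t}{2}\|\nabla R_S(\boldsymbol{\theta}_t)\|_2^2+\tfrac{\kappa\eta_t^2\sigma^2}{2}$. (For the random-permutation sampling scheme $g_t$ is biased within an epoch, but the standard shuffled-SGD bookkeeping --- using that a full epoch sums to $m\nabla R_S$ --- delivers the same inequality, matching the claim that our results hold for both schemes.)

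Then I would rearrange, take the total expectation over the sampling, and telescope over $t=0,\dots,T-1$: the $R_S$ terms collapse, and $R_S(\boldsymbol{\theta}_T)\ge R_S(\boldsymbol{\theta}^*)$ together with $\boldsymbol{\theta}_0$ being deterministic gives $\sum_{t=0}^{T-1}\eta_t\,\mathbb{E}\|\nabla R_S(\boldsymbol{\theta}_t)\|_2^2\le 2\bigl(R_S(\boldsymbol{\theta}_0)-R_S(\boldsymbol{\theta}^*)+\tfrac{\kappa\sigma^2}{2}\sum_{t=0}^{T-1}\eta_t^2\bigr)$. Finally, to convert this second-moment estimate into the first-moment quantity in the statement, I would apply Jensen's inequality $\mathbb{E}\|\nabla R_S(\boldsymbol{\theta}_t)\|_2\le(\mathbb{E}\|\nabla R_S(\boldsymbol{\theta}_t)\|_2^2)^{1/2}$ followed by the weighted Cauchy--Schwarz inequality $\sum_t\eta_t a_t^{1/2}\le(\sum_t\eta_t)^{1/2}(\sum_t\eta_t a_t)^{1/2}$ with $a_t=\mathbb{E}\|\nabla R_S(\boldsymbol{\theta}_t)\|_2^2$; this produces the target bound with leading constant $\sqrt{2}$, which I then relax to $2$. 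Everything beyond the parameter-smoothness estimate is routine, and the same scheme transparently accommodates the decaying step sizes and the depolarizing-noise model, as noted after Theorem \ref{the:3}.
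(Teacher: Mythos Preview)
Your proposal is correct and follows essentially the same route as the paper: establish $\kappa$-smoothness of the per-sample loss in $\boldsymbol{\theta}$ (the paper packages this as a separate Descent Lemma, proved via Lemma \ref{lem:C.2}), apply the smoothness inequality along the SGD step, take expectation over $i_t$ using unbiasedness and Assumption \ref{ass:Variance}, telescope, and finish with Jensen plus weighted Cauchy--Schwarz. The only cosmetic difference is ordering: the paper carries the weight $(\eta_t-\eta_t^2\kappa/2)$ through the Jensen step and lands on the constant $2$ directly, whereas you relax $1-\kappa\eta_t/2\ge 1/2$ first and obtain the sharper constant $\sqrt{2}$ before relaxing to $2$.
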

\begin{remark}
    Lemma \ref{lem:Link with Initialization Point} validates the intuition that if we are good at the initialization point \( \boldsymbol{\theta}_0\), the QNN model is more stable and thus generalizes better. It suggests choosing a initialization point with low empirical risk in practice.
\end{remark}

\section{Experimental Evaluation}\label{sec:5}
In this section, we conduct numerical simulations to validate our theoretical results. Note that we do not aim to optimize the test accuracy, but rather a simple, interpretable experimental setup. 

\textbf{Experiments Setup.} We consider two real-world datasets: MNIST \cite{lecun2010mnist} and Fashion MNIST \cite{xiao2017fashion},  which have also been explored in the \href{https://openreview.net/pdf?id=lirR6Wfkd6}{anonymous work} as well as in other QML studies \cite{huang2021power,chen2021end,hur2022quantum}. For these datasets, we conduct binary classification tasks on the digit 0/1 and the T-shirt/Trouser. The implementation of QNN is as follows. The classical information \( \boldsymbol{x} \) is encoded into a quantum state \( \rho(\boldsymbol{x}) \) via angle encoding. We adopt the most widely used hardware-efficient ansatz such that the construction of \( U(\boldsymbol{\theta}) \) follows a layerwise structure using single-qubit Pauli rotation gates and two-qubit CNOT gates. We empirically estimate the generalization gap by calculating the absolute difference between the training and test errors. Each experiment setting is repeated 50 times to obtain statistical results.

\textbf{Number of Trainable Quantum Gates.} We investigate the impact of the number of trainable quantum gates \( K \) on generalization gap by varying the QNN layer numbers \( L=\{4,6,8,10,12\}\). The step size is set to \( \eta=0.01 \). In Figure \ref{fig:1}, we observe that as \( L \) increases, i.e., \( K \) increases, the generalization gap also increases. This observation is consistent with our generalization bounds in Theorem \ref{the:2} regarding the impact of \( K \) on the generalization gap.

\textbf{Step Size.} To avoid introducing additional hyperparameters, we mainly focus on constant step sizes to investigate the impact of the step size on generalization gap. We try various step sizes \( \eta=\{0.001,0.005,0.01,0.05,0.1\} \). The QNN layer number is set to \( L=8 \). It is clear from Figure \ref{fig:2} that the generalization gap increases with larger step sizes. This observation aligns well with our generalization bounds in Theorem \ref{the:2}. Additionally, since the number of trainable quantum gates \( K \) is fixed throughout the experiment, this further validating our theoretical explanation presented after Theorem \ref{the:2}, that is, the negative effects arising from the inherent complexity of the QNN models on generalization can be mitigated by setting appropriate step sizes.

\textbf{Quantum Noise.} We investigate the impact of the quantum noise on generalization gap by varying the noise level \( p=\{0.001,0.005,0.01,0.05,0.1\}\). The QNN layer number is set to \( L=4 \) and the step size is set to \( \eta=0.01 \). It is depicted in Figure \ref{fig:3} that as the noise level \( p \) increases, the generalization gap decreases. This result echoes with Corollary \ref{cor:2} and reinforces the insights that quantum noise can be effectively tuned as a form of regularization.
\begin{figure}[H] 
    \centering
    \centerline{\includegraphics[width=7.1in]{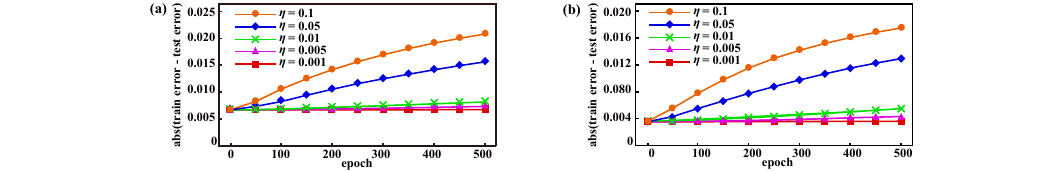}}
    \caption{Generalization gap for varying numbers of trainable quantum gates: (a) MNIST, (b) Fashion MNIST.}
    \label{fig:1}
\end{figure}

\begin{figure}[H]
    \centering
    \centerline{\includegraphics[width=7.1in]{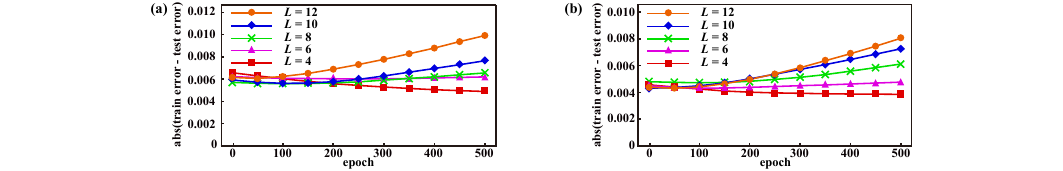}}
    \caption{Generalization gap for varying step sizes: (a) MNIST, (b) Fashion MNIST.}
    \label{fig:2}
\end{figure}

\begin{figure}[H] 
\begin{center}
\centerline{\includegraphics[width=7.1in]{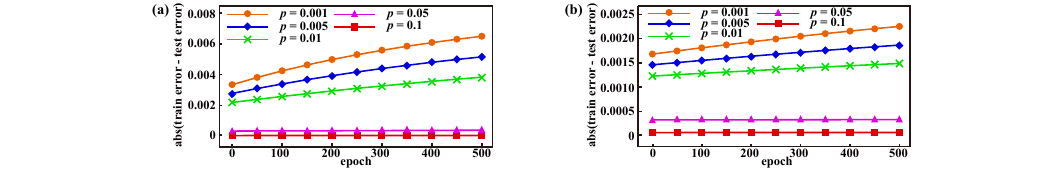}}
\caption{Generalization gap with varying noise levels: (a) MNIST, (b) Fashion MNIST.}
\label{fig:3}
\end{center}
\end{figure}

\textbf{Random Label.} We conduct experiments to validate our explanation for the observations in \cite{gil2024understanding} that a classification dataset with randomized labels can substantially degrade the generalization performance of QNNs. For all the data labels in each dataset, we replace their underlying true labels with random labels with probability \( r \). The QNN layer number is set to \( L=8 \), the step size is set to \( \eta=0.01 \), and the number of iterations is set to \( T = 500 \). In Figure \ref{fig:4}, we present the results under the random label probability \( r=\{0.1,0.2,0.3,0.4,0.5\} \). It can be seen from these results that the on-average variance \( \sigma \) consistently increases as the random label probability \( r \) increases. At the same time, the generalization gap also increases. This validate our theoretical explanation for the observations in \cite{gil2024understanding} presented after Theorem \ref{the:3}, that is, the on-average variance \( \sigma \) can capture how the generalization gap changes with the fraction of random labels.

\begin{figure}[H] 
\begin{center}
\centerline{\includegraphics[width=7.1in]{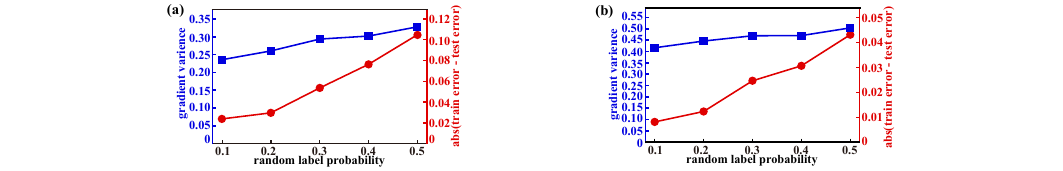}}
\caption{On-average variance and generalization gap for varying random label probabilities: (a) MNIST, (b) Fashion MNIST.}
\label{fig:4}
\end{center}
\end{figure}

\section{Conclusion}
In this paper, we study the generalization of QNNs through algorithmic stability. We first establish high-probability generalization bounds for QNNs via uniform stability. Our bounds provide practical insights into both the design and training processes for developing powerful QNNs. We next extend our analysis to the noise scenario, highlighting the potential benefits of quantum noise. We finally argue that previous results are coming from worst-case analysis and propose a refined optimization-dependent generalization bound. While our generalization bounds hold for arbitrary data distributions, an interesting direction is to explore the generalization of QNNs with a specific data distribution.

\section{Acknowledgements}
This work was partially supported by the National Natural Science Foundation of China (Grant No.~62102388), the Innovation Program for Quantum Science and Technology (Grant No.~2021ZD0302901), and the USTC Kunpeng \& Ascend Center of Excellence.

\bibliographystyle{unsrt}
\bibliography{ref}

\newpage
\appendix

\section{Notions}
The main notations of this paper are summarized in Table \ref{tab:Notions}.
\begin{table}[H]
\caption{Summary of main notations involved in this paper.}
\label{tab:Notions}
\begin{center}
\begin{tabular}{l|l}
\toprule
Notion & Description \\
\midrule
\( \mathcal{Z} \) & the sample space associated with input space \( \mathcal{X} \) and output space \( \mathcal{Y} \) \\
\( \mathcal{D} \) & the probability distribution defined on the sample space \( \mathcal{Z}\) \\
\( \boldsymbol{z}=(\boldsymbol{x},y) \) & the random example sampling from \( \mathcal{Z} \) \\
\( \boldsymbol{\theta} \) & the parameter of QNN\\
\( K_g \) & the number of quantum gates in QNN \\
\( K \) & the number of trainable quantum gates in QNN\\
\( O \) & the observable operator \\
\( S \) & the training dataset defined as \( S=\{\boldsymbol{z}_i=(\boldsymbol{x}_i, y_i):i=1,\ldots,m\} \) independently drawn from \( \mathcal{D} \) \\
\( m \) & the number of training examples \\
\( f_{\boldsymbol{\theta}}(\cdot) \) & the output function of QNN \\
\( \ell(\cdot) \) & the loss function of QNN \\
\( \nabla_{\boldsymbol{\theta}}\ell\) & the gradient of \( \ell(\cdot) \) to the argument \( \boldsymbol{\theta} \) \\
\( R_{\mathcal{D}},R_{S} \) & the population risk and empirical risk based on training dataset \( S \), respectively \\
\( T \) & the number of iterations for SGD\\
\( \boldsymbol{\theta}_t \) & the parameter  of QNN learned using the SGD algorithm for \( t \) iterations \\
\( \eta_t \) & the step size in iteration \( t \) \\
\( A,A(S) \) & the learning algorithm and its output model based on training dataset \( S \), respectively \\
\( \alpha_\ell,\nu_\ell \) & the parameters of Lipschitz continuity and smoothness, respectively \\
\bottomrule
\end{tabular}
\end{center}
\end{table}

\section{Auxiliary Lemmas}
In this section, we provide some auxiliary lemmas from quantum information theory that are essential for our main proofs.

To translate between the spectral norm of unitaries and the diamond norm of the corresponding channels, we employ the following lemma from \cite{caro2022generalization}.
\begin{lemma}[Spectral norm and diamond norm of unitary channels; see \cite{caro2022generalization}, Lemma 5]\label{lem:B.1}
    Let \( \mathcal{U}(\rho)=U\rho U^\dagger \) and \( \mathcal{V}(\rho)=V\rho V^\dagger \) be unitary channels. Then, \( \frac{1}{2}\|\mathcal{U}\left(|\psi\rangle\langle\psi|\right)-\mathcal{V}\left(|\psi\rangle\langle\psi|\right)\|_1\leq \|\left(U-V\right)|\psi\rangle\|_2 \) for any pure state \( |\psi\rangle \). Therefore,
    \begin{equation*}
        \frac{1}{2}\|\mathcal{U}-\mathcal{V}\|_\diamond\leq \|U-V\|.
    \end{equation*}
\end{lemma}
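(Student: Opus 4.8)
The plan is to prove the pointwise bound for a single pure state first, and then lift it to the diamond norm by the standard ancilla-plus-extreme-point argument.

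For the first inequality, I would set $|\phi\rangle := U|\psi\rangle$ and $|\chi\rangle := V|\psi\rangle$, which are unit vectors since $U,V$ are unitary and $|\psi\rangle$ is normalized. The left-hand side is then the trace distance between two pure states, and the central tool is the identity $\frac{1}{2}\||\phi\rangle\langle\phi| - |\chi\rangle\langle\chi|\|_1 = \sqrt{1 - |\langle\phi|\chi\rangle|^2}$. To justify it inline, I note that $\Delta := |\phi\rangle\langle\phi| - |\chi\rangle\langle\chi|$ is a traceless Hermitian operator whose range lies in $\mathrm{span}\{|\phi\rangle,|\chi\rangle\}$, so it has at most two nonzero eigenvalues which sum to zero; evaluating the characteristic polynomial of $\Delta$ restricted to that two-dimensional subspace gives the eigenvalues $\pm\sqrt{1-|\langle\phi|\chi\rangle|^2}$, whence $\|\Delta\|_1 = 2\sqrt{1-|\langle\phi|\chi\rangle|^2}$.

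Next I would relate this to the right-hand side. Expanding the Euclidean norm gives $\|(U-V)|\psi\rangle\|_2^2 = \||\phi\rangle - |\chi\rangle\|_2^2 = 2 - 2\,\mathrm{Re}\langle\phi|\chi\rangle$. Writing $c := \langle\phi|\chi\rangle$, the desired inequality $\sqrt{1-|c|^2} \le \sqrt{2 - 2\,\mathrm{Re}(c)}$ is, after squaring, equivalent to $1 - |c|^2 \le 2 - 2\,\mathrm{Re}(c)$, which rearranges to $(1-\mathrm{Re}(c))^2 + (\mathrm{Im}(c))^2 \ge 0$ and is therefore trivially true for every complex $c$. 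This settles the first inequality, and no assumption on the phase of $c$ is needed, which is convenient since the phase is fixed by $U^\dagger V$ and not at our disposal.

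To obtain the diamond-norm bound, I would recall $\|\mathcal{U}-\mathcal{V}\|_\diamond = \sup_\rho \|(\mathcal{U}\otimes\mathrm{id})(\rho) - (\mathcal{V}\otimes\mathrm{id})(\rho)\|_1$, the supremum ranging over density matrices $\rho$ on the system tensored with a reference space. Since $\|\cdot\|_1$ is convex and the density matrices form a convex set whose extreme points are the pure states, the supremum is attained at some pure $|\Psi\rangle\langle\Psi|$. The extended map $\mathcal{U}\otimes\mathrm{id}$ is precisely the unitary channel of $\tilde U := U\otimes I$, and similarly $\tilde V := V\otimes I$, so applying the already-proved pointwise bound to $\tilde U,\tilde V$ and $|\Psi\rangle$ yields $\frac{1}{2}\|(\mathcal{U}\otimes\mathrm{id})(|\Psi\rangle\langle\Psi|) - (\mathcal{V}\otimes\mathrm{id})(|\Psi\rangle\langle\Psi|)\|_1 \le \|(\tilde U - \tilde V)|\Psi\rangle\|_2 = \|((U-V)\otimes I)|\Psi\rangle\|_2$. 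I would then bound the last quantity by the operator norm, $\|((U-V)\otimes I)|\Psi\rangle\|_2 \le \|(U-V)\otimes I\| = \|U-V\|$, using $\||\Psi\rangle\|_2 = 1$ together with the tensor identity $\|M\otimes I\| = \|M\|$ (tensoring with the identity leaves the singular values unchanged). Taking the supremum over $|\Psi\rangle$ gives $\frac{1}{2}\|\mathcal{U}-\mathcal{V}\|_\diamond \le \|U-V\|$, independently of the reference dimension.

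The only genuine computation is the eigenvalue evaluation behind the pure-state trace-distance identity; the remainder is the elementary inequality $(1-\mathrm{Re}(c))^2 + (\mathrm{Im}(c))^2 \ge 0$ and the extreme-point reduction. I expect the main point requiring care to be the justification that the diamond-norm supremum reduces to pure input states, which rests on convexity of $\|\cdot\|_1$ rather than on any feature specific to unitary channels, and the observation that our bound $\|U-V\|$ does not depend on the reference, so it controls the supremum over all admissible reference systems at once.
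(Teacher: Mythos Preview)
Your proof is correct. The paper itself does not supply a proof of this lemma; it is quoted as an auxiliary result from \cite{caro2022generalization}, so there is no in-paper argument to compare against. Your approach---reducing to the pure-state trace-distance identity $\tfrac{1}{2}\||\phi\rangle\langle\phi|-|\chi\rangle\langle\chi|\|_1=\sqrt{1-|\langle\phi|\chi\rangle|^2}$, bounding it by $\sqrt{2-2\,\mathrm{Re}\langle\phi|\chi\rangle}=\|(U-V)|\psi\rangle\|_2$ via the trivial inequality $|c-1|^2\ge 0$, and then lifting to the diamond norm by the convexity/extreme-point reduction combined with $\|(U-V)\otimes I\|=\|U-V\|$---is exactly the standard route and matches the argument in the cited source.
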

Moreover, we recall the following lemma from \cite{nielsen2001quantum}.
\begin{lemma}[see \cite{nielsen2001quantum}, section 4.5.3]\label{lem:B.2}
    Define the distance of two unitary matrices \( U_1,U_2 \) as the sepctral norm of the matrix \( U_1-U_2\), i.e., \( E(U_1,U_2)=\|U_1-U_2\| \). Then, 
    \begin{equation*}
        E(U_KU_{K-1}\ldots U_1,V_KV_{K-1}\ldots V_1)\leq \sum_{j=1}^KE(U_j,V_j) , 
    \end{equation*}
    where \( U_1,U_2,\ldots,U_K,V_1,V_2,\ldots,V_K \) are unitary matrices.
\end{lemma}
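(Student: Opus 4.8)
The plan is to prove Lemma \ref{lem:B.2} by induction on the number of factors $K$, using a telescoping decomposition together with the unitary invariance of the spectral norm. The key structural fact I would rely on is that for the spectral norm $\|\cdot\|$, multiplication by a unitary matrix on either side preserves the norm: if $W$ is unitary then $\|WA\| = \|A\| = \|AW\|$ for any matrix $A$. This holds because $\|WA\| = \sup_{\|\psi\|_2=1}\|WA\psi\|_2 = \sup_{\|\psi\|_2=1}\|A\psi\|_2 = \|A\|$, since $W$ preserves the Euclidean norm of every vector, and the right-multiplication case follows since $W$ maps the unit sphere onto itself.

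The base case $K=1$ is immediate, as $E(U_1,V_1) = \|U_1 - V_1\|$ by definition. For the inductive step, assume the bound holds for products of $K-1$ factors. Writing $\tilde U := U_{K-1}\cdots U_1$ and $\tilde V := V_{K-1}\cdots V_1$, the idea is to insert a cross term $U_K \tilde V$ and split the difference:
\begin{equation*}
    U_K\tilde U - V_K\tilde V = U_K(\tilde U - \tilde V) + (U_K - V_K)\tilde V.
\end{equation*}
Applying the triangle inequality for the spectral norm and then the unitary-invariance fact — noting that $U_K$ is unitary and that $\tilde V$ is a product of unitaries hence itself unitary — I obtain
\begin{equation*}
    \|U_K\tilde U - V_K\tilde V\| \leq \|U_K(\tilde U - \tilde V)\| + \|(U_K - V_K)\tilde V\| = \|\tilde U - \tilde V\| + \|U_K - V_K\|.
\end{equation*}
The inductive hypothesis bounds $\|\tilde U - \tilde V\| = E(\tilde U,\tilde V)$ by $\sum_{j=1}^{K-1} E(U_j,V_j)$, and adding the term $\|U_K - V_K\| = E(U_K,V_K)$ completes the induction and yields the claimed bound $\sum_{j=1}^{K} E(U_j,V_j)$.

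I do not anticipate a genuine obstacle here; the only point requiring care is the clean invocation of unitary invariance, which is exactly what makes the two cross terms collapse to $\|U_K - V_K\|$ and $\|\tilde U - \tilde V\|$ rather than producing spurious operator-norm factors. It is essential that the cross term inserted is $U_K\tilde V$ (so that one factor of the difference is always flanked by a unitary), and that the inductive telescoping peels off one factor at a time from the outermost position. An alternative, non-inductive route would expand the full telescoping sum $U_K\cdots U_1 - V_K\cdots V_1 = \sum_{j=1}^{K} U_K\cdots U_{j+1}(U_j - V_j)V_{j-1}\cdots V_1$ in one step and bound each summand directly via unitary invariance; I would mention this as it makes the subadditive structure transparent, but the inductive argument is the cleanest to write out.
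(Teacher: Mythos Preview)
Your proof is correct. Note, however, that the paper does not actually prove this lemma: it simply recalls it from Nielsen and Chuang's textbook (Section 4.5.3) without giving an argument. Your inductive telescoping argument with unitary invariance of the spectral norm is exactly the standard proof found there, so there is nothing to compare beyond observing that you have supplied the omitted details faithfully.
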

Next, we state and prove the following key lemma that provides a bound on the distance between two \( U(\boldsymbol{\theta}) \) with different parameter settings.
\begin{lemma}[Bound in Parameters Change]\label{lem:B.3}
    Suppose a parameterized unitary \( U(\boldsymbol{\theta}) = \prod_{k=1}^K V_k e^{-i \boldsymbol{\theta}_k P_k / 2} V_{K+1} \), we have the upper bound on two different parameter sets \( \boldsymbol{\theta}_1,\boldsymbol{\theta}_2\in\mathbb{R}^K\)
    \begin{equation*}
    \begin{aligned}
         \| U(\boldsymbol{\theta}_1) - U(\boldsymbol{\theta}_2) \| 
         &\leq \frac{\sqrt{K}}{2} \| \boldsymbol{\theta}_1 - \boldsymbol{\theta}_2 \|_2,
    \end{aligned}
    \end{equation*}
    where \(V_k\) are fixed quantum gates, \( P_k \in\{X, Y, Z\} \) denotes a single-qubit Pauli gate. For ease of readability, the tensor factors of identities accompanying the parametrized quantum gates \( e^{-i \boldsymbol{\theta}_k P_k / 2} \) are omitted. 
\end{lemma}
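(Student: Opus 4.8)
The plan is to reduce the bound on $\|U(\boldsymbol{\theta}_1) - U(\boldsymbol{\theta}_2)\|$ to a sum of single-gate discrepancies via the telescoping inequality of Lemma~\ref{lem:B.2}, and then bound each single-gate term by a one-parameter calculation. Write $W_k(\boldsymbol{\theta}) = V_k e^{-i\theta_k P_k/2}$ for $k=1,\dots,K$ and absorb the trailing fixed gate $V_{K+1}$ into the product (it contributes zero to every term because it does not depend on $\boldsymbol{\theta}$). Then $U(\boldsymbol{\theta}) = W_1(\boldsymbol{\theta})\cdots W_K(\boldsymbol{\theta}) V_{K+1}$, and Lemma~\ref{lem:B.2} gives
\begin{equation*}
    \| U(\boldsymbol{\theta}_1) - U(\boldsymbol{\theta}_2) \| \leq \sum_{k=1}^K \| W_k(\boldsymbol{\theta}_1) - W_k(\boldsymbol{\theta}_2) \| = \sum_{k=1}^K \| V_k \left( e^{-i\theta_{1,k} P_k/2} - e^{-i\theta_{2,k} P_k/2} \right) \|.
\end{equation*}
Since the $V_k$ are unitary and the spectral norm is unitarily invariant, each summand equals $\| e^{-i\theta_{1,k}P_k/2} - e^{-i\theta_{2,k}P_k/2} \|$.

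First I would establish the single-gate estimate $\| e^{-i a P/2} - e^{-i b P/2}\| \le \tfrac12 |a-b|$ for any Hermitian $P$ with $P^2 = I$ (in particular $P \in \{X,Y,Z\}$). The cleanest route is the integral representation
\begin{equation*}
    e^{-iaP/2} - e^{-ibP/2} = \int_b^a \frac{d}{ds} e^{-isP/2}\, ds = -\frac{i}{2} \int_b^a P\, e^{-isP/2}\, ds,
\end{equation*}
so that $\| e^{-iaP/2} - e^{-ibP/2} \| \le \tfrac12 \int_{\min(a,b)}^{\max(a,b)} \|P\,e^{-isP/2}\|\, ds = \tfrac12 |a-b|$, using $\|P\| = \|e^{-isP/2}\| = 1$. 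Alternatively one can diagonalize: $P$ has eigenvalues $\pm1$, so $e^{-iaP/2} - e^{-ibP/2}$ has eigenvalues $e^{-ia/2} - e^{-ib/2}$ and $e^{ia/2} - e^{ib/2}$, both of modulus $|e^{-i(a-b)/4} - e^{i(a-b)/4}| = 2|\sin((a-b)/4)| \le |a-b|/2$; the spectral norm is the max of these moduli.

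Combining, $\| U(\boldsymbol{\theta}_1) - U(\boldsymbol{\theta}_2) \| \le \tfrac12 \sum_{k=1}^K |\theta_{1,k} - \theta_{2,k}| = \tfrac12 \|\boldsymbol{\theta}_1 - \boldsymbol{\theta}_2\|_1$, and then Cauchy--Schwarz ($\|\cdot\|_1 \le \sqrt{K}\,\|\cdot\|_2$ in $\mathbb{R}^K$) yields the claimed $\tfrac{\sqrt{K}}{2}\|\boldsymbol{\theta}_1 - \boldsymbol{\theta}_2\|_2$. There is no serious obstacle here; the only point requiring a little care is the bookkeeping around the fixed gates $V_k$ and the omitted identity tensor factors — one must check that Lemma~\ref{lem:B.2} applies verbatim once the circuit is written as a plain product of $2^N \times 2^N$ unitaries, and that padding $e^{-i\theta_k P_k/2}$ with identities preserves both its unitarity and the bound $\|e^{-ia(P\otimes I)/2} - e^{-ib(P\otimes I)/2}\| = \|e^{-iaP/2} - e^{-ibP/2}\|$ (which holds since $(P\otimes I)^2 = I$ as well, so the single-gate estimate applies directly to the padded operator).
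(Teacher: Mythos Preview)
Your proposal is correct and follows essentially the same route as the paper: apply Lemma~\ref{lem:B.2} to reduce to single-gate differences (the fixed $V_k$ contributing zero), bound each $\|e^{-i\theta_{1,k}P_k/2}-e^{-i\theta_{2,k}P_k/2}\|$ by $\tfrac12|\theta_{1,k}-\theta_{2,k}|$, and finish with Cauchy--Schwarz. The only cosmetic difference is that the paper computes the single-gate bound by writing $\|I-e^{i(\theta_{1,k}-\theta_{2,k})P_k/2}\|$ via the identity $e^{i\alpha P}=\cos\alpha\,I+i\sin\alpha\,P$ to obtain $|2\sin((\theta_{1,k}-\theta_{2,k})/4)|$, whereas you offer the integral representation and the eigenvalue argument; all three are equivalent and yield the same intermediate $2|\sin((a-b)/4)|\le|a-b|/2$.
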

\begin{proof}
    Note that we can write \( U(\boldsymbol{\theta}_1) = V_1 e^{-i \theta_{1,1} P_1 / 2} V_{2} e^{-i \theta_{1,2} P_{2} / 2} \cdots V_{K} e^{-i \theta_{1,K} P_K / 2} V_{K+1} \), \( U(\boldsymbol{\theta}_2) = V_1 e^{-i \theta_{2,1} P_1 / 2} V_{2} e^{-i \theta_{2,2} P_{2} / 2} \cdots V_{K} e^{-i \theta_{2,K} P_K / 2} V_{K+1} \). By Lemma \ref{lem:B.2}, \( \| U(\boldsymbol{\theta}_1) - U(\boldsymbol{\theta}_2) \| \) can be bounded as
    \begin{equation} \label{eq:B.3.1}
    \begin{aligned}
        \| U(\boldsymbol{\theta}_1) - U(\boldsymbol{\theta}_2) \|
        &\leq \sum_{k=1}^{K+1} \| V_k - V_k \| + \sum_{k=1}^K \left\| e^{-i \frac{\theta_{1,k} P_k}{2}} - e^{-i \frac{\theta_{2,k} P_k}{2}} \right\| \\
        &= \sum_{k=1}^K \left\| e^{-i \frac{\theta_{1,k} P_k}{2}} - e^{-i \frac{\theta_{2,k} P_k}{2}} \right\|.
    \end{aligned}
    \end{equation}
    Additionally, the sub-term \( \| e^{-i \frac{\theta_{1,k} P_k}{2}} - e^{-i \frac{\theta_{2,k} P_k}{2}} \| \) can be written as
    \begin{equation*}
    \begin{aligned}
        \| e^{-i \frac{\theta_{1,k} P_k}{2}} - e^{-i \frac{\theta_{2,k} P_k}{2}} \| 
        &= \| I - e^{i \frac{(\theta_{1,k} - \theta_{2,k}) P_k}{2}} \| \nonumber \\
        &= \| I - \cos\left( \frac{\theta_{1,k} - \theta_{2,k}}{2} \right) I + i \sin\left( \frac{\theta_{1,k} - \theta_{2,k}}{2} \right) P_k \| \nonumber \\
        &= \sqrt{\left( 1 - \cos\left( \frac{\theta_{1,k} - \theta_{2,k}}{2} \right) \right)^2 + \left( \sin\left( \frac{\theta_{1,k} - \theta_{2,k}}{2} \right) \right)^2} \nonumber \\
        &= \left| 2 \sin\left( \frac{\theta_{1,k} - \theta_{2,k}}{4} \right) \right|.
    \end{aligned}
    \end{equation*}
    Plugging it into equation (\ref{eq:B.3.1}), we further get
    \begin{equation*}
    \begin{aligned}
        \| U(\boldsymbol{\theta}_1) - U(\boldsymbol{\theta}_2) \|
        &\leq \sum_{k=1}^K \left| 2 \sin\left( \frac{\theta_{1,k} - \theta_{2,k}}{4} \right) \right| \\
        &\leq \sum_{k=1}^K \left| \frac{\theta_{1,k} - \theta_{2,k}}{2} \right| \\
        &\leq \frac{\sqrt{K}}{2} \| \boldsymbol{\theta}_1 - \boldsymbol{\theta}_2 \|_2.
    \end{aligned}
    \end{equation*}
    This completes the proof of Lemma \ref{lem:B.3}.
\end{proof}
To examine the effects of depolarizing noise from the perspective of QNN generalization, we recall the following lemma from \cite{du2021learnability}.
\begin{lemma}[see \cite{du2021learnability}, Lemma 6]\label{lem:B.4} 
    Let \( \mathcal{E}_p \) be the depolarization channel. There always exists a depolarization channel \( \mathcal{E}_{\Tilde{p}} \) with \( \Tilde{p}=1-(1-p)^{K_g} \) that satisfies 
    \begin{equation*}
        \mathcal{E}_p\left\{ U_{K_g}(\boldsymbol{\theta}) \ldots U_2(\boldsymbol{\theta}) \mathcal{E}_p\left[ U_1(\boldsymbol{\theta}) \rho U_1^\dagger(\boldsymbol{\theta}) \right] U_2^\dagger(\boldsymbol{\theta}) \ldots U_{K_g}^\dagger(\boldsymbol{\theta}) \right\}
        = \mathcal{E}_{\Tilde{p}}\left[ U(\boldsymbol{\theta}) \rho U^\dagger(\boldsymbol{\theta}) \right],
    \end{equation*}
    where \( \rho \) is the input quantum state.
\end{lemma}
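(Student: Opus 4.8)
The plan is to prove this as an identity of quantum channels (superoperators), valid for every input $\rho$, by reducing the claim to two elementary structural facts about the depolarizing channel $\mathcal{E}_p(\sigma)=(1-p)\sigma+p\,\mathbb{I}/2^N$: its covariance under unitary conjugation, and its multiplicative composition law in the survival parameter $1-p$. Once these are established, I slide every noise channel to the front past the gates and collapse the resulting $K_g$-fold composition into a single depolarizing channel, which yields the stated $\tilde p=1-(1-p)^{K_g}$.

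First I would establish the covariance relation. Writing $\mathcal{U}_k(\cdot)=U_k(\boldsymbol{\theta})\,(\cdot)\,U_k^\dagger(\boldsymbol{\theta})$ for the unitary channel of the $k$-th gate, I claim $\mathcal{E}_p\circ\mathcal{U}_k=\mathcal{U}_k\circ\mathcal{E}_p$. This is immediate from the definition of $\mathcal{E}_p$ together with the unitary invariance of the maximally mixed state, $U_k(\boldsymbol{\theta})\,(\mathbb{I}/2^N)\,U_k^\dagger(\boldsymbol{\theta})=\mathbb{I}/2^N$: evaluating both orders on any $\sigma$ gives $(1-p)U_k\sigma U_k^\dagger+p\,\mathbb{I}/2^N$. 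The key point is that the noise term $\mathbb{I}/2^N$ is a fixed point of every unitary channel, so the noise commutes freely through any subsequent gate.

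Next I would record the composition law. Substituting $\mathcal{E}_{p_2}(\sigma)=(1-p_2)\sigma+p_2\,\mathbb{I}/2^N$ into $\mathcal{E}_{p_1}$ leaves total weight $(1-p_1)(1-p_2)$ on $\sigma$ and routes the remaining weight onto $\mathbb{I}/2^N$, so $\mathcal{E}_{p_1}\circ\mathcal{E}_{p_2}=\mathcal{E}_{p'}$ with $1-p'=(1-p_1)(1-p_2)$; iterating this $K_g$ times gives $\mathcal{E}_p^{\circ K_g}=\mathcal{E}_{\tilde p}$ with $1-\tilde p=(1-p)^{K_g}$, i.e. exactly $\tilde p=1-(1-p)^{K_g}$. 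With both tools in hand I assemble the argument: reading right to left, the left-hand side is the composition $\mathcal{E}_p\circ\mathcal{U}_{K_g}\circ\mathcal{E}_p\circ\mathcal{U}_{K_g-1}\circ\cdots\circ\mathcal{E}_p\circ\mathcal{U}_1$, where the ellipses in the statement abbreviate one depolarizing channel after each of the $K_g$ gates. Applying the covariance relation repeatedly, I slide each $\mathcal{E}_p$ leftward past all unitary channels standing to its left; since $\mathcal{E}_p$ commutes with every $\mathcal{U}_k$ this is unobstructed and aggregates all $K_g$ noise channels at the front, leaving $\mathcal{E}_p^{\circ K_g}\circ\mathcal{U}_{K_g}\circ\cdots\circ\mathcal{U}_1=\mathcal{E}_p^{\circ K_g}\circ\mathcal{U}$, where $\mathcal{U}(\cdot)=U(\boldsymbol{\theta})\,(\cdot)\,U^\dagger(\boldsymbol{\theta})$ with $U(\boldsymbol{\theta})=\prod_{k=1}^{K_g}U_k(\boldsymbol{\theta})$. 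Collapsing the front block by the composition law gives $\mathcal{E}_{\tilde p}\circ\mathcal{U}(\rho)=\mathcal{E}_{\tilde p}(U(\boldsymbol{\theta})\rho U^\dagger(\boldsymbol{\theta}))$, which is the asserted equality.

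I expect no deep obstacle, since the content is entirely structural; the two places that demand care are the covariance step, which hinges on $\mathbb{I}/2^N$ being a common fixed point of all unitary channels and would fail for a general noise channel whose fixed point is not unitarily invariant, and the commutation bookkeeping, where a short induction on the number of collected channels is needed to confirm that sliding the noise channels leftward produces exactly $K_g$ factors of $\mathcal{E}_p$ while leaving the unitary part equal to $\mathcal{U}$.
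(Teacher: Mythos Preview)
Your argument is correct. The paper does not supply its own proof of this lemma; it is quoted verbatim from \cite{du2021learnability} (their Lemma~6) as an auxiliary fact and used as a black box in the proof of Corollary~\ref{cor:2}. The two structural ingredients you isolate---unitary covariance of the global depolarizing channel (because $\mathbb{I}/2^N$ is fixed by every unitary conjugation) and the multiplicative composition law $1-p'=(1-p_1)(1-p_2)$---are exactly the standard route, and your commutation-and-collapse assembly is the expected proof. Your reading of the ellipsis as abbreviating one $\mathcal{E}_p$ after each of the $K_g$ gates is also the intended one, as confirmed by the exponent $K_g$ in $\tilde p$.
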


\section{Proofs of Main Results} \label{ap: Appendix C}
In this section, we provide the proofs of main results in our paper. We require several useful lemmas to prove the main results.
\begin{lemma}[From Loss Stability to Parameter Stability]\label{lem:C.1}
    Let \( \boldsymbol{\theta}_t \) and \( \boldsymbol{\theta}_t^\prime \) be the parameters of QNNs learned using the SGD algorithm for \( t \) iterations on training datasets \( S \) and \( S^\prime \), respectively. Then, the output difference of the QNNs is bounded by, 
    \begin{equation*}
    \begin{aligned}
        \left| f_{\boldsymbol{\theta}_t}(\boldsymbol{x}) - f_{\boldsymbol{\theta}_t^\prime}(\boldsymbol{x}) \right| 
        &\leq \sqrt{K} \| O \|\|\boldsymbol{\theta}_t-\boldsymbol{\theta}_t^\prime\|_2.
    \end{aligned}
    \end{equation*}
\end{lemma}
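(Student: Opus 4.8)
The plan is to bound the output difference $|f_{\boldsymbol{\theta}_t}(\boldsymbol{x})-f_{\boldsymbol{\theta}_t'}(\boldsymbol{x})|$ by expressing both as expectation values of the observable $O$ against states produced by the two unitaries, and then controlling the discrepancy through the unitary distance bound of Lemma \ref{lem:B.3}. First I would write
\[
f_{\boldsymbol{\theta}_t}(\boldsymbol{x})-f_{\boldsymbol{\theta}_t'}(\boldsymbol{x}) = \mathrm{Tr}\!\left(O\,U(\boldsymbol{\theta}_t)\rho(\boldsymbol{x})U^\dagger(\boldsymbol{\theta}_t)\right) - \mathrm{Tr}\!\left(O\,U(\boldsymbol{\theta}_t')\rho(\boldsymbol{x})U^\dagger(\boldsymbol{\theta}_t')\right),
\]
add and subtract a hybrid term $\mathrm{Tr}(O\,U(\boldsymbol{\theta}_t)\rho(\boldsymbol{x})U^\dagger(\boldsymbol{\theta}_t'))$, and apply the triangle inequality together with $|\mathrm{Tr}(AB)|\leq \|A\|\,\|B\|_1$ (Hölder's inequality for Schatten norms). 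Since $\rho(\boldsymbol{x})$ is a density matrix with $\|\rho(\boldsymbol{x})\|_1=1$ (and without loss of generality a pure state $|\psi\rangle\langle\psi|$, or by convexity reducing to pure states), each of the two cross terms is bounded by $\|O\|$ times a quantity of the form $\|(U(\boldsymbol{\theta}_t)-U(\boldsymbol{\theta}_t'))|\psi\rangle\|_2$ or its adjoint analogue, which in turn is at most $\|U(\boldsymbol{\theta}_t)-U(\boldsymbol{\theta}_t')\|$.

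Next I would invoke Lemma \ref{lem:B.3}, which gives $\|U(\boldsymbol{\theta}_t)-U(\boldsymbol{\theta}_t')\| \leq \frac{\sqrt{K}}{2}\|\boldsymbol{\theta}_t-\boldsymbol{\theta}_t'\|_2$, noting that the QNN ansatz $U(\boldsymbol{\theta})=\prod_{k=1}^{K_g}U_k(\boldsymbol{\theta})$ has exactly the product-of-Pauli-rotations-interleaved-with-fixed-gates form required by that lemma (the $K_g-K$ fixed gates play the role of the $V_k$'s, and the $K$ trainable gates are the $e^{-i\theta_k P_k/2}$ factors). Combining the two cross terms, each contributing $\|O\|\cdot\frac{\sqrt{K}}{2}\|\boldsymbol{\theta}_t-\boldsymbol{\theta}_t'\|_2$, yields the claimed bound $|f_{\boldsymbol{\theta}_t}(\boldsymbol{x})-f_{\boldsymbol{\theta}_t'}(\boldsymbol{x})|\leq \sqrt{K}\|O\|\,\|\boldsymbol{\theta}_t-\boldsymbol{\theta}_t'\|_2$.

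The main obstacle — really the only delicate point — is handling the measurement/trace step cleanly: one must be careful that the appropriate norm pairing is Schatten-$\infty$ (spectral) on $O$ against Schatten-$1$ (trace) on the state perturbation, and that the state perturbation $U\rho U^\dagger - U'\rho U'^\dagger$ is correctly split so that each piece reduces to a spectral-norm bound on $U-U'$. Using the pure-state reduction (any density matrix is a convex combination of pure states, and both the objective and the bound are affine/convex in $\rho$) makes the estimate $\frac{1}{2}\|\mathcal{U}(|\psi\rangle\langle\psi|)-\mathcal{V}(|\psi\rangle\langle\psi|)\|_1 \leq \|(U-V)|\psi\rangle\|_2$ from Lemma \ref{lem:B.1} directly applicable, so in fact one can route the whole argument through Lemma \ref{lem:B.1} followed by Lemma \ref{lem:B.3} rather than redoing the Hölder estimate by hand. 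Everything else is routine, and the factor $\sqrt{K}$ (rather than $\sqrt{K}/2$) simply reflects that the two cross terms each carry a $1/2$.
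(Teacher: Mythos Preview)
Your proposal is correct and essentially matches the paper's proof: the paper applies H\"older's inequality to pull out $\|O\|$, bounds the trace-norm state difference via Lemma~\ref{lem:B.1} to obtain $2\|O\|\,\|U(\boldsymbol{\theta}_t)-U(\boldsymbol{\theta}_t')\|$, and then invokes Lemma~\ref{lem:B.3}---exactly the route you describe in your final paragraph. Your alternative hybrid-term splitting also works and yields the identical constant, so there is no substantive difference.
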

\begin{proof}
    The difference between the two output functions of the QNNs can be represented as follows
    \begin{equation*}
    \begin{aligned}
        \left| f_{\boldsymbol{\theta}_t}(\boldsymbol{x}) - f_{\boldsymbol{\theta}_t^\prime}(\boldsymbol{x}) \right| 
        &= \left| \mathrm{Tr}\left( O U(\boldsymbol{\theta}_t) \rho(\boldsymbol{x}) U^\dagger(\boldsymbol{\theta}_t) \right) 
        - \mathrm{Tr}\left( O U(\boldsymbol{\theta}_t^\prime) \rho(\boldsymbol{x}) U^\dagger (\boldsymbol{\theta}_t^\prime)\right) \right| \\
        &= \left| \mathrm{Tr}\left( O \left( U(\boldsymbol{\theta}_t) \rho(\boldsymbol{x}) U^\dagger (\boldsymbol{\theta}_t)
        - U(\boldsymbol{\theta}_t^\prime) \rho(\boldsymbol{x}) U^\dagger(\boldsymbol{\theta}_t^\prime) \right) \right) \right| \\
        &\leq \|O\| \left\| U(\boldsymbol{\theta}_t) \rho(\boldsymbol{x}) U^\dagger (\boldsymbol{\theta}_t)
        - U(\boldsymbol{\theta}_t^\prime) \rho(\boldsymbol{x}) U^\dagger(\boldsymbol{\theta}_t^\prime) \right\|_1,
    \end{aligned}
    \end{equation*}
    where the last inequality uses the Cauchy-Schwartz inequality.
    
    Let \( \mathcal{E}(\rho)=U(\boldsymbol{\theta}_t)\rho U^\dagger(\boldsymbol{\theta}_t) \), \( \mathcal{E}^\prime(\rho)=U(\boldsymbol{\theta}_t^\prime)\rho U^\dagger(\boldsymbol{\theta}_t^\prime) \) be unitary channels. By Lemma \ref{lem:B.1}, we have
    \begin{equation*}
    \begin{aligned}
        \left| f_{\boldsymbol{\theta}_t}(\boldsymbol{x}) - f_{\boldsymbol{\theta}_t^\prime}(\boldsymbol{x}) \right| 
        &\leq \|O\| \|\mathcal{E}(\rho(\boldsymbol{x})) - \mathcal{E}^\prime(\rho(\boldsymbol{x}))\|_1  \\
        &\leq \|O\| \|\mathcal{E} - \mathcal{E}^\prime \|_\diamond \\
        &\leq 2 \|O\| \| U(\boldsymbol{\theta}_t) - U(\boldsymbol{\theta}_t^\prime) \|.
    \end{aligned}
    \end{equation*} 
    Note that any \( U(\boldsymbol{\theta}) \) is of the form \( U(\boldsymbol{\theta})=V_1U_1V_2U_2V_3\ldots V_KU_KV_{K+1} \), where \(U_k, 1\leq k\leq K\), are a particular choice of the trainable single-qubit Pauli rotations and \(V_k,1\leq k\leq K+1\), are the non-trainable \( n \)-qubit unitaries. (For ease of readability, we have not written out the tensor factors of identities accompanying the \( U_k \).) We can write \( U(\boldsymbol{\theta}_t) = V_1 e^{-i \theta_{t,1} P_1 / 2} V_{2} e^{-i \theta_{t,2} P_{2} / 2} \cdots V_{K} e^{-i \theta_{t,K} P_K / 2} V_{K+1} \), \( U(\boldsymbol{\theta}_t^\prime) = V_1 e^{-i \theta_{t,1}^\prime P_1 / 2} V_{2} e^{-i \theta_{t,2}^\prime P_{2} / 2} \cdots V_{K} e^{-i \theta_{t,K}^\prime P_K / 2} V_{K+1} \), where \( P_k \in\{X, Y, Z\} \) denotes a single-qubit Pauli gate. By Lemma \ref{lem:B.3}, we further get
    \begin{equation} \label{eq:C.1.1}
    \begin{aligned}
         \left| f_{\boldsymbol{\theta}_t}(\boldsymbol{x}) - f_{\boldsymbol{\theta}_t^\prime}(\boldsymbol{x}) \right| 
         &\leq 2 \|O\| \sum_{k=1}^K \left| 2 \sin\left( \frac{\theta_{t,k} - \theta_{t,k}^\prime}{4} \right) \right| \\
         &\leq 2 \|O\| \sum_{k=1}^K \left| \frac{\theta_{t,k} - \theta_{t,k}^\prime}{2} \right| \\
         &\leq \sqrt{K} \| O \|\|\boldsymbol{\theta}_t-\boldsymbol{\theta}_t^\prime\|_2.
    \end{aligned}
    \end{equation}
    This completes the proof of Lemma \ref{lem:C.1}.
\end{proof}

\begin{lemma}[QNN Same Sample Loss Stability Bound] \label{lem:C.2}
     Suppose that Assumption \ref{ass:Lipschitzness} and \ref{ass:smoothness} hold. Let \( \boldsymbol{\theta}_t \) and \( \boldsymbol{\theta}_t^\prime \) be the parameters of two QNNs learned using the SGD algorithm for \( t \) iterations on two training datasets \( S \) and \( S^\prime \), respectively. Then, the loss derivative difference of the QNNs with respect to the same sample is bounded by,
    \begin{equation*}
        \|\nabla_{\boldsymbol{\theta}} \ell(f_{\boldsymbol{\theta}_t}(\boldsymbol{x}); y) - \nabla_{\boldsymbol{\theta}} \ell(f_{\boldsymbol{\theta}_t^\prime}(\boldsymbol{x}); y)\|_2 \leq\kappa\|\boldsymbol{\theta}_t-\boldsymbol{\theta}_t^\prime\|_2,
    \end{equation*}
    where \( \kappa=\alpha_\ell K \|O\| + \sqrt{2} \nu_\ell K \|O\|^2 \).
\end{lemma}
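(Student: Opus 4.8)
The plan is to apply the chain rule, insert a cross term, and reduce everything to (a) the loss-derivative bounds supplied by Assumptions~\ref{ass:Lipschitzness} and~\ref{ass:smoothness}, (b) Lemma~\ref{lem:C.1}, and (c) two elementary estimates on the parameter-gradient of the QNN output map $\boldsymbol{\theta}\mapsto f_{\boldsymbol{\theta}}(\boldsymbol{x})$. Writing $\ell'(u;y):=\partial\ell(u;y)/\partial u$, the chain rule gives $\nabla_{\boldsymbol{\theta}}\ell(f_{\boldsymbol{\theta}}(\boldsymbol{x});y)=\ell'\!\big(f_{\boldsymbol{\theta}}(\boldsymbol{x});y\big)\,\nabla_{\boldsymbol{\theta}}f_{\boldsymbol{\theta}}(\boldsymbol{x})$, hence
\begin{align*}
\big\|\nabla_{\boldsymbol{\theta}}\ell(f_{\boldsymbol{\theta}_t}(\boldsymbol{x});y)-\nabla_{\boldsymbol{\theta}}\ell(f_{\boldsymbol{\theta}_t^\prime}(\boldsymbol{x});y)\big\|_2
&\le\big|\ell'(f_{\boldsymbol{\theta}_t}(\boldsymbol{x});y)\big|\;\big\|\nabla_{\boldsymbol{\theta}}f_{\boldsymbol{\theta}_t}(\boldsymbol{x})-\nabla_{\boldsymbol{\theta}}f_{\boldsymbol{\theta}_t^\prime}(\boldsymbol{x})\big\|_2\\
&\quad+\big|\ell'(f_{\boldsymbol{\theta}_t}(\boldsymbol{x});y)-\ell'(f_{\boldsymbol{\theta}_t^\prime}(\boldsymbol{x});y)\big|\;\big\|\nabla_{\boldsymbol{\theta}}f_{\boldsymbol{\theta}_t^\prime}(\boldsymbol{x})\big\|_2.
\end{align*}
Assumption~\ref{ass:Lipschitzness} yields $|\ell'|\le\alpha_\ell$, and Assumption~\ref{ass:smoothness} together with Lemma~\ref{lem:C.1} yields $|\ell'(f_{\boldsymbol{\theta}_t}(\boldsymbol{x});y)-\ell'(f_{\boldsymbol{\theta}_t^\prime}(\boldsymbol{x});y)|\le\nu_\ell|f_{\boldsymbol{\theta}_t}(\boldsymbol{x})-f_{\boldsymbol{\theta}_t^\prime}(\boldsymbol{x})|\le\nu_\ell\sqrt{K}\|O\|\,\|\boldsymbol{\theta}_t-\boldsymbol{\theta}_t^\prime\|_2$.

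The two remaining ingredients are bounds on the QNN output gradient: (i) a uniform bound $\|\nabla_{\boldsymbol{\theta}}f_{\boldsymbol{\theta}}(\boldsymbol{x})\|_2\le\sqrt{2K}\|O\|$ for every $\boldsymbol{\theta}$, and (ii) a Lipschitz bound $\|\nabla_{\boldsymbol{\theta}}f_{\boldsymbol{\theta}_1}(\boldsymbol{x})-\nabla_{\boldsymbol{\theta}}f_{\boldsymbol{\theta}_2}(\boldsymbol{x})\|_2\le K\|O\|\,\|\boldsymbol{\theta}_1-\boldsymbol{\theta}_2\|_2$. Both exploit that each coordinate $\theta_k$ enters $f_{\boldsymbol{\theta}}(\boldsymbol{x})=\mathrm{Tr}\!\big(OU(\boldsymbol{\theta})\rho(\boldsymbol{x})U^\dagger(\boldsymbol{\theta})\big)$ only through the single Pauli rotation $e^{-\mathrm{i}\theta_kP_k/2}$, so that differentiating produces the clean form $\partial_{\theta_k}f_{\boldsymbol{\theta}}(\boldsymbol{x})=\tfrac{\mathrm{i}}{2}\,\mathrm{Tr}\!\big([P_k,O_{>k}]\,\rho_{\le k}\big)$, where $O_{>k}$ is $O$ conjugated backward through the gates after gate $k$ (so $\|O_{>k}\|=\|O\|$) and $\rho_{\le k}$ is $\rho(\boldsymbol{x})$ propagated forward through the gates up to gate $k$ (so $\|\rho_{\le k}\|_1=1$). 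For (i) I bound each $|\partial_{\theta_k}f_{\boldsymbol{\theta}}(\boldsymbol{x})|$ using $\|[P_k,O_{>k}]\|\le 2\|O\|$ and $\|\rho_{\le k}\|_1=1$ and sum the $K$ coordinates via $\|\cdot\|_2\le\sqrt{K}\max_k|\cdot|$.

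For (ii) I would instead use the equivalent parameter-shift representation $\partial_{\theta_k}f_{\boldsymbol{\theta}}(\boldsymbol{x})=\tfrac12\big(f_{\boldsymbol{\theta}+\frac{\pi}{2}\boldsymbol{e}_k}(\boldsymbol{x})-f_{\boldsymbol{\theta}-\frac{\pi}{2}\boldsymbol{e}_k}(\boldsymbol{x})\big)$; subtracting this at $\boldsymbol{\theta}_1$ and $\boldsymbol{\theta}_2$ gives
\begin{equation*}
\partial_{\theta_k}f_{\boldsymbol{\theta}_1}(\boldsymbol{x})-\partial_{\theta_k}f_{\boldsymbol{\theta}_2}(\boldsymbol{x})=\tfrac12\big[(f_{\boldsymbol{\theta}_1+\frac{\pi}{2}\boldsymbol{e}_k}(\boldsymbol{x})-f_{\boldsymbol{\theta}_2+\frac{\pi}{2}\boldsymbol{e}_k}(\boldsymbol{x}))-(f_{\boldsymbol{\theta}_1-\frac{\pi}{2}\boldsymbol{e}_k}(\boldsymbol{x})-f_{\boldsymbol{\theta}_2-\frac{\pi}{2}\boldsymbol{e}_k}(\boldsymbol{x}))\big],
\end{equation*}
and applying Lemma~\ref{lem:C.1} to each of the two output differences — it holds for any pair of parameter vectors, and the common shift $\pm\frac{\pi}{2}\boldsymbol{e}_k$ leaves the $\ell_2$-distance unchanged — gives $|\partial_{\theta_k}f_{\boldsymbol{\theta}_1}(\boldsymbol{x})-\partial_{\theta_k}f_{\boldsymbol{\theta}_2}(\boldsymbol{x})|\le\sqrt{K}\|O\|\,\|\boldsymbol{\theta}_1-\boldsymbol{\theta}_2\|_2$; summing over $k$ yields (ii). Substituting (ii) with $\boldsymbol{\theta}_1=\boldsymbol{\theta}_t,\boldsymbol{\theta}_2=\boldsymbol{\theta}_t^\prime$ and (i) with $\boldsymbol{\theta}=\boldsymbol{\theta}_t^\prime$ into the split above then gives
\begin{equation*}
\big\|\nabla_{\boldsymbol{\theta}}\ell(f_{\boldsymbol{\theta}_t}(\boldsymbol{x});y)-\nabla_{\boldsymbol{\theta}}\ell(f_{\boldsymbol{\theta}_t^\prime}(\boldsymbol{x});y)\big\|_2\le\big(\alpha_\ell K\|O\|+\sqrt{2}\,\nu_\ell K\|O\|^2\big)\,\|\boldsymbol{\theta}_t-\boldsymbol{\theta}_t^\prime\|_2=\kappa\,\|\boldsymbol{\theta}_t-\boldsymbol{\theta}_t^\prime\|_2,
\end{equation*}
with $\kappa$ as in~\eqref{eq:k}.

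The step I expect to be the main obstacle is making the circuit-differentiation identity for $\partial_{\theta_k}f_{\boldsymbol{\theta}}(\boldsymbol{x})$ precise — correctly grouping the fixed and trainable gates around gate $k$, carrying along the suppressed identity tensor factors, and verifying the commutator and parameter-shift forms — together with the careful invocation of Lemma~\ref{lem:C.1} at the shifted parameters. Once that is in place, the norm bookkeeping ($\|[P_k,\cdot]\|\le 2\|O\|$, $\|\rho_{\le k}\|_1=1$, operator-versus-trace-norm inequalities, and the passage from coordinatewise to $\ell_2$ bounds via the factor $\sqrt{K}$) is routine.
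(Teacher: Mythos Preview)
Your proposal is correct and follows essentially the same route as the paper: the same chain-rule split with the inserted cross term, the same use of Assumptions~\ref{ass:Lipschitzness} and~\ref{ass:smoothness}, the same invocation of Lemma~\ref{lem:C.1} for $|f_{\boldsymbol{\theta}_t}-f_{\boldsymbol{\theta}_t'}|$, and the same parameter-shift argument combined with Lemma~\ref{lem:C.1} for the gradient-difference bound~(ii). The only deviation is in~(i): the paper bounds each $|\partial_{\theta_j}f_{\boldsymbol{\theta}}(\boldsymbol{x})|$ via parameter-shift and the intermediate sine estimate from the proof of Lemma~\ref{lem:C.1} (yielding $|2\sin(\pi/4)|\,\|O\|=\sqrt{2}\,\|O\|$ per coordinate), whereas your commutator route gives the slightly sharper $\|O\|$ per coordinate --- either way the stated $\kappa$ is a valid upper bound, so this is a harmless variation rather than a different argument.
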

\begin{proof}
     Using the Assumption \ref{ass:Lipschitzness} and \ref{ass:smoothness} that the loss function is Lipschitz continuous and smoothness, we have
    \begin{equation}\label{eq:C.2.1}
    \begin{aligned}
        &\left\|\nabla_{\boldsymbol{\theta}} \ell\left(f_{\boldsymbol{\theta}_t}(\boldsymbol{x}); y\right) 
        - \nabla_{\boldsymbol{\theta}} \ell\left(f_{\boldsymbol{\theta}_t^\prime}(\boldsymbol{x}); y\right)\right\|_2 \\
        &= \left\| \frac{\partial \ell}{\partial f} \left(f_{\boldsymbol{\theta}_t}(\boldsymbol{x}); y\right) 
        \nabla_{\boldsymbol{\theta}} f_{\boldsymbol{\theta}_t}(\boldsymbol{x}) 
        - \frac{\partial \ell}{\partial f} \left(f_{\boldsymbol{\theta}_t^\prime}(\boldsymbol{x}); y\right) 
        \nabla_{\boldsymbol{\theta}} f_{\boldsymbol{\theta}_t^\prime}(\boldsymbol{x}) \right\|_2 \\
        &\leq \left\| \frac{\partial \ell}{\partial f} \left(f_{\boldsymbol{\theta}_t}(\boldsymbol{x}); y\right) 
        \left(\nabla_{\boldsymbol{\theta}} f_{\boldsymbol{\theta}_t}(\boldsymbol{x}) 
        - \nabla_{\boldsymbol{\theta}} f_{\boldsymbol{\theta}_t^\prime}(\boldsymbol{x})\right) \right\|_2 + \left\| \left( \frac{\partial \ell}{\partial f} \left(f_{\boldsymbol{\theta}_t}(\boldsymbol{x}); y\right) 
        - \frac{\partial \ell}{\partial f} \left(f_{\boldsymbol{\theta}_t^\prime}(\boldsymbol{x}); y\right)\right) 
        \nabla_{\boldsymbol{\theta}} f_{\boldsymbol{\theta}_t^\prime}(\boldsymbol{x}) \right\|_2 \\
        &= \left| \frac{\partial \ell}{\partial f} \left(f_{\boldsymbol{\theta}_t}(\boldsymbol{x}); y\right) \right| 
        \left\|\nabla_{\boldsymbol{\theta}} f_{\boldsymbol{\theta}_t}(\boldsymbol{x}) 
        - \nabla_{\boldsymbol{\theta}} f_{\boldsymbol{\theta}_t^\prime}(\boldsymbol{x})\right\|_2 + \left| \frac{\partial \ell}{\partial f} \left(f_{\boldsymbol{\theta}_t}(\boldsymbol{x}); y\right) 
        - \frac{\partial \ell}{\partial f} \left(f_{\boldsymbol{\theta}_t^\prime}(\boldsymbol{x}); y\right) \right| 
        \left\|\nabla_{\boldsymbol{\theta}} f_{\boldsymbol{\theta}_t^\prime}(\boldsymbol{x})\right\|_2 \\
        &\leq \alpha_\ell \left\|\nabla_{\boldsymbol{\theta}} f_{\boldsymbol{\theta}_t}(\boldsymbol{x}) 
        - \nabla_{\boldsymbol{\theta}} f_{\boldsymbol{\theta}_t^\prime}(\boldsymbol{x})\right\|_2 
        + \nu_\ell \left| f_{\boldsymbol{\theta}_t}(\boldsymbol{x}) - f_{\boldsymbol{\theta}_t^\prime}(\boldsymbol{x})\right| 
        \left\|\nabla_{\boldsymbol{\theta}} f_{\boldsymbol{\theta}_t^\prime}(\boldsymbol{x})\right\|_2.
    \end{aligned}
    \end{equation}
    The term \( \left\|\nabla_{\boldsymbol{\theta}} f_{\boldsymbol{\theta}_t}(\boldsymbol{x}) - \nabla_{\boldsymbol{\theta}} f_{\boldsymbol{\theta}_t^\prime}(\boldsymbol{x})\right\|_2 \) can be bounded by the individual terms \( \left|\nabla_{\theta_j} f_{\boldsymbol{\theta}_t}(\boldsymbol{x}) - \nabla_{\theta_j} f_{\boldsymbol{\theta}_t^\prime}(\boldsymbol{x})\right| \), which can be computed using parameter-shift rules \cite{mitarai2018quantum},
    \begin{equation}\label{eq:C.2.2}
    \begin{aligned}
        &\left|\nabla_{\theta_j} f_{\boldsymbol{\theta}_t}(\boldsymbol{x}) 
        - \nabla_{\theta_j} f_{\boldsymbol{\theta}_t^\prime}(\boldsymbol{x})\right| \\
        &= \frac{1}{2} \left| \left( f_{\boldsymbol{\theta}_t + \frac{\pi}{2} \boldsymbol{e}_j}(\boldsymbol{x}) 
        - f_{\boldsymbol{\theta}_t - \frac{\pi}{2} \boldsymbol{e}_j}(\boldsymbol{x}) \right) 
        - \left( f_{\boldsymbol{\theta}_t^\prime + \frac{\pi}{2} \boldsymbol{e}_j}(\boldsymbol{x}) 
        - f_{\boldsymbol{\theta}_t^\prime - \frac{\pi}{2} \boldsymbol{e}_j}(\boldsymbol{x}) \right) \right| \\
        &= \frac{1}{2} \left| \left( f_{\boldsymbol{\theta}_t + \frac{\pi}{2} \boldsymbol{e}_j}(\boldsymbol{x}) 
        - f_{\boldsymbol{\theta}_t^\prime + \frac{\pi}{2} \boldsymbol{e}_j}(\boldsymbol{x}) \right) 
        - \left( f_{\boldsymbol{\theta}_t - \frac{\pi}{2} \boldsymbol{e}_j}(\boldsymbol{x}) 
        - f_{\boldsymbol{\theta}_t^\prime - \frac{\pi}{2} \boldsymbol{e}_j}(\boldsymbol{x}) \right) \right| \\
        &\leq \frac{1}{2} \Big[ \left| f_{\boldsymbol{\theta}_t + \frac{\pi}{2} \boldsymbol{e}_j}(\boldsymbol{x}) 
        - f_{\boldsymbol{\theta}_t^\prime + \frac{\pi}{2} \boldsymbol{e}_j}(\boldsymbol{x}) \right| + \left| f_{\boldsymbol{\theta}_t - \frac{\pi}{2} \boldsymbol{e}_j}(\boldsymbol{x}) 
        - f_{\boldsymbol{\theta}_t^\prime - \frac{\pi}{2} \boldsymbol{e}_j}(\boldsymbol{x}) \right| \Big],
    \end{aligned}
    \end{equation}
    where \( \boldsymbol{e}_j \) is the unit vector along the \( \boldsymbol{\theta}_j \) axis.
    
    According to Lemma \ref{lem:C.1}, we have
    \begin{equation*}
        \left| f_{\boldsymbol{\theta}_t + \frac{\pi}{2} \boldsymbol{e}_j}(\boldsymbol{x}) 
        - f_{\boldsymbol{\theta}_t^\prime + \frac{\pi}{2} \boldsymbol{e}_j}(\boldsymbol{x}) \right| \leq \sqrt{K} \|O\| \|\boldsymbol{\theta}_t - \boldsymbol{\theta}_t^\prime\|_2,
    \end{equation*}
    and
    \begin{equation*}
        \left| f_{\boldsymbol{\theta}_t - \frac{\pi}{2} \boldsymbol{e}_j}(\boldsymbol{x}) 
        - f_{\boldsymbol{\theta}_t^\prime - \frac{\pi}{2} \boldsymbol{e}_j}(\boldsymbol{x}) \right| \leq \sqrt{K} \|O\| \|\boldsymbol{\theta}_t - \boldsymbol{\theta}_t^\prime\|_2.
    \end{equation*}
    Plugging it into equation (\ref{eq:C.2.2}), we further get
    \begin{equation*} 
        \left|\nabla_{\theta_j} f_{\boldsymbol{\theta}_t}(\boldsymbol{x}) 
        - \nabla_{\theta_j} f_{\boldsymbol{\theta}_t^\prime}(\boldsymbol{x})\right| \leq \sqrt{K} \|O\| \|\boldsymbol{\theta}_t - \boldsymbol{\theta}_t^\prime\|_2.
    \end{equation*}
    Therefore, we can bound the term \( \|\nabla_{\boldsymbol{\theta}} f_{\boldsymbol{\theta}_t}(\boldsymbol{x}) - \nabla_{\boldsymbol{\theta}} f_{\boldsymbol{\theta}_t^\prime}(\boldsymbol{x})\|_2 \) as follows
    \begin{equation}\label{eq:C.2.3}
    \begin{aligned}
        \|\nabla_{\boldsymbol{\theta}} f_{\boldsymbol{\theta}_t}(\boldsymbol{x}) 
        - \nabla_{\boldsymbol{\theta}} f_{\boldsymbol{\theta}_t^\prime}(\boldsymbol{x})\|_2
        &= \sqrt{\sum_{j=1}^K \left| \nabla_{\theta_j} f_{\boldsymbol{\theta}_t}(\boldsymbol{x}) 
        - \nabla_{\theta_j} f_{\boldsymbol{\theta}_t^\prime}(\boldsymbol{x}) \right|^2} \\
        &\leq \sqrt{K \left( \sqrt{K} \|O\| \|\boldsymbol{\theta}_t - \boldsymbol{\theta}_t^\prime\|_2 \right)^2} \\
        &= K \|O\| \|\boldsymbol{\theta}_t - \boldsymbol{\theta}_t^\prime\|_2.
    \end{aligned}
    \end{equation}
    Similarly, the term \( \left\|\nabla_{\boldsymbol{\theta}} f_{\boldsymbol{\theta}_t^\prime}(\boldsymbol{x})\right\|_2 \) can be bounded by the individual terms  \( \left\|\nabla_{\theta_j} f_{\boldsymbol{\theta}_t^\prime}(\boldsymbol{x})\right\|_2 \). Additionally, by equation (\ref{eq:C.1.1}) in Lemma \ref{lem:C.1}, we have 
    \begin{equation}\label{eq:C.2.4}
    \begin{aligned}
        \left| \nabla_{\theta_j} f_{\boldsymbol{\theta}_t^\prime}(\boldsymbol{x}) \right| 
        &= \frac{1}{2} \left| f_{\boldsymbol{\theta}_t^\prime + \frac{\pi}{2} \boldsymbol{e}_j}(\boldsymbol{x}) 
        - f_{\boldsymbol{\theta}_t^\prime - \frac{\pi}{2} \boldsymbol{e}_j}(\boldsymbol{x}) \right| \\
        &\leq \|O\| \left( \sum_{k \neq j} \left| 2 \sin\left( \frac{\theta_{t,k}^\prime - \theta_{t,k}^\prime}{4} \right) \right| 
        + \left| 2 \sin\left( \frac{(\theta_{t,j}^\prime + \frac{\pi}{2}) - (\theta_{t,j}^\prime - \frac{\pi}{2})}{4} \right) \right| \right) \\
        &= \sqrt{2} \|O\|.
    \end{aligned}
    \end{equation}
    Therefore, we can bound  the term \( \left\|\nabla_{\boldsymbol{\theta}} f_{\boldsymbol{\theta}_t^\prime}(\boldsymbol{x})\right\|_2 \) as follows
    \begin{equation}\label{eq:C.2.5}
    \begin{aligned}
        \|\nabla_{\boldsymbol{\theta}} f_{\boldsymbol{\theta}_t^\prime}(\boldsymbol{x})\|_2
        &= \sqrt{\sum_{j=1}^K \left|\nabla_{\theta_j} f_{\boldsymbol{\theta}_t^\prime}(\boldsymbol{x})\right|^2} \\
        &\leq \sqrt{K \left(\sqrt{2} \|O\|\right)^2} \\
        &= \sqrt{2K} \|O\|.
    \end{aligned}
    \end{equation}
    Finally, according to Lemma \ref{lem:C.1}, the term \(  \left| f_{\boldsymbol{\theta}_t}(\boldsymbol{x}) - f_{\boldsymbol{\theta}_t^\prime}(\boldsymbol{x})\right| \) can be bounded as
    \begin{equation} \label{eq:C.2.6}
        \left| f_{\boldsymbol{\theta}_t}(\boldsymbol{x}) - f_{\boldsymbol{\theta}_t^\prime}(\boldsymbol{x})\right| \leq \sqrt{K} \|O\| \|\boldsymbol{\theta}_t - \boldsymbol{\theta}_t^\prime\|_2
    \end{equation}
    Plugging equation (\ref{eq:C.2.3}), (\ref{eq:C.2.5}), and (\ref{eq:C.2.6}) back into equation (\ref{eq:C.2.1}) completes the proof of Lemma \ref{lem:C.2}.
\end{proof}

\begin{lemma}[QNN Differnet Sample Loss Stability Bound] \label{lem:C.3}
     Suppose that Assumption \ref{ass:Lipschitzness} hold. Let \( \boldsymbol{\theta}_t \) and \( \boldsymbol{\theta}_t^\prime \) be the parameters of two QNNs learned using the SGD algorithm for \( t \) iterations on two training datasets \( S \) and \( S^\prime \), respectively. Then, the loss derivative difference of the QNNs with respect to the different sample is bounded by,
    \begin{equation*}
        \|\nabla_{\boldsymbol{\theta}} \ell(f_{\boldsymbol{\theta}_t}(\boldsymbol{x}); y) - \nabla_{\boldsymbol{\theta}} \ell(f_{\boldsymbol{\theta}_t^\prime}(\boldsymbol{x}^\prime); y^\prime)\|_2 \leq2\sqrt{2}\alpha_\ell\sqrt{K}\|O\|.
    \end{equation*}
\end{lemma}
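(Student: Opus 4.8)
The plan is to avoid any cancellation across the two (now different) samples and simply bound each gradient term separately by the triangle inequality:
\begin{equation*}
\|\nabla_{\boldsymbol{\theta}} \ell(f_{\boldsymbol{\theta}_t}(\boldsymbol{x}); y) - \nabla_{\boldsymbol{\theta}} \ell(f_{\boldsymbol{\theta}_t^\prime}(\boldsymbol{x}^\prime); y^\prime)\|_2 \leq \|\nabla_{\boldsymbol{\theta}} \ell(f_{\boldsymbol{\theta}_t}(\boldsymbol{x}); y)\|_2 + \|\nabla_{\boldsymbol{\theta}} \ell(f_{\boldsymbol{\theta}_t^\prime}(\boldsymbol{x}^\prime); y^\prime)\|_2 .
\end{equation*}
It then suffices to show that for \emph{any} parameter vector \( \boldsymbol{\theta} \) and any sample \( (\boldsymbol{x},y) \), one has \( \|\nabla_{\boldsymbol{\theta}} \ell(f_{\boldsymbol{\theta}}(\boldsymbol{x}); y)\|_2 \leq \sqrt{2}\,\alpha_\ell \sqrt{K}\|O\| \); adding the two copies gives the claimed \( 2\sqrt{2}\,\alpha_\ell\sqrt{K}\|O\| \).

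For the single-gradient bound I would use the chain rule \( \nabla_{\boldsymbol{\theta}} \ell(f_{\boldsymbol{\theta}}(\boldsymbol{x}); y) = \frac{\partial \ell}{\partial f}(f_{\boldsymbol{\theta}}(\boldsymbol{x}); y)\, \nabla_{\boldsymbol{\theta}} f_{\boldsymbol{\theta}}(\boldsymbol{x}) \), so that
\begin{equation*}
\|\nabla_{\boldsymbol{\theta}} \ell(f_{\boldsymbol{\theta}}(\boldsymbol{x}); y)\|_2 = \left|\frac{\partial \ell}{\partial f}(f_{\boldsymbol{\theta}}(\boldsymbol{x}); y)\right| \cdot \|\nabla_{\boldsymbol{\theta}} f_{\boldsymbol{\theta}}(\boldsymbol{x})\|_2 .
\end{equation*}
Assumption \ref{ass:Lipschitzness} (\( \alpha_\ell \)-Lipschitzness of \( \ell \) in its \( f \)-argument) gives \( \left|\frac{\partial \ell}{\partial f}\right| \leq \alpha_\ell \). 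For the gradient norm of the QNN output I would reuse exactly the parameter-shift estimate already established inside the proof of Lemma \ref{lem:C.2}: each coordinate satisfies \( |\nabla_{\theta_j} f_{\boldsymbol{\theta}}(\boldsymbol{x})| \leq \sqrt{2}\|O\| \) (equation (\ref{eq:C.2.4}), with \( \boldsymbol{\theta}_t^\prime \) replaced by \( \boldsymbol{\theta} \)), hence \( \|\nabla_{\boldsymbol{\theta}} f_{\boldsymbol{\theta}}(\boldsymbol{x})\|_2 \leq \sqrt{2K}\|O\| \) as in (\ref{eq:C.2.5}). Multiplying the two bounds yields \( \sqrt{2}\,\alpha_\ell\sqrt{K}\|O\| \), and the triangle inequality finishes the proof.

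There is essentially no real obstacle here; the only point requiring a word of care is that the parameter-shift bound on \( \|\nabla_{\boldsymbol{\theta}} f_{\boldsymbol{\theta}}(\boldsymbol{x})\|_2 \) is uniform in \( \boldsymbol{\theta} \) and \( \boldsymbol{x} \) — it comes purely from \( |U_k| = 1 \) and \( \|O\| \) via Lemma \ref{lem:C.1}, not from any proximity of \( \boldsymbol{\theta}_t \) and \( \boldsymbol{\theta}_t^\prime \) — so it applies identically to the two terms even though they involve different iterates trained on different datasets and evaluated on different samples. Only Assumption \ref{ass:Lipschitzness} is needed (no smoothness), consistent with the statement.
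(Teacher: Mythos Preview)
Your proposal is correct and matches the paper's proof essentially step for step: triangle inequality to split the two gradients, chain rule plus Assumption \ref{ass:Lipschitzness} to extract the factor \( \alpha_\ell \), and the parameter-shift bound \eqref{eq:C.2.5} from Lemma \ref{lem:C.2} to control \( \|\nabla_{\boldsymbol{\theta}} f_{\boldsymbol{\theta}}(\boldsymbol{x})\|_2 \leq \sqrt{2K}\|O\| \). Your closing remark that this bound is uniform in \( \boldsymbol{\theta} \) and \( \boldsymbol{x} \) (and that smoothness is not needed) is exactly the right observation.
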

\begin{proof}
     Using the Assumption \ref{ass:Lipschitzness} that the loss function is Lipschitz continuous, we have
    \begin{equation}\label{eq:C.3.1}
    \begin{aligned}
        &\left\|\nabla_{\boldsymbol{\theta}} \ell\left(f_{\boldsymbol{\theta}_t}(\boldsymbol{x}); y\right) 
        - \nabla_{\boldsymbol{\theta}} \ell\left(f_{\boldsymbol{\theta}_t^\prime}(\boldsymbol{x}^\prime); y^\prime\right)\right\|_2 \\
        &= \left\| \frac{\partial \ell}{\partial f} \left(f_{\boldsymbol{\theta}_t}(\boldsymbol{x}); y\right) 
        \nabla_{\boldsymbol{\theta}} f_{\boldsymbol{\theta}_t}(\boldsymbol{x}) 
        - \frac{\partial \ell}{\partial f} \left(f_{\boldsymbol{\theta}_t^\prime}(\boldsymbol{x}^\prime); y^\prime\right) 
        \nabla_{\boldsymbol{\theta}} f_{\boldsymbol{\theta}_t^\prime}(\boldsymbol{x}^\prime) \right\|_2 \\
        &\leq \left\| \frac{\partial \ell}{\partial f} \left(f_{\boldsymbol{\theta}_t}(\boldsymbol{x}); y\right) 
        \nabla_{\boldsymbol{\theta}} f_{\boldsymbol{\theta}_t}(\boldsymbol{x}) \right\|_2 
        + \left\| \frac{\partial \ell}{\partial f} \left(f_{\boldsymbol{\theta}_t^\prime}(\boldsymbol{x}^\prime); y^\prime\right) 
        \nabla_{\boldsymbol{\theta}} f_{\boldsymbol{\theta}_t^\prime}(\boldsymbol{x}^\prime) \right\|_2 \\
        &= \left| \frac{\partial \ell}{\partial f} \left(f_{\boldsymbol{\theta}_t}(\boldsymbol{x}); y\right) \right|  
        \left\|\nabla_{\boldsymbol{\theta}} f_{\boldsymbol{\theta}_t}(\boldsymbol{x})\right\|_2 
        + \left| \frac{\partial \ell}{\partial f} \left(f_{\boldsymbol{\theta}_t^\prime}(\boldsymbol{x}^\prime); y^\prime\right) \right|  
        \left\|\nabla_{\boldsymbol{\theta}} f_{\boldsymbol{\theta}_t^\prime}(\boldsymbol{x}^\prime)\right\|_2 \\
        &\leq \alpha_\ell \left(
        \left\|\nabla_{\boldsymbol{\theta}} f_{\boldsymbol{\theta}_t}(\boldsymbol{x})\right\|_2 
        + \left\|\nabla_{\boldsymbol{\theta}} f_{\boldsymbol{\theta}_t^\prime}(\boldsymbol{x}^\prime)\right\|_2
        \right).
    \end{aligned}
    \end{equation}
    By equation (\ref{eq:C.2.5}) in Lemma \ref{lem:C.2}, we similarly bound \( \left\|\nabla_{\boldsymbol{\theta}} f_{\boldsymbol{\theta}_t}(\boldsymbol{x})\right\|_2 \leq \sqrt{2K}\|O\|, \left\|\nabla_{\boldsymbol{\theta}} f_{\boldsymbol{\theta}_t^\prime}(\boldsymbol{x}^\prime)\right\|_2 \leq \sqrt{2K}\|O\| \). Plugging this into equation (\ref{eq:C.3.1}) completes the proof of Lemma \ref{lem:C.3}.  
\end{proof}
\subsection{Proof of Theorem \ref{the:1}}
In this subsection, we prove Theorem \ref{the:1} on the uniform stability of QNNs. We begin by quoting the result which we set to prove.
\renewcommand{\thetheorem}{\ref{the:1}}
\begin{theorem}[Uniform Stability Bound]
    Suppose that Assumption \ref{ass:Lipschitzness} and \ref{ass:smoothness} hold. Let \( A(S) \) be the QNN model trained on the dataset \( S\in\mathcal{Z}^m \) using the SGD algorithm with step sizes \( \eta_t\) for \( T \) iterations, then \( A(S) \) is \( \epsilon \)-uniformly stable with 
    \begin{equation*} 
        \epsilon\leq\sum_{t=0}^{T-1}\left[\prod_{j=t+1}^{T-1}(1+\eta_j\kappa)\right]\frac{2\sqrt{2}\eta_t\alpha_\ell^2K\|O\|^2}{m},
    \end{equation*}
    where
    \begin{equation*}
         \kappa:=\alpha_\ell K \|O\| + \sqrt{2} \nu_\ell K \|O\|^2.
    \end{equation*}
\end{theorem}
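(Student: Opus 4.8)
The plan is to run the perturbation-coupling argument for SGD stability from \cite{hardt2016train}, using the circuit-specific estimates Lemmas \ref{lem:C.1}, \ref{lem:C.2}, \ref{lem:C.3} in place of the classical Lipschitz/smoothness bounds. Fix two datasets $S,S'\in\mathcal{Z}^m$ differing in exactly one example, and couple the two SGD runs by giving them the same initialization $\boldsymbol{\theta}_0=\boldsymbol{\theta}_0'$ and the same sequence of sampled indices; this leaves each marginal trajectory's law unchanged, so it suffices to bound the coupled quantity $\mathbb{E}_A[\delta_T]$ where $\delta_t:=\|\boldsymbol{\theta}_t-\boldsymbol{\theta}_t'\|_2$, and note $\delta_0=0$.

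The core step will be a one-step recursion obtained by conditioning on $(\boldsymbol{\theta}_t,\boldsymbol{\theta}_t')$ and splitting on the index $i_t$ drawn at iteration $t$. With probability $1-1/m$ the index selects an example common to $S$ and $S'$; then the two updates use the \emph{same} sample, so a triangle inequality and Lemma \ref{lem:C.2} give $\delta_{t+1}\le(1+\eta_t\kappa)\delta_t$. With probability $1/m$ the index selects the differing example; then $\boldsymbol{\theta}_{t+1}-\boldsymbol{\theta}_{t+1}'=(\boldsymbol{\theta}_t-\boldsymbol{\theta}_t')-\eta_t(\boldsymbol{g}-\boldsymbol{g}')$ with $\boldsymbol{g},\boldsymbol{g}'$ the two (different-sample) gradients, and a triangle inequality together with Lemma \ref{lem:C.3} gives $\delta_{t+1}\le\delta_t+2\sqrt{2}\eta_t\alpha_\ell\sqrt{K}\|O\|$. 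Averaging over $i_t$, using $(1-1/m)(1+\eta_t\kappa)+1/m\le 1+\eta_t\kappa$, and taking total expectation yields
\[
\mathbb{E}_A[\delta_{t+1}]\le(1+\eta_t\kappa)\,\mathbb{E}_A[\delta_t]+\frac{2\sqrt{2}\eta_t\alpha_\ell\sqrt{K}\|O\|}{m}.
\]
The same inequality holds for the random-reshuffling scheme, since by symmetry the differing example occupies the $t$-th slot of a uniformly random permutation with probability $1/m$. Unrolling this linear recursion from $\delta_0=0$ gives
\[
\mathbb{E}_A[\delta_T]\le\sum_{t=0}^{T-1}\Big[\prod_{j=t+1}^{T-1}(1+\eta_j\kappa)\Big]\frac{2\sqrt{2}\eta_t\alpha_\ell\sqrt{K}\|O\|}{m}.
\]

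To finish, I would convert parameter stability into loss stability: for any $\boldsymbol{z}=(\boldsymbol{x},y)$, Assumption \ref{ass:Lipschitzness} followed by Lemma \ref{lem:C.1} gives $\big|\ell(f_{\boldsymbol{\theta}_T}(\boldsymbol{x});y)-\ell(f_{\boldsymbol{\theta}_T'}(\boldsymbol{x});y)\big|\le\alpha_\ell\big|f_{\boldsymbol{\theta}_T}(\boldsymbol{x})-f_{\boldsymbol{\theta}_T'}(\boldsymbol{x})\big|\le\alpha_\ell\sqrt{K}\|O\|\,\delta_T$; taking $\mathbb{E}_A$ and then $\sup_{\boldsymbol{z}}$ (the right-hand bound is $\boldsymbol{z}$-independent) multiplies the previous display by $\alpha_\ell\sqrt{K}\|O\|$, turning the numerator $2\sqrt{2}\eta_t\alpha_\ell\sqrt{K}\|O\|$ into $2\sqrt{2}\eta_t\alpha_\ell^2K\|O\|^2$, which is exactly the claimed $\epsilon$.

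I do not expect a serious obstacle here: the substantive work — translating a change in the circuit angles into a change of the measured expectation value via the diamond-norm/parameter-shift estimates (Lemmas \ref{lem:C.1}, \ref{lem:C.3}), and extracting the expansiveness constant $\kappa$ from smoothness of $\ell$ in $f$ (Lemma \ref{lem:C.2}) — is already packaged in the preparatory lemmas. The only points needing care are setting up the coupled trajectories so that $\mathbb{E}_A[\delta_T]$ controls the stated quantity, covering both index-selection schemes in one argument, and keeping the recursion multiplier at $1+\eta_t\kappa$ rather than a looser constant.
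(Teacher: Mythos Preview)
Your proposal is correct and mirrors the paper's proof essentially step for step: the same coupling of SGD trajectories, the same $(1-1/m)$ vs.\ $1/m$ case split invoking Lemmas \ref{lem:C.2} and \ref{lem:C.3} to obtain the recursion $\mathbb{E}_A[\delta_{t+1}]\le(1+\eta_t\kappa)\mathbb{E}_A[\delta_t]+2\sqrt{2}\eta_t\alpha_\ell\sqrt{K}\|O\|/m$, the same unrolling, and the same final conversion via Assumption \ref{ass:Lipschitzness} and Lemma \ref{lem:C.1}. Your explicit remark on the random-reshuffling scheme and the clean bookkeeping $(1-1/m)(1+\eta_t\kappa)+1/m\le1+\eta_t\kappa$ are minor expository additions but do not change the approach.
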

\begin{proof}
    Let \( S \) and \( S^\prime\) be two datasets of size \( m \) differing in only a single sample. Consider two sequences of the parameters, \( \{\boldsymbol{\theta}_0,\boldsymbol{\theta}_1,\ldots,\boldsymbol{\theta}_T\}\) and \( \{\boldsymbol{\theta}_0^\prime,\boldsymbol{\theta}_1^\prime,\ldots,\boldsymbol{\theta}_T^\prime \} \), learned by the QNN running SGD on \( S \) and \( S^\prime \), respectively. Let \( \delta_t=\|\boldsymbol{\theta}_t-\boldsymbol{\theta}_t^\prime\|_2 \).
     
    Using the Assumption \ref{ass:Lipschitzness} that the loss function is Lipschitz continuous, the linearity of expectation and Lemma \ref{lem:C.1}, we have
    \begin{equation}\label{eq:4.1.1}
    \begin{aligned}
        \left| \mathbb{E}_A \left[ \ell (f_{\boldsymbol{\theta}_T}(\boldsymbol{x}); y) - \ell (f_{\boldsymbol{\theta}_T^\prime}(\boldsymbol{x}); y) \right] \right|
        &\leq \alpha_\ell \mathbb{E}_A \left[ \left| f_{\boldsymbol{\theta}_T}(\boldsymbol{x}) - f_{\boldsymbol{\theta}_T^\prime}(\boldsymbol{x}) \right| \right] \\
        &\leq \alpha_\ell \sqrt{K} \|O\| \mathbb{E}_A\left[ \|\boldsymbol{\theta}_T - \boldsymbol{\theta}_T^\prime\|_2 \right] \\
        &\leq \alpha_\ell \sqrt{K} \|O\| \mathbb{E}_A\left[ \delta_T \right].
    \end{aligned}
    \end{equation}
    Then, we focus on the term \( \mathbb{E}_A\left[ \delta_T \right] \). Observe that at iteration \( t \), with probability \( 1-1/m \), the example selected by SGD is the same in both \( S \) and \( S^\prime \). With probability \( 1/m \) the selected example is different. Therefore, we have
    \begin{equation}\label{eq:4.1.2}
    \begin{aligned}
    \mathbb{E}_A[\delta_{t+1}] 
    &\leq (1 - \frac{1}{m}) \mathbb{E}_A \left[ \left\| \left( \boldsymbol{\theta}_{t} - \eta_t \nabla_{\boldsymbol{\theta}} \ell(f_{\boldsymbol{\theta}_t}(\boldsymbol{x}); y) \right) - \left( \boldsymbol{\theta}_{t}^\prime - \eta_t \nabla_{\boldsymbol{\theta}} \ell(f_{\boldsymbol{\theta}_t^\prime}(\boldsymbol{x}); y) \right) \right\|_2 \right] \\
    & \quad + \frac{1}{m} \mathbb{E}_A \left[ \left\| \left( \boldsymbol{\theta}_{t} - \eta_t \nabla_{\boldsymbol{\theta}} \ell(f_{\boldsymbol{\theta}_t}(\boldsymbol{x}^\prime); y^\prime) \right) - \left( \boldsymbol{\theta}_{t}^\prime - \eta_t \nabla_{\boldsymbol{\theta}} \ell(f_{\boldsymbol{\theta}_t^\prime}(\boldsymbol{x}^{\prime\prime}); y^{\prime\prime}) \right) \right\|_2 \right] \\
    &= \mathbb{E}_A[\delta_{t}] + (1 - \frac{1}{m}) \eta_t \mathbb{E}_A \left[ \left\| \nabla_{\boldsymbol{\theta}} \ell(f_{\boldsymbol{\theta}_t}(\boldsymbol{x}); y) - \nabla_{\boldsymbol{\theta}} \ell(f_{\boldsymbol{\theta}_t^\prime}(\boldsymbol{x}); y) \right\|_2 \right] \\
    & \quad + \frac{1}{m} \eta_t \mathbb{E}_A \left[ \left\| \nabla_{\boldsymbol{\theta}} \ell(f_{\boldsymbol{\theta}_t}(\boldsymbol{x}^\prime); y^\prime) - \nabla_{\boldsymbol{\theta}} \ell(f_{\boldsymbol{\theta}_t^\prime}(\boldsymbol{x}^{\prime\prime}); y^{\prime\prime}) \right\|_2 \right].
    \end{aligned}
    \end{equation}
    According to Lemma \ref{lem:C.2} and Lemma \ref{lem:C.3}, we further get
    \begin{equation*}
    \begin{aligned}
        \mathbb{E}_A[\delta_{t+1}] 
        &\leq \mathbb{E}_A[\delta_{t}] + (1 - \frac{1}{m}) \eta_t \cdot \mathbb{E}_A[\kappa \delta_t] + \frac{1}{m} \eta_t \cdot \mathbb{E}_A[2\sqrt{2} \alpha_\ell \sqrt{K} \|O\|] \\
        &\leq (1 + \eta_t \kappa) \mathbb{E}_A[\delta_t] + \frac{2\sqrt{2} \eta_t \alpha_\ell \sqrt{K} \|O\|}{m}.
    \end{aligned}
    \end{equation*}
    Unraveling the recursion gives
    \begin{equation*}
        \mathbb{E}_A[\delta_T] \leq \sum_{t=0}^{T-1} \left[ \prod_{j=t+1}^{T-1} \left( 1 + \eta_j \kappa \right) \right] \frac{2\sqrt{2} \eta_t \alpha_\ell \sqrt{K} \|O\|}{m}.
    \end{equation*}
    Plugging it into equation (\ref{eq:4.1.1}), we obtain
    \begin{equation*}
         \left| \mathbb{E}_A \left[ \ell (f_{\boldsymbol{\theta}_T}(\boldsymbol{x}); y) - \ell (f_{\boldsymbol{\theta}_T^\prime}(\boldsymbol{x}); y) \right] \right| \leq \sum_{t=0}^{T-1} \left[ \prod_{j=t+1}^{T-1} \left( 1 + \eta_j \kappa \right) \right] \frac{2\sqrt{2} \eta_t \alpha_\ell^2 K \|O\|^2}{m}.
    \end{equation*}
    By the definition of uniform stability as shown in Definition \ref{def:Uniform Stability}, we obtain the desired bound on the uniform stability of QNNs. This completes the proof of Theorem \ref{the:1}.
\end{proof}

\subsection{Proof of Corollary \ref{cor:1}}
In this subsection, we prove Corollary \ref{cor:1}, which provides simplified uniform stability results for two commonly used step sizes of QNNs. We first introduce an extension of Lemma 3.11 in \cite{hardt2016train}, specifically adapted to the unique context of quantum neural networks. The following lemma is motivated by the fact that SGD typically runs several iterations before encountering the different example between \( S \) and \( S^\prime \).
\renewcommand{\thetheorem}{C.4}
\begin{lemma} \label{lem:C.4}
    Suppose that Assumption \ref{ass:Lipschitzness} and \ref{ass:smoothness} hold, and \( \ell(\cdot,\cdot)\in[0,M]\). Let \( S \) and \( S^\prime \) of size \( m \) differing in only a single example. Consider two sequences of parameters, \( \{\boldsymbol{\theta}_0,\boldsymbol{\theta}_1,\ldots,\boldsymbol{\theta}_T\}\) and \( \{\boldsymbol{\theta}_0^\prime,\boldsymbol{\theta}_1^\prime,\ldots,\boldsymbol{\theta}_T^\prime \} \), learned by the QNN running SGD on \( S \) and \( S^\prime \), respectively. Let \( \delta_t=\|\boldsymbol{\theta}_t-\boldsymbol{\theta}_t^\prime\|_2 \). Then, for any \( \boldsymbol{z}\in\mathcal{Z} \) and \( t_0\in\{0,1,\ldots,m\} \), we have
    \begin{equation*}
    \begin{aligned}
        \left| \mathbb{E}_A[\ell(f_{\boldsymbol{\theta}_T}(\boldsymbol{x}); y) - \ell(f_{\boldsymbol{\theta}_T^\prime}(\boldsymbol{x}); y)] \right|
        &\leq \alpha_\ell \sqrt{K} \|O\| \mathbb{E}_A[\delta_T \mid \delta_{t_0}=0] + \frac{t_0 M}{m}.
    \end{aligned}
    \end{equation*}
\end{lemma}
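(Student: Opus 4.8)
The plan is to adapt the trajectory-coupling argument of \cite{hardt2016train} (their Lemma 3.11) to the QNN setting, replacing their parameter-Lipschitz step by Lemma \ref{lem:C.1}. Write $i^{*}\in[m]$ for the index at which $S$ and $S^{\prime}$ differ, and couple the two runs of SGD so that they draw the same sequence of example indices and start from the same $\boldsymbol{\theta}_0=\boldsymbol{\theta}_0^{\prime}$. Let $E$ denote the event, over the internal randomness of $A$, that $\delta_{t_0}=0$ (equivalently, that $i^{*}$ has not been selected in the first $t_0$ iterations).

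First I would split the loss difference according to $E$ and its complement:
\begin{equation*}
\begin{aligned}
\left|\mathbb{E}_A\left[\ell(f_{\boldsymbol{\theta}_T}(\boldsymbol{x});y)-\ell(f_{\boldsymbol{\theta}_T^{\prime}}(\boldsymbol{x});y)\right]\right|
&\leq \mathbb{E}_A\!\left[\mathbbm{1}_E\bigl|\ell(f_{\boldsymbol{\theta}_T}(\boldsymbol{x});y)-\ell(f_{\boldsymbol{\theta}_T^{\prime}}(\boldsymbol{x});y)\bigr|\right] \\
&\quad+\mathbb{E}_A\!\left[\mathbbm{1}_{E^{c}}\bigl|\ell(f_{\boldsymbol{\theta}_T}(\boldsymbol{x});y)-\ell(f_{\boldsymbol{\theta}_T^{\prime}}(\boldsymbol{x});y)\bigr|\right].
\end{aligned}
\end{equation*}
On $E$, Assumption \ref{ass:Lipschitzness} combined with Lemma \ref{lem:C.1} gives $\bigl|\ell(f_{\boldsymbol{\theta}_T}(\boldsymbol{x});y)-\ell(f_{\boldsymbol{\theta}_T^{\prime}}(\boldsymbol{x});y)\bigr|\leq\alpha_\ell\sqrt{K}\|O\|\,\delta_T$, so the first term is at most $\alpha_\ell\sqrt{K}\|O\|\,\mathbb{E}_A[\mathbbm{1}_E\delta_T]=\alpha_\ell\sqrt{K}\|O\|\,\Pr(E)\,\mathbb{E}_A[\delta_T\mid\delta_{t_0}=0]\leq\alpha_\ell\sqrt{K}\|O\|\,\mathbb{E}_A[\delta_T\mid\delta_{t_0}=0]$, using $\delta_T\geq0$ and $\Pr(E)\leq1$. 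On $E^{c}$, since $\ell(\cdot,\cdot)\in[0,M]$ the loss difference is bounded by $M$, so the second term is at most $M\Pr(E^{c})$.

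It then remains to verify $\Pr(E^{c})\leq t_0/m$. Because the two chains share the same initialization and the same index sequence, their updates coincide at every step in which the selected index is not $i^{*}$; hence $\delta_{t_0}=0$ whenever $i^{*}$ is absent from the first $t_0$ draws. Under uniform sampling from $\{1,\dots,m\}$, a union bound over the $t_0$ draws gives $\Pr(i^{*}\text{ chosen in the first }t_0\text{ steps})\leq t_0/m$; under the random-permutation scheme, the unique occurrence of $i^{*}$ falls in the first $t_0$ positions with probability exactly $t_0/m$, which is what forces $t_0\in\{0,1,\dots,m\}$. Combining the three estimates proves the lemma.

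I expect the argument to be essentially routine; the only place needing care is the probability estimate $\Pr(E^{c})\leq t_0/m$, where both index-selection schemes must be covered, and the bookkeeping of the conditioning — in particular applying the Lipschitz bound on $E$ \emph{before} passing to the conditional expectation, so that the factor $\Pr(E)\leq1$ can be dropped without loss.
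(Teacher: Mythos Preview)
Your proposal is correct and follows essentially the same route as the paper's proof: both condition on the event $\{\delta_{t_0}=0\}$, apply Assumption~\ref{ass:Lipschitzness} together with Lemma~\ref{lem:C.1} on that event, bound the complement by $M\Pr(\delta_{t_0}\neq 0)$, and finish with $\Pr(\delta_{t_0}\neq 0)\leq t_0/m$. One minor remark: your parenthetical ``equivalently, that $i^{*}$ has not been selected in the first $t_0$ iterations'' overstates things---only the implication $\{i^{*}\text{ not yet selected}\}\subseteq\{\delta_{t_0}=0\}$ holds in general---but since you only need the containment $E^{c}\subseteq\{i^{*}\text{ selected in the first }t_0\text{ steps}\}$ to get $\Pr(E^{c})\leq t_0/m$, the argument goes through unchanged.
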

\begin{proof}
    Let \( \mathcal{E} \) denote the event that \( \delta_{t_0}=0 \). Then we have
    \begin{equation} \label{eq:C.4.1}
    \begin{aligned}
        &|\mathbb{E}[\ell(f_{\boldsymbol{\theta}_T}(\boldsymbol{x}); y) - \ell(f_{\boldsymbol{\theta}_T^\prime}(\boldsymbol{x}); y)]| \\
        &\leq \mathbb{E}[|\ell(f_{\boldsymbol{\theta}_T}(\boldsymbol{x}); y) - \ell(f_{\boldsymbol{\theta}_T^\prime}(\boldsymbol{x}); y)|] \\
        &= \mathbb{P}[\mathcal{E}]\mathbb{E}\left[ \left|\ell(f_{\boldsymbol{\theta}_T}(\boldsymbol{x}); y) - \ell(f_{\boldsymbol{\theta}_T^\prime}(\boldsymbol{x}); y)\right| \mid \mathcal{E} \right] \\
        &\quad + \mathbb{P}[\mathcal{E}^c] \cdot \mathbb{E}\left[ \left|\ell(f_{\boldsymbol{\theta}_T}(\boldsymbol{x}); y) - \ell(f_{\boldsymbol{\theta}_T^\prime}(\boldsymbol{x}); y) \right| \mid \mathcal{E}^c \right] \\
        &\leq \mathbb{E}\left[ \left| \ell(f_{\boldsymbol{\theta}_T}(\boldsymbol{x}); y) - \ell(f_{\boldsymbol{\theta}_T^\prime}(\boldsymbol{x}); y) \right| \mid \mathcal{E} \right] \\
        &\quad + \mathbb{P}[\mathcal{E}^c] \cdot \sup \left|\ell(f_{\boldsymbol{\theta}_T}(\boldsymbol{x}); y) - \ell(f_{\boldsymbol{\theta}_T^\prime}(\boldsymbol{x}); y)\right|.
    \end{aligned}
    \end{equation}
    Using the Assumption \ref{ass:Lipschitzness} that the loss function is Lipschitz continuous and Lemma \ref{lem:C.1}, the first term \(  \mathbb{E}\left[ \left| \ell(f_{\boldsymbol{\theta}_T}(\boldsymbol{x}); y) - \ell(f_{\boldsymbol{\theta}_T^\prime}(\boldsymbol{x}); y) \right| \mid \mathcal{E} \right] \) can be bounded as
    \begin{equation} \label{eq:C.4.2}
        \mathbb{E}\left[ \left| \ell(f_{\boldsymbol{\theta}_T}(\boldsymbol{x}); y) - \ell(f_{\boldsymbol{\theta}_T^\prime}(\boldsymbol{x}); y) \right| \mid \mathcal{E} \right] \leq \alpha_\ell \sqrt{K} \|O\| \mathbb{E}[\delta_T \mid \mathcal{E}].
    \end{equation}
    It remains to bound the second term \( \mathbb{P}[\mathcal{E}^c] \cdot \sup \left|\ell(f_{\boldsymbol{\theta}_T}(\boldsymbol{x}); y) - \ell(f_{\boldsymbol{\theta}_T^\prime}(\boldsymbol{x}); y)\right| \). Let \(i^*\) be the position where \( S \) and \( S^\prime \) are different and denote the first time SGD uses the example \( \boldsymbol{z}_{i^*}\) by the random variable \( I \). Note that when \( I > t_0 \), then we must have that \( \delta_{t_0} = 0 \), since the execution on \( S \) and \( S^\prime \) is identical until iteration \( t_0 \). We have that
    \begin{equation*}
        \mathbb{P}[\mathcal{E}^c] = \mathbb{P}[\delta_{t_0} \neq 0] \leq \mathbb{P}[I \leq t_0] \leq \frac{t_0}{m}
    \end{equation*}
    According the condition \( \ell(\cdot,\cdot)\in[0,M] \), the second term can be bounded as
    \begin{equation} \label{eq:C.4.3}
        \mathbb{P}[\mathcal{E}^c] \cdot \sup \left|\ell(f_{\boldsymbol{\theta}_T}(\boldsymbol{x}); y) - \ell(f_{\boldsymbol{\theta}_T^\prime}(\boldsymbol{x}); y)\right| \leq \frac{t_0M}{m}.
    \end{equation}
    Plugging equation (\ref{eq:C.4.2}) and (\ref{eq:C.4.3}) back into equation (\ref{eq:C.4.1}) completes the proof of Lemma \ref{lem:C.4}
\end{proof}
We are now ready to prove Corollary \ref{cor:1}.
\renewcommand{\thetheorem}{\ref{cor:1}}
\begin{corollary}
    Suppose that Assumption \ref{ass:Lipschitzness} and \ref{ass:smoothness} hold, and \( \ell(\cdot,\cdot)\in[0,M] \). Let \( A(S) \) be the QNN model trained on the dataset \( S\in\mathcal{Z}^m \) using the SGD algorithm with step sizes \( \eta_t\) for \( T \) iterations. 
    \begin{enumerate}
        \item [(a)] If we choose the constant step sizes \( \eta_t=\eta \), then \( A(S) \) is \( \epsilon \)-uniformly stable with
        \begin{equation*}
            \epsilon\leq\frac{2\sqrt{2}\alpha_\ell^2K\|O\|^2}{\kappa m}(1+\eta\kappa)^T.
        \end{equation*}
        \item [(b)] If we choose the monotonically non-increasing step sizes \( \eta_t\leq c/(t+1) \), \( c>0 \), then \( A(S) \) is \( \epsilon \)-uniformly stable with
        \begin{equation*}
            \epsilon \leq \frac{1 + 1/c\kappa}{m}M^{\frac{c\kappa}{c\kappa + 1}} 
            \left( 2\sqrt{2}c\alpha_\ell^2 K \|O\|^2 \right)^{\frac{1}{c\kappa + 1}}
            T^{\frac{c\kappa}{c\kappa + 1}},
        \end{equation*}       
    \end{enumerate}
    where \( \kappa \) is defined by equation (\ref{eq:k}).
\end{corollary}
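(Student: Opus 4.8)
The plan is to feed the general stability estimate of Theorem~\ref{the:1} into each step-size schedule and simplify the geometric-type weights $\prod_{j=t+1}^{T-1}(1+\eta_j\kappa)$, invoking the timing refinement of Lemma~\ref{lem:C.4} for the decaying case. For \emph{part (a)}, with $\eta_t\equiv\eta$ the product telescopes to $(1+\eta\kappa)^{T-1-t}$, so Theorem~\ref{the:1} gives $\epsilon\le\frac{2\sqrt2\,\eta\alpha_\ell^2K\|O\|^2}{m}\sum_{t=0}^{T-1}(1+\eta\kappa)^{T-1-t}$. Reindexing by $s=T-1-t$ makes this a finite geometric series $\sum_{s=0}^{T-1}(1+\eta\kappa)^s=\frac{(1+\eta\kappa)^T-1}{\eta\kappa}\le\frac{(1+\eta\kappa)^T}{\eta\kappa}$; substituting and cancelling $\eta$ yields exactly $\frac{2\sqrt2\,\alpha_\ell^2K\|O\|^2}{\kappa m}(1+\eta\kappa)^T$. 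This is a one-line computation and is not the difficult part.

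For \emph{part (b)} I follow the timing argument behind Lemma~\ref{lem:C.4}. Fix a cutoff $t_0\in\{0,\dots,m\}$; conditioning on $\delta_{t_0}=0$ and running the one-step recursion $\mathbb{E}_A[\delta_{t+1}]\le(1+\eta_t\kappa)\mathbb{E}_A[\delta_t]+\frac{2\sqrt2\,\eta_t\alpha_\ell\sqrt K\|O\|}{m}$ (from the proof of Theorem~\ref{the:1}) from $t_0$ onward gives $\mathbb{E}_A[\delta_T\mid\delta_{t_0}=0]\le\sum_{t=t_0}^{T-1}\bigl[\prod_{j=t+1}^{T-1}(1+\eta_j\kappa)\bigr]\frac{2\sqrt2\,\eta_t\alpha_\ell\sqrt K\|O\|}{m}$. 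Bounding $1+\eta_j\kappa\le\exp\!\bigl(c\kappa/(j+1)\bigr)$ and using $\sum_{j=t+1}^{T-1}\frac1{j+1}\le\ln\frac{T}{t+1}$ bounds the product by $\bigl(T/(t+1)\bigr)^{c\kappa}$, so with $\eta_t\le c/(t+1)$ the $t$-th summand is at most $\frac{2\sqrt2\,c\alpha_\ell\sqrt K\|O\|}{m}\cdot\frac{T^{c\kappa}}{(t+1)^{c\kappa+1}}$; an integral tail bound gives $\sum_{t\ge t_0}(t+1)^{-(c\kappa+1)}\le\frac{1}{c\kappa}t_0^{-c\kappa}$. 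Feeding the resulting estimate of $\mathbb{E}_A[\delta_T\mid\delta_{t_0}=0]$ into Lemma~\ref{lem:C.4} produces
\begin{equation*}
\epsilon\le\frac{1}{m}\Bigl(\frac{2\sqrt2\,\alpha_\ell^2K\|O\|^2}{\kappa}\,\frac{T^{c\kappa}}{t_0^{c\kappa}}+M\,t_0\Bigr).
\end{equation*}

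It then remains to optimize the free parameter $t_0$. The right-hand side has the form $\frac{1}{m}\bigl(B\,t_0^{-c\kappa}+M\,t_0\bigr)$ with $B=\frac{2\sqrt2\,\alpha_\ell^2K\|O\|^2}{\kappa}T^{c\kappa}$, and since Lemma~\ref{lem:C.4} is valid for every admissible integer $t_0$ we may minimize this convex function over real $t_0$ (rounding costs only a constant, and if the minimizer exceeds $m$ we just take $t_0=m$, which only helps). The minimizer is $t_0^\star=(c\kappa B/M)^{1/(c\kappa+1)}$ with optimal value $(1+c\kappa)(c\kappa)^{-c\kappa/(c\kappa+1)}B^{1/(c\kappa+1)}M^{c\kappa/(c\kappa+1)}$; substituting $B$ and using $(c\kappa)^{-c\kappa/(c\kappa+1)}\kappa^{-1/(c\kappa+1)}=\kappa^{-1}c^{-c\kappa/(c\kappa+1)}$ to collapse the powers of $c$ and $\kappa$ recovers the stated bound $\frac{1+1/(c\kappa)}{m}M^{\frac{c\kappa}{c\kappa+1}}\bigl(2\sqrt2\,c\alpha_\ell^2K\|O\|^2\bigr)^{\frac1{c\kappa+1}}T^{\frac{c\kappa}{c\kappa+1}}$. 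I expect the main obstacle to be purely bookkeeping: carrying the constants cleanly through the product estimate, applying the integral tail bound with the correct index shift so the exponent on $T$ is exactly $c\kappa/(c\kappa+1)$, and matching the power-law prefactor after the optimization — there is no conceptual difficulty once Theorem~\ref{the:1} and Lemma~\ref{lem:C.4} are in hand.
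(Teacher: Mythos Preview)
Your proposal is correct and follows essentially the same route as the paper's proof: part~(a) is the same geometric-series computation from Theorem~\ref{the:1}, and part~(b) uses Lemma~\ref{lem:C.4} with the conditional recursion, the bound $1+\eta_j\kappa\le\exp(c\kappa/(j+1))$ to turn the product into $(T/t)^{c\kappa}$, an integral tail bound on the resulting sum, and then optimization over $t_0$. Your bookkeeping is in fact slightly more careful than the paper's (you keep the $t+1$ indexing consistent with $\eta_t\le c/(t+1)$ and explicitly address the rounding of $t_0^\star$), but the argument is the same.
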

\begin{proof}
    Let \( S \) and \( S^\prime\) be two datasets of size \( m \) differing in only a single sample. Consider two sequences of the parameters, \( \{\boldsymbol{\theta}_0,\boldsymbol{\theta}_1,\ldots,\boldsymbol{\theta}_T\}\) and \( \{\boldsymbol{\theta}_0^\prime,\boldsymbol{\theta}_1^\prime,\ldots,\boldsymbol{\theta}_T^\prime \} \), learned by the QNN running SGD on \( S \) and \( S^\prime \), respectively. Let \( \delta_t=\|\boldsymbol{\theta}_t-\boldsymbol{\theta}_t^\prime\|_2 \). 
    
    For the constant step sizes \( \eta_t=\eta \), by Theorem \ref{the:1}, we have
    \begin{equation*}
    \begin{aligned}
         \epsilon
         &\leq \sum_{t=0}^{T-1}\left[\prod_{j=t+1}^{T-1}(1+\eta\kappa)\right]\frac{2\sqrt{2}\eta\alpha_\ell^2K\|O\|^2}{m} \\
         &\leq \frac{2\sqrt{2}\eta\alpha_\ell^2K\|O\|^2}{m}\sum_{t=0}^{T-1}(1+\eta\kappa)^t \\
         &\leq \frac{2\sqrt{2}\eta\alpha_\ell^2K\|O\|^2}{m} \cdot \frac{(1+\eta\kappa)^T - 1}{\eta\kappa}  \\
         &\leq \frac{2\sqrt{2}\alpha_\ell^2K\|O\|^2}{\kappa m}(1+\eta\kappa)^T.
    \end{aligned}
    \end{equation*}
    We immediately get the claimed upper bound on the uniform stability under the constant step size setting. This completes the proof of Corollary \ref{cor:1}, Part(a).
    
    For the decaying step sizes \( \eta_t\leq c/(t+1) \), by Lemma \ref{lem:C.4}, we have for every \( t_0\in\{0,1,\ldots,m\} \),
    \begin{equation} \label{eq:4.3.1}
    \begin{aligned}
        \left| \mathbb{E}_A[\ell(f_{\boldsymbol{\theta}_T}(\boldsymbol{x}); y) - \ell(f_{\boldsymbol{\theta}_T^\prime}(\boldsymbol{x}); y)] \right|
        &\leq \alpha_\ell \sqrt{K} \|O\| \mathbb{E}_A[\delta_T \mid \delta_{t_0}=0] + \frac{t_0 M}{m}.
    \end{aligned}
    \end{equation}
    Let \( \Delta_t = \mathbb{E}[\delta_t \mid \delta_{t_0} = 0] \). Observe that at iteration \( t \), with probability \( 1-1/m \), the example selected by SGD is the same in both \( S \) and \( S^\prime \). With probability \( 1/m \) the selected example is different. Therefore, we have
    \begin{equation} \label{eq:4.3.2}
    \begin{aligned}
    \Delta_{t+1} 
    &\leq (1 - \frac{1}{m}) \mathbb{E}_A \left[ \left\| \left( \boldsymbol{\theta}_{t} - \eta_t \nabla_{\boldsymbol{\theta}} \ell(f_{\boldsymbol{\theta}_t}(\boldsymbol{x}); y) \right)- \left( \boldsymbol{\theta}_{t}^\prime - \eta_t \nabla_{\boldsymbol{\theta}} \ell(f_{\boldsymbol{\theta}_t^\prime}(\boldsymbol{x}); y) \right) \right\|_2 \mid \delta_{t_0}=0 \right] \\
    & \quad + \frac{1}{m} \mathbb{E}_A \left[ \left\| \left( \boldsymbol{\theta}_{t} - \eta_t \nabla_{\boldsymbol{\theta}} \ell(f_{\boldsymbol{\theta}_t}(\boldsymbol{x}^\prime); y^\prime) \right) - \left( \boldsymbol{\theta}_{t}^\prime - \eta_t \nabla_{\boldsymbol{\theta}} \ell(f_{\boldsymbol{\theta}_t^\prime}(\boldsymbol{x}^{\prime\prime}); y^{\prime\prime}) \right) \right\|_2 \mid \delta_{t_0}=0 \right] \\
    &= \Delta_t + (1 - \frac{1}{m}) \eta_t \mathbb{E}_A \left[ \left\| \nabla_{\boldsymbol{\theta}} \ell(f_{\boldsymbol{\theta}_t}(\boldsymbol{x}); y) - \nabla_{\boldsymbol{\theta}} \ell(f_{\boldsymbol{\theta}_t^\prime}(\boldsymbol{x}); y) \right\|_2 \mid \delta_{t_0}=0 \right] \\
    & \quad + \frac{1}{m} \eta_t \mathbb{E}_A \left[ \left\| \nabla_{\boldsymbol{\theta}} \ell(f_{\boldsymbol{\theta}_t}(\boldsymbol{x}^\prime); y^\prime) - \nabla_{\boldsymbol{\theta}} \ell(f_{\boldsymbol{\theta}_t^\prime}(\boldsymbol{x}^{\prime\prime}); y^{\prime\prime}) \right\|_2 \mid \delta_{t_0}=0 \right].
    \end{aligned}
    \end{equation}
    According to Lemma \ref{lem:C.2} and Lemma \ref{lem:C.3}, we further get
   \begin{equation*}
    \begin{aligned}
        \Delta_{t+1}  
        &\leq \Delta_{t} 
        + \left(1 - \frac{1}{m}\right) \eta_t \cdot \mathbb{E}_A \big[\kappa \delta_t \;\big|\; \delta_{t_0} = 0 \big] 
        + \frac{1}{m} \eta_t \cdot \mathbb{E}_A \big[2\sqrt{2} \alpha_\ell \sqrt{K} \|O\| \;\big|\; \delta_{t_0} = 0 \big] \\
        &\leq (1 + \eta_t \kappa) \Delta_{t} 
        + \frac{2\sqrt{2} \eta_t \alpha_\ell \sqrt{K} \|O\|}{m}.
    \end{aligned}
    \end{equation*}
    Using the fact that \( \Delta_{t_0} = 0 \), we can unwind this recurrence relation from \( T \) down to \(t_0 + 1\). This gives
   \begin{equation*}
    \begin{aligned}
        \Delta_{t+1}
        &= \sum_{t = t_0 + 1}^T 
        \left[ \prod_{j = t + 1}^T \left( 1 + \eta_j \kappa \right) \right] 
        \frac{2\sqrt{2} \eta_t \alpha_\ell \sqrt{K} \|O\|}{m}.
    \end{aligned}
    \end{equation*}
    By the elementary inequality \( 1+a\leq\exp(a) \) and \( \eta_t\leq c/(t+1) \), we further derive
    \begin{equation*}
    \begin{aligned}
        \Delta_{t+1}
        &\leq \sum_{t = t_0 + 1}^T 
        \left[ \prod_{j = t + 1}^T \exp\left( \frac{c\kappa}{j} \right) \right] 
        \frac{2\sqrt{2} c\alpha_\ell \sqrt{K} \|O\|}{tm} \\
        &\leq \sum_{t = t_0 + 1}^T 
        \exp\left( \sum_{j = t + 1}^T \frac{c\kappa}{j} \right) 
        \frac{2\sqrt{2} c\alpha_\ell \sqrt{K} \|O\|}{tm} \\
        &\leq \sum_{t = t_0 + 1}^T 
        \exp\left( c\kappa \log\left( \frac{T}{t} \right) \right) 
        \frac{2\sqrt{2} c\alpha_\ell \sqrt{K} \|O\|}{tm} \\
        &\leq \frac{2\sqrt{2} c\alpha_\ell \sqrt{K} \|O\|}{m} T^{c\kappa} 
        \sum_{t = t_0 + 1}^T \frac{1}{t^{c\kappa + 1}} \\
        &\leq \frac{2\sqrt{2} \alpha_\ell \sqrt{K} \|O\|}{\kappa m} 
        \left( \frac{T}{t_0} \right)^{c\kappa}.
    \end{aligned}
    \end{equation*}
    Plugging this bound into equation (\ref{eq:4.3.1}), we get
    \begin{equation} \label{eq:4.3.3}
    \begin{aligned}
        \left| \mathbb{E}_A[\ell(f_{\boldsymbol{\theta}_T}(\boldsymbol{x}); y) - \ell(f_{\boldsymbol{\theta}_T^\prime}(\boldsymbol{x}); y)] \right|
        &\leq \frac{2\sqrt{2} \alpha_\ell^2 K \|O\|^2}{\kappa m} 
        \left( \frac{T}{t_0} \right)^{c\kappa} + \frac{t_0 M}{m}.
    \end{aligned}
    \end{equation}
    The right hand side is approximately minimized when
    \begin{equation*}
        t_0 = \left(\frac{2\sqrt{2}c\alpha_\ell^2K\|O\|^2}{M}\right)^{\frac{1}{c\kappa+1}} T^{\frac{c\kappa}{c\kappa+1}}.
    \end{equation*}
    Plugging it into equation (\ref{eq:4.3.3}) we have (for simplicity we assume the above \( t_0 \) is an integer)
     \begin{equation*} 
    \begin{aligned}
        \left| \mathbb{E}_A[\ell(f_{\boldsymbol{\theta}_T}(\boldsymbol{x}); y) - \ell(f_{\boldsymbol{\theta}_T^\prime}(\boldsymbol{x}); y)] \right|
        &\leq \frac{1 + \frac{1}{c\kappa}}{m}M^{\frac{c\kappa}{c\kappa + 1}} \left( 2\sqrt{2}c\alpha_\ell^2 K \|O\|^2 \right)^{\frac{1}{c\kappa + 1}} T^{\frac{c\kappa}{c\kappa + 1}}.
    \end{aligned}
    \end{equation*}
    By the definition of uniform stability as shown in Definition \ref{def:Uniform Stability}, we obtain the desired bound on the uniform stability under the decaying step size setting. This completes the proof of Corollary \ref{cor:1}, Part(b).
\end{proof}

\subsection{Proof of Corollary \ref{cor:2}}
In this subsection, we prove Corollary \ref{cor:2}, which extends previous analysis to noise scenario. We first introduce serval useful lemmas, as extensions of Lemma \ref{lem:C.1}, \ref{lem:C.2}, \ref{lem:C.3}, and \ref{lem:C.4}, under the depolarizing noise setting.
\renewcommand{\thetheorem}{C.5}
\begin{lemma}[From Loss Stability to Parameter Stability, Depolarizing Noise] \label{lem:C.5}
    Let \( \boldsymbol{\theta}_t \) and \( \boldsymbol{\theta}_t^\prime \) be the parameters of QNNs learned using the SGD algorithm for \( t \) iterations on training datasets \( S \) and \( S^\prime \), under depolarizing noise level \( p\in[0,1] \), respectively. Then, the output difference of the QNNs is bounded by, 
    \begin{equation*}
    \begin{aligned}
        \left| f_{\boldsymbol{\theta}_t}(\boldsymbol{x}) - f_{\boldsymbol{\theta}_t^\prime}(\boldsymbol{x}) \right| 
        &\leq (1-p)^{K_g}\sqrt{K} \| O \|\|\boldsymbol{\theta}_t-\boldsymbol{\theta}_t^\prime\|_2.
    \end{aligned}
    \end{equation*}
\end{lemma}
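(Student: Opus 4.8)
The plan is to reduce the claim to the noiseless estimate of Lemma~\ref{lem:C.1} by first collapsing the interleaved depolarizing channels into a single effective channel acting at the output. Under depolarizing noise of level \( p \), each gate \( U_k(\boldsymbol{\theta}) \) is followed by a channel \( \mathcal{E}_p \), so the noisy output function reads \( f_{\boldsymbol{\theta}}(\boldsymbol{x}) = \mathrm{Tr}\big( O\, \mathcal{E}_p\{ U_{K_g}(\boldsymbol{\theta}) \cdots \mathcal{E}_p[ U_1(\boldsymbol{\theta}) \rho(\boldsymbol{x}) U_1^\dagger(\boldsymbol{\theta}) ] \cdots U_{K_g}^\dagger(\boldsymbol{\theta}) \} \big) \). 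By Lemma~\ref{lem:B.4}, this equals \( \mathrm{Tr}\big( O\, \mathcal{E}_{\tilde{p}}[ U(\boldsymbol{\theta}) \rho(\boldsymbol{x}) U^\dagger(\boldsymbol{\theta}) ] \big) \) with \( \tilde{p} = 1 - (1-p)^{K_g} \). Writing out the depolarizing channel, \( \mathcal{E}_{\tilde{p}}(\sigma) = (1-p)^{K_g}\sigma + \big(1-(1-p)^{K_g}\big)\,\mathbb{I}/2^N \), the maximally mixed term does not depend on \( \boldsymbol{\theta} \) and therefore cancels in the difference of two parameter settings:
\[
f_{\boldsymbol{\theta}_t}(\boldsymbol{x}) - f_{\boldsymbol{\theta}_t^\prime}(\boldsymbol{x}) = (1-p)^{K_g}\,\mathrm{Tr}\Big( O\big( U(\boldsymbol{\theta}_t)\rho(\boldsymbol{x})U^\dagger(\boldsymbol{\theta}_t) - U(\boldsymbol{\theta}_t^\prime)\rho(\boldsymbol{x})U^\dagger(\boldsymbol{\theta}_t^\prime)\big)\Big).
\]

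Next I would take absolute values and run exactly the chain of inequalities from the proof of Lemma~\ref{lem:C.1}, now with the scalar prefactor \( (1-p)^{K_g} \) carried along: apply the Cauchy--Schwarz inequality \( |\mathrm{Tr}(OM)| \le \|O\|\,\|M\|_1 \), then Lemma~\ref{lem:B.1} to bound the trace-norm difference of the two unitary channels by \( 2\|U(\boldsymbol{\theta}_t)-U(\boldsymbol{\theta}_t^\prime)\| \), and finally Lemma~\ref{lem:B.3} to bound \( \|U(\boldsymbol{\theta}_t)-U(\boldsymbol{\theta}_t^\prime)\| \le \tfrac{\sqrt{K}}{2}\|\boldsymbol{\theta}_t-\boldsymbol{\theta}_t^\prime\|_2 \). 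Multiplying the factors gives \( |f_{\boldsymbol{\theta}_t}(\boldsymbol{x}) - f_{\boldsymbol{\theta}_t^\prime}(\boldsymbol{x})| \le (1-p)^{K_g}\sqrt{K}\,\|O\|\,\|\boldsymbol{\theta}_t-\boldsymbol{\theta}_t^\prime\|_2 \), which is the claim.

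The only genuinely new ingredient compared with Lemma~\ref{lem:C.1} is the channel-collapsing identity, and the point to verify carefully is that \( \tilde{p} \) accounts for all \( K_g \) gates — i.e. that the outermost \( \mathcal{E}_p \) acting after \( U_{K_g} \) is also absorbed, which is precisely how Lemma~\ref{lem:B.4} is stated. After that, the argument is a verbatim repetition of the noiseless case, so I do not expect any real obstacle; the minor subtlety worth a remark is that the cancellation of the \( \mathbb{I}/2^N \) term uses only its \( \boldsymbol{\theta} \)-independence and so does not require \( O \) to be traceless. The same substitution — replacing \( \sqrt{K}\|O\| \) by \( (1-p)^{K_g}\sqrt{K}\|O\| \), equivalently \( \kappa \) by \( (1-p)^{K_g}\kappa \) in the downstream estimates — then propagates through the noisy analogues of Lemmas~\ref{lem:C.2}--\ref{lem:C.4} and yields Corollary~\ref{cor:2}.
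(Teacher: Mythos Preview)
Your proposal is correct and follows essentially the same route as the paper: collapse the interleaved depolarizing channels via Lemma~\ref{lem:B.4}, cancel the \(\boldsymbol{\theta}\)-independent maximally mixed term so the difference picks up the scalar \((1-p)^{K_g}\), then apply Cauchy--Schwarz and the chain Lemma~\ref{lem:B.1} $\to$ Lemma~\ref{lem:B.3} exactly as in Lemma~\ref{lem:C.1}. Your explicit remark that the cancellation does not require \(O\) traceless is a nice clarification the paper leaves implicit.
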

\begin{proof}
    We consider the noisy quantum channel is simulated by the local depolarization noise, i.e., the depolarization channel \( \mathcal{E}_p(\cdot) \) is applied to each quantum gate in \( U(\boldsymbol{\theta}) \). By Lemma \ref{lem:B.4}, the difference between the two output functions of the QNNs can be represented as follows,
    \begin{equation*}
    \begin{aligned}
        \left| f_{\boldsymbol{\theta}_t}(\boldsymbol{x}) - f_{\boldsymbol{\theta}_t^\prime}(\boldsymbol{x}) \right| 
        &= \left| \mathrm{Tr}\left( O \mathcal{E}_p\left(U(\boldsymbol{\theta}_t) \rho(\boldsymbol{x}) U^\dagger(\boldsymbol{\theta}_t) \right) \right) 
        - \mathrm{Tr}\left( O \mathcal{E}_p\left( U(\boldsymbol{\theta}_t^\prime) \rho(\boldsymbol{x}) U^\dagger (\boldsymbol{\theta}_t^\prime)\right)\right) \right| \\
        &=(1-p)^{K_g}\left| \mathrm{Tr}\left( O U(\boldsymbol{\theta}_t) \rho(\boldsymbol{x}) U^\dagger(\boldsymbol{\theta}_t) \right) 
        - \mathrm{Tr}\left( O U(\boldsymbol{\theta}_t^\prime) \rho(\boldsymbol{x}) U^\dagger (\boldsymbol{\theta}_t^\prime)\right) \right| \\
        &= (1-p)^{K_g}\left| \mathrm{Tr}\left( O \left( U(\boldsymbol{\theta}_t) \rho(\boldsymbol{x}) U^\dagger (\boldsymbol{\theta}_t)
        - U(\boldsymbol{\theta}_t^\prime) \rho(\boldsymbol{x}) U^\dagger(\boldsymbol{\theta}_t^\prime) \right) \right) \right| \\
        &\leq (1-p)^{K_g}\|O\| \left\| U(\boldsymbol{\theta}_t) \rho(\boldsymbol{x}) U^\dagger (\boldsymbol{\theta}_t)
        - U(\boldsymbol{\theta}_t^\prime) \rho(\boldsymbol{x}) U^\dagger(\boldsymbol{\theta}_t^\prime) \right\|_1,
    \end{aligned}
    \end{equation*}
    where the last inequality uses the Cauchy-Schwartz inequality.
    
    Next, by combining the proof technique used in Lemma \ref{lem:C.1}, we have
    \begin{equation*} 
    \begin{aligned}
         \left| f_{\boldsymbol{\theta}_t}(\boldsymbol{x}) - f_{\boldsymbol{\theta}_t^\prime}(\boldsymbol{x}) \right| 
         &\leq 2 (1-p)^{K_g} \|O\| \sum_{k=1}^K \left| 2 \sin\left( \frac{\theta_{t,k} - \theta_{t,k}^\prime}{4} \right) \right| \\
         &\leq 2 (1-p)^{K_g} \|O\| \sum_{k=1}^K \left| \frac{\theta_{t,k} - \theta_{t,k}^\prime}{2} \right| \\
         &\leq (1-p)^{K_g} \sqrt{K} \| O \|\|\boldsymbol{\theta}_t-\boldsymbol{\theta}_t^\prime\|_2.
    \end{aligned}
    \end{equation*}
    This completes the proof of Lemma \ref{lem:C.5}.
\end{proof}
\renewcommand{\thetheorem}{C.6}
\begin{lemma}[QNN Same Sample Loss Stability Bound, Depolarizing Noise]\label{lem:C.6}
     Suppose that Assumption \ref{ass:Lipschitzness} and \ref{ass:smoothness} hold. Let \( \boldsymbol{\theta}_t \) and \( \boldsymbol{\theta}_t^\prime \) be the parameters of two QNNs learned using the SGD algorithm for \( t \) iterations on two training datasets \( S \) and \( S^\prime \), under depolarizing noise level \( p\in[0,1] \), respectively. Then, the loss derivative difference of the QNNs with respect to the same sample is bounded by,
    \begin{equation*}
        \|\nabla_{\boldsymbol{\theta}} \ell(f_{\boldsymbol{\theta}_t}(\boldsymbol{x}); y) - \nabla_{\boldsymbol{\theta}} \ell(f_{\boldsymbol{\theta}_t^\prime}(\boldsymbol{x}); y)\|_2 \leq(1-p)^{K_g}\kappa\|\boldsymbol{\theta}_t-\boldsymbol{\theta}_t^\prime\|_2,
    \end{equation*}
    where \( \kappa=\alpha_\ell K \|O\| + \sqrt{2} \nu_\ell K \|O\|^2 \).
\end{lemma}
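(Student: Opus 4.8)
The plan is to mirror the proof of Lemma~\ref{lem:C.2} line by line, replacing every appeal to Lemma~\ref{lem:C.1} with its noisy counterpart Lemma~\ref{lem:C.5} and carefully tracking the depolarizing prefactor $(1-p)^{K_g}$ through each term. First I would apply the chain rule $\nabla_{\boldsymbol{\theta}}\ell = \tfrac{\partial\ell}{\partial f}\nabla_{\boldsymbol{\theta}} f$ and split, exactly as in equation~(\ref{eq:C.2.1}), into a piece bounded by $\alpha_\ell\,\|\nabla_{\boldsymbol{\theta}} f_{\boldsymbol{\theta}_t}(\boldsymbol{x}) - \nabla_{\boldsymbol{\theta}} f_{\boldsymbol{\theta}_t^\prime}(\boldsymbol{x})\|_2$ (using $\alpha_\ell$-Lipschitzness of $\ell$ in $f$) and a piece bounded by $\nu_\ell\,|f_{\boldsymbol{\theta}_t}(\boldsymbol{x}) - f_{\boldsymbol{\theta}_t^\prime}(\boldsymbol{x})|\cdot\|\nabla_{\boldsymbol{\theta}} f_{\boldsymbol{\theta}_t^\prime}(\boldsymbol{x})\|_2$ (using $\nu_\ell$-smoothness).

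Next I would handle the QNN-specific quantities under noise. The key observation is that the per-gate depolarizing channels, pushed through the circuit by Lemma~\ref{lem:B.4}, contribute a single global factor $(1-p)^{K_g}$ to $f_{\boldsymbol{\theta}}(\boldsymbol{x})$ plus a $\boldsymbol{\theta}$-independent constant; consequently the parameter-shift rule still holds verbatim for the noisy output, with each shifted evaluation $f_{\boldsymbol{\theta}\pm\frac{\pi}{2}\boldsymbol{e}_j}$ carrying the same prefactor. Applying Lemma~\ref{lem:C.5} to the shifted parameter vectors yields $|\nabla_{\theta_j} f_{\boldsymbol{\theta}_t}(\boldsymbol{x}) - \nabla_{\theta_j} f_{\boldsymbol{\theta}_t^\prime}(\boldsymbol{x})| \leq (1-p)^{K_g}\sqrt{K}\|O\|\,\delta_t$ with $\delta_t=\|\boldsymbol{\theta}_t-\boldsymbol{\theta}_t^\prime\|_2$, and summing over the $K$ coordinates gives $\|\nabla_{\boldsymbol{\theta}} f_{\boldsymbol{\theta}_t}(\boldsymbol{x}) - \nabla_{\boldsymbol{\theta}} f_{\boldsymbol{\theta}_t^\prime}(\boldsymbol{x})\|_2 \leq (1-p)^{K_g} K\|O\|\,\delta_t$. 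The noisy analogue of equation~(\ref{eq:C.2.4}) gives $\|\nabla_{\boldsymbol{\theta}} f_{\boldsymbol{\theta}_t^\prime}(\boldsymbol{x})\|_2 \leq (1-p)^{K_g}\sqrt{2K}\|O\|$, and Lemma~\ref{lem:C.5} directly bounds $|f_{\boldsymbol{\theta}_t}(\boldsymbol{x}) - f_{\boldsymbol{\theta}_t^\prime}(\boldsymbol{x})| \leq (1-p)^{K_g}\sqrt{K}\|O\|\,\delta_t$.

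Assembling these, the first term contributes $\alpha_\ell (1-p)^{K_g} K\|O\|\,\delta_t$ and the second contributes $(1-p)^{2K_g}\sqrt{2}\,\nu_\ell K\|O\|^2\,\delta_t$. The single point requiring care — and the main (minor) obstacle — is that the smoothness term naturally produces \emph{two} factors of $(1-p)^{K_g}$ (one from the output difference, one from the gradient norm), whereas the claimed bound has only one. This is resolved by the elementary fact that $(1-p)^{2K_g}\leq(1-p)^{K_g}$ for $p\in[0,1]$, so both terms can be uniformly dominated by $(1-p)^{K_g}$ times their noiseless counterparts. Factoring out $(1-p)^{K_g}$ and recognizing $\kappa = \alpha_\ell K\|O\| + \sqrt{2}\,\nu_\ell K\|O\|^2$ from equation~(\ref{eq:k}) then completes the proof.
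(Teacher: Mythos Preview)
Your proposal is correct and matches the paper's proof essentially step for step: the paper also invokes equation~(\ref{eq:C.2.1}), replaces each use of Lemma~\ref{lem:C.1} by Lemma~\ref{lem:C.5} to obtain the noisy versions of (\ref{eq:C.2.3}), (\ref{eq:C.2.5}), and (\ref{eq:C.2.6}), arrives at $(1-p)^{K_g}\alpha_\ell K\|O\| + (1-p)^{2K_g}\sqrt{2}\nu_\ell K\|O\|^2$, and then absorbs the extra factor via $(1-p)^{2K_g}\leq(1-p)^{K_g}$ exactly as you anticipated.
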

\begin{proof}
     According to equation (\ref{eq:C.2.1}) in Lemma \ref{lem:C.2}, we have
    \begin{equation} \label{eq:C.6.1}
    \begin{aligned}
        &\left\|\nabla_{\boldsymbol{\theta}} \ell\left(f_{\boldsymbol{\theta}_t}(\boldsymbol{x}); y\right) 
        - \nabla_{\boldsymbol{\theta}} \ell\left(f_{\boldsymbol{\theta}_t^\prime}(\boldsymbol{x}); y\right)\right\|_2 \\
        &\leq \alpha_\ell \left\|\nabla_{\boldsymbol{\theta}} f_{\boldsymbol{\theta}_t}(\boldsymbol{x}) 
        - \nabla_{\boldsymbol{\theta}} f_{\boldsymbol{\theta}_t^\prime}(\boldsymbol{x})\right\|_2 
        + \nu_\ell \left| f_{\boldsymbol{\theta}_t}(\boldsymbol{x}) - f_{\boldsymbol{\theta}_t^\prime}(\boldsymbol{x})\right| 
        \left\|\nabla_{\boldsymbol{\theta}} f_{\boldsymbol{\theta}_t^\prime}(\boldsymbol{x})\right\|_2.
    \end{aligned}
    \end{equation}
    Combing the equation (\ref{eq:C.2.2}) in Lemma \ref{lem:C.2} and Lemma \ref{lem:C.5}, we can bound the individual terms \( \left|\nabla_{\theta_j} f_{\boldsymbol{\theta}_t}(\boldsymbol{x}) - \nabla_{\theta_j} f_{\boldsymbol{\theta}_t^\prime}(\boldsymbol{x})\right| \) as follows
    \begin{equation}\label{eq:C.6.2}
    \begin{aligned}
        &\left|\nabla_{\theta_j} f_{\boldsymbol{\theta}_t}(\boldsymbol{x}) 
        - \nabla_{\theta_j} f_{\boldsymbol{\theta}_t^\prime}(\boldsymbol{x})\right| \\
        &\leq \frac{1}{2} \Big[ \left| f_{\boldsymbol{\theta}_t + \frac{\pi}{2} \boldsymbol{e}_j}(\boldsymbol{x}) 
        - f_{\boldsymbol{\theta}_t^\prime + \frac{\pi}{2} \boldsymbol{e}_j}(\boldsymbol{x}) \right| + \left| f_{\boldsymbol{\theta}_t - \frac{\pi}{2} \boldsymbol{e}_j}(\boldsymbol{x}) 
        - f_{\boldsymbol{\theta}_t^\prime - \frac{\pi}{2} \boldsymbol{e}_j}(\boldsymbol{x}) \right| \Big] \\
        &\leq (1-p)^{K_g} \sqrt{K} \| O \|\|\boldsymbol{\theta}_t-\boldsymbol{\theta}_t^\prime\|_2.
    \end{aligned}
    \end{equation}
    Therefore, the term \( \left\|\nabla_{\boldsymbol{\theta}} f_{\boldsymbol{\theta}_t}(\boldsymbol{x}) - \nabla_{\boldsymbol{\theta}} f_{\boldsymbol{\theta}_t^\prime}(\boldsymbol{x})\right\|_2 \) can be bounded as
    \begin{equation}\label{eq:C.6.3}
    \begin{aligned}
        \|\nabla_{\boldsymbol{\theta}} f_{\boldsymbol{\theta}_t}(\boldsymbol{x}) 
        - \nabla_{\boldsymbol{\theta}} f_{\boldsymbol{\theta}_t^\prime}(\boldsymbol{x})\|_2
        &= \sqrt{\sum_{j=1}^K \left| \nabla_{\theta_j} f_{\boldsymbol{\theta}_t}(\boldsymbol{x}) 
        - \nabla_{\theta_j} f_{\boldsymbol{\theta}_t^\prime}(\boldsymbol{x}) \right|^2} \\
        &\leq \sqrt{K \left( (1-p)^{K_g} \sqrt{K} \| O \|\|\boldsymbol{\theta}_t-\boldsymbol{\theta}_t^\prime\|_2 \right)^2} \\
        &= (1-p)^{K_g} K \|O\| \|\boldsymbol{\theta}_t - \boldsymbol{\theta}_t^\prime\|_2.
    \end{aligned}
    \end{equation}
    Combing the equation (\ref{eq:C.2.4}) in Lemma \ref{lem:C.2} and Lemma \ref{lem:C.5}, we can similarly bound the individual terms \( \left| \nabla_{\theta_j} f_{\boldsymbol{\theta}_t^\prime}(\boldsymbol{x}) \right| \) as follows
    \begin{equation}\label{eq:C.6.4}
    \begin{aligned}
        \left| \nabla_{\theta_j} f_{\boldsymbol{\theta}_t^\prime}(\boldsymbol{x}) \right| 
        &\leq (1-p)^{K_g} \|O\| \left( \sum_{k \neq j} \left| 2 \sin\left( \frac{\theta_{t,k}^\prime - \theta_{t,k}^\prime}{4} \right) \right| 
        + \left| 2 \sin\left( \frac{(\theta_{t,j}^\prime + \frac{\pi}{2}) - (\theta_{t,j}^\prime - \frac{\pi}{2})}{4} \right) \right| \right) \\
        &= \sqrt{2} (1-p)^{K_g} \|O\|.
    \end{aligned}
    \end{equation}
    Therefore, the term \(  \left\|\nabla_{\boldsymbol{\theta}} f_{\boldsymbol{\theta}_t^\prime}(\boldsymbol{x})\right\|_2 \) can be bounded as
    \begin{equation}\label{eq:C.6.5}
    \begin{aligned}
        \|\nabla_{\boldsymbol{\theta}} f_{\boldsymbol{\theta}_t^\prime}(\boldsymbol{x})\|_2
        &= \sqrt{\sum_{j=1}^K \left|\nabla_{\theta_j} f_{\boldsymbol{\theta}_t^\prime}(\boldsymbol{x})\right|^2} \\
        &\leq \sqrt{K \left(\sqrt{2} (1-p)^{K_g} \|O\|\right)^2} \\
        &= \sqrt{2K} (1-p)^{K_g} \|O\|.
    \end{aligned}
    \end{equation}
    Finally, by Lemma \ref{lem:C.5}, the term \(  \left| f_{\boldsymbol{\theta}_t}(\boldsymbol{x}) - f_{\boldsymbol{\theta}_t^\prime}(\boldsymbol{x})\right| \) can be bounded as
    \begin{equation} \label{eq:C.6.6}
        \left| f_{\boldsymbol{\theta}_t}(\boldsymbol{x}) - f_{\boldsymbol{\theta}_t^\prime}(\boldsymbol{x})\right| \leq (1-p)^{K_g} \sqrt{K}  \|O\| \|\boldsymbol{\theta}_t - \boldsymbol{\theta}_t^\prime\|_2
    \end{equation}
    Plugging equation (\ref{eq:C.6.3}), (\ref{eq:C.6.5}), and (\ref{eq:C.6.6}) back into equation (\ref{eq:C.2.1}), we further get
    \begin{equation*} 
    \begin{aligned}
        \|\nabla_{\boldsymbol{\theta}} \ell(f_{\boldsymbol{\theta}_t}(\boldsymbol{x}); y) - \nabla_{\boldsymbol{\theta}} \ell(f_{\boldsymbol{\theta}_t^\prime}(\boldsymbol{x}); y)\|_2 
        &\leq (1-p)^{K_g} \alpha_\ell K \|O\| + (1-p)^{2K_g} \sqrt{2} \nu_\ell K \|O\|^2 \\
        &\leq (1-p)^{K_g}\kappa, 
    \end{aligned}
    \end{equation*}
    where \( \kappa=\alpha_\ell K \|O\| + \sqrt{2} \nu_\ell K \|O\|^2 \).
    
    This completes the proof of Lemma \ref{lem:C.6}.
\end{proof}
\renewcommand{\thetheorem}{C.7}
\begin{lemma}[QNN Differnet Sample Loss Stability Bound, Depolarizing Noise] \label{lem:C.7}
     Suppose that Assumption \ref{ass:Lipschitzness} hold. Let \( \boldsymbol{\theta}_t \) and \( \boldsymbol{\theta}_t^\prime \) be the parameters of two QNNs learned using the SGD algorithm for \( t \) iterations on two training datasets \( S \) and \( S^\prime \), under depolarizing noise level \( p\in[0,1] \), respectively. Then, the loss derivative difference of the QNNs with respect to the different sample is bounded by,
    \begin{equation*}
        \|\nabla_{\boldsymbol{\theta}} \ell(f_{\boldsymbol{\theta}_t}(\boldsymbol{x}); y) - \nabla_{\boldsymbol{\theta}} \ell(f_{\boldsymbol{\theta}_t^\prime}(\boldsymbol{x}^\prime); y^\prime)\|_2 \leq2\sqrt{2}(1-p)^{K_g}\alpha_\ell\sqrt{K}\|O\|.
    \end{equation*}
\end{lemma}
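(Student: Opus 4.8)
The plan is to mirror the proof of Lemma \ref{lem:C.3}, simply replacing the noiseless circuit-gradient estimate with its depolarizing-noise counterpart already established in Lemma \ref{lem:C.6}. First I would apply the chain rule \( \nabla_{\boldsymbol{\theta}}\ell(f_{\boldsymbol{\theta}}(\boldsymbol{x});y) = \frac{\partial \ell}{\partial f}(f_{\boldsymbol{\theta}}(\boldsymbol{x});y)\,\nabla_{\boldsymbol{\theta}} f_{\boldsymbol{\theta}}(\boldsymbol{x}) \) together with the triangle inequality, exactly as in equation (\ref{eq:C.3.1}), to split
\begin{equation*}
\left\|\nabla_{\boldsymbol{\theta}} \ell(f_{\boldsymbol{\theta}_t}(\boldsymbol{x}); y) - \nabla_{\boldsymbol{\theta}} \ell(f_{\boldsymbol{\theta}_t^\prime}(\boldsymbol{x}^\prime); y^\prime)\right\|_2 \leq \left|\frac{\partial \ell}{\partial f}(f_{\boldsymbol{\theta}_t}(\boldsymbol{x});y)\right| \left\|\nabla_{\boldsymbol{\theta}} f_{\boldsymbol{\theta}_t}(\boldsymbol{x})\right\|_2 + \left|\frac{\partial \ell}{\partial f}(f_{\boldsymbol{\theta}_t^\prime}(\boldsymbol{x}^\prime);y^\prime)\right| \left\|\nabla_{\boldsymbol{\theta}} f_{\boldsymbol{\theta}_t^\prime}(\boldsymbol{x}^\prime)\right\|_2.
\end{equation*}
Then Assumption \ref{ass:Lipschitzness} bounds each scalar factor \( |\partial\ell/\partial f| \) by \( \alpha_\ell \), so that it remains only to control the two quantum-circuit gradient norms.

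Next I would invoke equation (\ref{eq:C.6.5}) from Lemma \ref{lem:C.6}, which gives \( \|\nabla_{\boldsymbol{\theta}} f_{\boldsymbol{\theta}}(\boldsymbol{x})\|_2 \leq \sqrt{2K}\,(1-p)^{K_g}\|O\| \) for any parameter vector and input; the parameter-shift derivation used there for \( \boldsymbol{\theta}_t^\prime \) applies verbatim to \( \boldsymbol{\theta}_t \), since that estimate never uses which training set produced the parameters. Summing the two identical contributions yields
\begin{equation*}
\left\|\nabla_{\boldsymbol{\theta}} \ell(f_{\boldsymbol{\theta}_t}(\boldsymbol{x}); y) - \nabla_{\boldsymbol{\theta}} \ell(f_{\boldsymbol{\theta}_t^\prime}(\boldsymbol{x}^\prime); y^\prime)\right\|_2 \leq \alpha_\ell\left(\sqrt{2K}\,(1-p)^{K_g}\|O\| + \sqrt{2K}\,(1-p)^{K_g}\|O\|\right) = 2\sqrt{2}\,(1-p)^{K_g}\alpha_\ell\sqrt{K}\|O\|,
\end{equation*}
which is exactly the claimed bound, and then the proof closes.

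There is essentially no genuine obstacle here. The only nontrivial ingredient is the depolarizing-noise gradient-norm estimate \( \|\nabla_{\boldsymbol{\theta}} f_{\boldsymbol{\theta}}(\boldsymbol{x})\|_2 \leq \sqrt{2K}\,(1-p)^{K_g}\|O\| \), and that has already been handled in Lemma \ref{lem:C.6} (building on Lemma \ref{lem:C.5}), where the depolarizing factor \( (1-p)^{K_g} \) is tracked through the parameter-shift-rule evaluations of each partial derivative \( \nabla_{\theta_j} f \). Everything else --- the chain rule, the triangle inequality, and the Lipschitz bound on \( |\partial\ell/\partial f| \) --- is identical to the noiseless Lemma \ref{lem:C.3}. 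So the only point that requires any care is ensuring the factor \( (1-p)^{K_g} \) is carried correctly, and that bookkeeping is already supplied by Lemma \ref{lem:C.6}.
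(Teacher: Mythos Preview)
Your proposal is correct and follows essentially the same approach as the paper: apply the chain-rule/triangle-inequality decomposition of equation (\ref{eq:C.3.1}) to reduce to bounding the two circuit-gradient norms, then invoke the depolarizing-noise estimate (\ref{eq:C.6.5}) from Lemma \ref{lem:C.6} for each. The paper's proof is line-for-line the same argument.
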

\begin{proof}
    According to equation (\ref{eq:C.3.1}) in Lemma \ref{lem:C.3}, we have
    \begin{equation}\label{eq:C.7.1}
    \begin{aligned}
        \left\|\nabla_{\boldsymbol{\theta}} \ell\left(f_{\boldsymbol{\theta}_t}(\boldsymbol{x}); y\right) 
        - \nabla_{\boldsymbol{\theta}} \ell\left(f_{\boldsymbol{\theta}_t^\prime}(\boldsymbol{x}^\prime); y^\prime\right)\right\|_2 
        &\leq \alpha_\ell \left(
        \left\|\nabla_{\boldsymbol{\theta}} f_{\boldsymbol{\theta}_t}(\boldsymbol{x})\right\|_2 
        + \left\|\nabla_{\boldsymbol{\theta}} f_{\boldsymbol{\theta}_t^\prime}(\boldsymbol{x}^\prime)\right\|_2
        \right).
    \end{aligned}
    \end{equation}
    By equation (\ref{eq:C.6.5}) in Lemma \ref{lem:C.6}, we similarly bound \( \left\|\nabla_{\boldsymbol{\theta}} f_{\boldsymbol{\theta}_t}(\boldsymbol{x})\right\|_2 \leq \sqrt{2K} (1-p)^{K_g} \|O\|, \left\|\nabla_{\boldsymbol{\theta}} f_{\boldsymbol{\theta}_t^\prime}(\boldsymbol{x}^\prime)\right\|_2 \leq \sqrt{2K} (1-p)^{K_g} \|O\| \). Plugging this into equation (\ref{eq:C.7.1}) completes the proof of Lemma \ref{lem:C.3}.  
\end{proof}
\renewcommand{\thetheorem}{C.8}
\begin{lemma} \label{lem:C.8}
    Suppose that Assumption \ref{ass:Lipschitzness} and \ref{ass:smoothness} hold, and \( \ell(\cdot,\cdot)\in[0,M]\). Let \( S \) and \( S^\prime \) of size \( m \) differing in only a single example. Consider two sequences of parameters, \( \{\boldsymbol{\theta}_0,\boldsymbol{\theta}_1,\ldots,\boldsymbol{\theta}_T\}\) and \( \{\boldsymbol{\theta}_0^\prime,\boldsymbol{\theta}_1^\prime,\ldots,\boldsymbol{\theta}_T^\prime \} \), learned by the QNN running SGD on \( S \) and \( S^\prime \), under depolarizing noise level \( p\in[0,1] \), respectively. Let \( \delta_t=\|\boldsymbol{\theta}_t-\boldsymbol{\theta}_t^\prime\|_2 \). Then, for any \( \boldsymbol{z}\in\mathcal{Z} \) and \( t_0\in\{0,1,\ldots,m\} \), we have
    \begin{equation*}
    \begin{aligned}
        \left| \mathbb{E}_A[\ell(f_{\boldsymbol{\theta}_T}(\boldsymbol{x}); y) - \ell(f_{\boldsymbol{\theta}_T^\prime}(\boldsymbol{x}); y)] \right|
        &\leq (1-p)^{K_g} \alpha_\ell \sqrt{K} \|O\| \mathbb{E}_A[\delta_T \mid \delta_{t_0}=0] + \frac{t_0 M}{m}.
    \end{aligned}
    \end{equation*}
\end{lemma}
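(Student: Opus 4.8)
\textbf{Proof proposal for Lemma \ref{lem:C.8}.}
The plan is to mirror the argument of Lemma \ref{lem:C.4}, replacing the noiseless parameter-stability estimate (Lemma \ref{lem:C.1}) with its depolarizing-noise counterpart (Lemma \ref{lem:C.5}). First I would introduce the event \( \mathcal{E}=\{\delta_{t_0}=0\} \) and use the law of total expectation together with the triangle inequality to write
\begin{equation*}
    \left|\mathbb{E}_A[\ell(f_{\boldsymbol{\theta}_T}(\boldsymbol{x}); y) - \ell(f_{\boldsymbol{\theta}_T^\prime}(\boldsymbol{x}); y)]\right|
    \leq \mathbb{E}_A\!\left[\left|\ell(f_{\boldsymbol{\theta}_T}(\boldsymbol{x}); y) - \ell(f_{\boldsymbol{\theta}_T^\prime}(\boldsymbol{x}); y)\right| \,\middle|\, \mathcal{E}\right]
    + \mathbb{P}[\mathcal{E}^c]\cdot \sup\left|\ell(f_{\boldsymbol{\theta}_T}(\boldsymbol{x}); y) - \ell(f_{\boldsymbol{\theta}_T^\prime}(\boldsymbol{x}); y)\right|,
\end{equation*}
exactly as in equation (\ref{eq:C.4.1}), using \( \mathbb{P}[\mathcal{E}]\leq 1 \).

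For the first (conditional) term, I would invoke Assumption \ref{ass:Lipschitzness} to pass from the loss gap to the output gap \( \alpha_\ell\,\mathbb{E}_A[|f_{\boldsymbol{\theta}_T}(\boldsymbol{x})-f_{\boldsymbol{\theta}_T^\prime}(\boldsymbol{x})| \mid \mathcal{E}] \), and then apply Lemma \ref{lem:C.5} — which is valid because the depolarizing channel acts identically on both SGD trajectories, so the conditioning on \( \mathcal{E} \) does not disturb its hypotheses — to obtain the bound \( (1-p)^{K_g}\alpha_\ell\sqrt{K}\|O\|\,\mathbb{E}_A[\delta_T\mid\mathcal{E}] \). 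This is the only place the noise factor \( (1-p)^{K_g} \) enters, and it simply carries through from Lemma \ref{lem:C.5}.

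For the second term, I would reuse the index-coupling argument verbatim: let \( i^* \) be the position where \( S \) and \( S^\prime \) differ, let \( I \) be the first iteration at which SGD selects \( \boldsymbol{z}_{i^*} \); since the two runs are identical until time \( I \), the event \( \mathcal{E}^c=\{\delta_{t_0}\neq 0\} \) forces \( I\leq t_0 \), whence \( \mathbb{P}[\mathcal{E}^c]\leq \mathbb{P}[I\leq t_0]\leq t_0/m \) (this holds for both the with-replacement and the random-permutation schemes). Combining this with the range condition \( \ell(\cdot,\cdot)\in[0,M] \), the second term is at most \( t_0 M/m \), and adding the two contributions yields the claimed inequality. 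There is no real obstacle here — the proof is a direct adaptation of Lemma \ref{lem:C.4}; the only point requiring a line of care is noting that the noise is inserted the same way in both trajectories, so the coupling used to define \( \delta_t \) and the conditional events is unaffected, which is what lets Lemma \ref{lem:C.5} be applied under the conditioning.
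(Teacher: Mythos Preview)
Your proposal is correct and follows essentially the same route as the paper: the paper's proof simply cites equations (\ref{eq:C.4.1}) and (\ref{eq:C.4.3}) from Lemma \ref{lem:C.4} for the decomposition and the $t_0 M/m$ bound, and then applies Assumption \ref{ass:Lipschitzness} together with Lemma \ref{lem:C.5} to the conditional term, exactly as you describe. Your additional remark that the noise acts identically on both trajectories (so the coupling and conditioning are unaffected) is a sensible clarification, though the paper does not make it explicit.
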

\begin{proof}
     Let \( \mathcal{E} \) denote the event that \( \delta_{t_0}=0 \). By equation (\ref{eq:C.4.1}) and  (\ref{eq:C.4.3}) in Lemma \ref{lem:C.4}, we have
    \begin{equation}\label{eq:C.8.1}
    \begin{aligned}
        |\mathbb{E}[\ell(f_{\boldsymbol{\theta}_T}(\boldsymbol{x}); y) - \ell(f_{\boldsymbol{\theta}_T^\prime}(\boldsymbol{x}); y)]|
        &\leq \mathbb{E}\left[ \left| \ell(f_{\boldsymbol{\theta}_T}(\boldsymbol{x}); y) - \ell(f_{\boldsymbol{\theta}_T^\prime}(\boldsymbol{x}); y) \right| \mid \mathcal{E} \right] + \frac{t_0 M}{m}.
    \end{aligned}
    \end{equation}
    Using the Assumption \ref{ass:Lipschitzness} that the loss function is Lipschitz continuous and Lemma \ref{lem:C.5}, the first term \(  \mathbb{E}\left[ \left| \ell(f_{\boldsymbol{\theta}_T}(\boldsymbol{x}); y) - \ell(f_{\boldsymbol{\theta}_T^\prime}(\boldsymbol{x}); y) \right| \mid \mathcal{E} \right] \) can be bounded as
    \begin{equation}\label{eq:C.7.2}
        \mathbb{E}\left[ \left| \ell(f_{\boldsymbol{\theta}_T}(\boldsymbol{x}); y) - \ell(f_{\boldsymbol{\theta}_T^\prime}(\boldsymbol{x}); y) \right| \mid \mathcal{E} \right] \leq (1-p)^{K_g} \alpha_\ell \sqrt{K} \|O\| \mathbb{E}[\delta_T \mid \mathcal{E}].
    \end{equation}
    Plugging it into equation (\ref{eq:C.8.1}) completes the proof of Lemma \ref{lem:C.8}.
\end{proof}
We now are ready to prove Corollary \ref{cor:2}.
\renewcommand{\thetheorem}{\ref{cor:2}}
\begin{corollary}[Generalization Bound Under Depolarizing Noise]
    Suppose that Assumption \ref{ass:Lipschitzness} and \ref{ass:smoothness} hold, and \( \ell(\cdot,\cdot)\in[0,M] \). Let \( A(S) \) be the QNN model trained on the dataset \( S\in\mathcal{Z}^m \) using the SGD algorithm with step sizes \( \eta_t\) under depolarizing noise level \( p\in[0,1] \) for \( T \) iterations. 
    \begin{enumerate}
        \item [(a)] if we choose the constant step sizes \( \eta_t=\eta \), then the following generalization bound of \(
        A(S) \) holds with probability at least \( 1-\delta \) for \( \delta\in(0,1) \),
        \begin{equation*}
            \begin{aligned}
                &\mathbb{E}_A\left[R_{\mathcal{D}}\left(A(S)\right)-R_S\left(A(S)\right)\right] \\
                &\leq \mathcal{O}\left(\frac{\left(1+\left(1-p\right)^{K_g}\eta\kappa\right)^T}{m}\log m \log(\frac{1}{\delta})+M\sqrt{\frac{\log(\frac{1}{\delta})}{m}}\right),
            \end{aligned}
        \end{equation*}
        \item [(b)] if we choose the monotonically non-increasing step sizes \( \eta_t\leq c/(t+1) \), \( c>0 \), then the following generalization bound of \(
        A(S) \) holds with probability at least \( 1-\delta \) for \( \delta\in(0,1) \),
          \begin{equation*}
            \begin{aligned}
                &\mathbb{E}_A\left[R_{\mathcal{D}}\left(A(S)\right)-R_S\left(A(S)\right)\right] \\
                &\leq \mathcal{O}\left(\frac{T^{\frac{c\kappa}{c\kappa+1/\left(1-p\right)^{K_g}}}}{m}\log m \log(\frac{1}{\delta})+M\sqrt{\frac{\log(\frac{1}{\delta})}{m}}\right),
            \end{aligned}
        \end{equation*} 
    \end{enumerate}
    where \( \kappa \) is defined by equation (\ref{eq:k}).
\end{corollary}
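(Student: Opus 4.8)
The plan is to re-run the arguments behind Theorem~\ref{the:1} and Corollary~\ref{cor:1}, but with the noise-adapted lemmas (Lemma~\ref{lem:C.5}, \ref{lem:C.6}, \ref{lem:C.7}, \ref{lem:C.8}) in place of their noiseless counterparts, and then invoke Lemma~\ref{lem:Stability and Generalization}, Part~(b), to pass from stability to generalization. First I would fix datasets $S,S'$ differing in a single example, set $\delta_t=\|\boldsymbol{\theta}_t-\boldsymbol{\theta}_t'\|_2$, and repeat the recursion from the proof of Theorem~\ref{the:1}; invoking Lemma~\ref{lem:C.6} for the ``same-sample'' step and Lemma~\ref{lem:C.7} for the ``different-sample'' step (using $(1-p)^{2K_g}\le(1-p)^{K_g}$ so the smoothness term in Lemma~\ref{lem:C.6} contributes a clean factor $(1-p)^{K_g}\kappa$) yields
\begin{equation*}
\mathbb{E}_A[\delta_{t+1}]\leq\bigl(1+\eta_t(1-p)^{K_g}\kappa\bigr)\mathbb{E}_A[\delta_t]+\frac{2\sqrt{2}\,\eta_t(1-p)^{K_g}\alpha_\ell\sqrt{K}\|O\|}{m}.
\end{equation*}
Unrolling this recursion and combining with the Lipschitz assumption and Lemma~\ref{lem:C.5} gives a uniform-stability constant of the same shape as in Theorem~\ref{the:1} but with effective expansion rate $\tilde\kappa:=(1-p)^{K_g}\kappa$ and an extra factor $(1-p)^{K_g}$ on the increment, i.e.\ $\epsilon\leq\sum_{t=0}^{T-1}\bigl[\prod_{j=t+1}^{T-1}(1+\eta_j\tilde\kappa)\bigr]\frac{2\sqrt{2}\eta_t(1-p)^{2K_g}\alpha_\ell^2K\|O\|^2}{m}$.

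Next I would specialize the step sizes exactly as in Corollary~\ref{cor:1}. For $\eta_t=\eta$, summing the geometric series $\sum_{t<T}(1+\eta\tilde\kappa)^t\le\tilde\kappa^{-1}(1+\eta\tilde\kappa)^T$ gives $\epsilon=\mathcal{O}\bigl((1+(1-p)^{K_g}\eta\kappa)^T/m\bigr)$, the claim in part~(a). For $\eta_t\le c/(t+1)$, I would use Lemma~\ref{lem:C.8} in place of Lemma~\ref{lem:C.4}, introduce $\Delta_t=\mathbb{E}[\delta_t\mid\delta_{t_0}=0]$, unwind the recursion $\Delta_{t+1}\le(1+\eta_t\tilde\kappa)\Delta_t+\frac{2\sqrt{2}\eta_t(1-p)^{K_g}\alpha_\ell\sqrt{K}\|O\|}{m}$ from $T$ down to $t_0+1$ with $\Delta_{t_0}=0$, apply $1+a\le e^a$ together with $\sum_{t>t_0}t^{-(c\tilde\kappa+1)}\le(c\tilde\kappa)^{-1}t_0^{-c\tilde\kappa}$ to get $\Delta_T\le\frac{2\sqrt{2}(1-p)^{K_g}\alpha_\ell\sqrt{K}\|O\|}{\tilde\kappa m}(T/t_0)^{c\tilde\kappa}$, plug this back into Lemma~\ref{lem:C.8}, and optimize the resulting expression $\frac{2\sqrt{2}(1-p)^{2K_g}\alpha_\ell^2K\|O\|^2}{\tilde\kappa m}(T/t_0)^{c\tilde\kappa}+\frac{t_0M}{m}$ over $t_0$ (balanced at $t_0=\Theta(T^{q/(q+1)})$ with $q:=c\tilde\kappa$) to obtain $\epsilon=\mathcal{O}(T^{q/(q+1)}/m)$. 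The remaining cosmetic step is the identity $\frac{q}{q+1}=\frac{c(1-p)^{K_g}\kappa}{c(1-p)^{K_g}\kappa+1}=\frac{c\kappa}{c\kappa+1/(1-p)^{K_g}}$, after which feeding both stability constants into Lemma~\ref{lem:Stability and Generalization}, Part~(b), produces the two advertised bounds of order $\mathcal{O}(\epsilon\log m\log(1/\delta)+Mm^{-1/2}\sqrt{\log(1/\delta)})$.

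\textbf{Expected main obstacle.} There is no genuine conceptual difficulty here, since everything is a faithful re-run of the noiseless proofs with the noise-adapted lemmas already in hand; the only thing requiring care is the consistent propagation of the $(1-p)^{K_g}$ factors. In particular, one must track that quantum noise rescales the \emph{expansion coefficient} itself to $(1-p)^{K_g}\kappa$ rather than merely multiplying the final bound by a constant, and then re-express the emerging exponent $\frac{c(1-p)^{K_g}\kappa}{c(1-p)^{K_g}\kappa+1}$ in the normalized form $\frac{c\kappa}{c\kappa+1/(1-p)^{K_g}}$ stated in part~(b). Everything else is the routine geometric-series and $t_0$-optimization bookkeeping already carried out for Corollary~\ref{cor:1}.
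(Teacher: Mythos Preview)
Your proposal is correct and follows essentially the same route as the paper's own proof: set up the recursion for $\delta_t$ via the noise-adapted Lemmas~\ref{lem:C.5}--\ref{lem:C.7}, specialize to constant step sizes by summing the geometric series and to decaying step sizes via Lemma~\ref{lem:C.8} with the $t_0$-optimization, and then invoke Lemma~\ref{lem:Stability and Generalization}(b). The paper carries out exactly these steps, including the same rewriting of the exponent $\frac{c(1-p)^{K_g}\kappa}{c(1-p)^{K_g}\kappa+1}=\frac{c\kappa}{c\kappa+1/(1-p)^{K_g}}$; the only cosmetic difference is that the paper drops one harmless $(1-p)^{K_g}\le 1$ prefactor that you retain, which is absorbed by the $\mathcal{O}$ in any case.
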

\begin{proof}
    Let \( S \) and \( S^\prime\) be two datasets of size \( m \) differing in only a single sample. Consider two sequences of the parameters, \( \{\boldsymbol{\theta}_0,\boldsymbol{\theta}_1,\ldots,\boldsymbol{\theta}_T\}\) and \( \{\boldsymbol{\theta}_0^\prime,\boldsymbol{\theta}_1^\prime,\ldots,\boldsymbol{\theta}_T^\prime \} \), learned by the QNN running SGD on \( S \) and \( S^\prime \), respectively. Let \( \delta_t=\|\boldsymbol{\theta}_t-\boldsymbol{\theta}_t^\prime\|_2 \). 

    For the constant step sizes \( \eta_t=\eta \), using the Assumption \ref{ass:Lipschitzness} that the loss function is Lipschitz continuous, the linearity of expectation and Lemma \ref{lem:C.5}, we have
    \begin{equation}\label{eq:4.8.1}
    \begin{aligned}
        \left| \mathbb{E}_A \left[ \ell (f_{\boldsymbol{\theta}_T}(\boldsymbol{x}); y) - \ell (f_{\boldsymbol{\theta}_T^\prime}(\boldsymbol{x}); y) \right] \right|
        &\leq \alpha_\ell \mathbb{E}_A \left[ \left| f_{\boldsymbol{\theta}_T}(\boldsymbol{x}) - f_{\boldsymbol{\theta}_T^\prime}(\boldsymbol{x}) \right| \right] \\
        &\leq (1-p)^{K_g} \alpha_\ell \sqrt{K} \|O\| \mathbb{E}_A\left[ \|\boldsymbol{\theta}_T - \boldsymbol{\theta}_T^\prime\|_2 \right] \\
        &\leq (1-p)^{K_g} \alpha_\ell \sqrt{K} \|O\| \mathbb{E}_A\left[ \delta_T \right].
    \end{aligned}
    \end{equation} 
    Combining equation (\ref{eq:4.1.2}), Lemma \ref{lem:C.6}, and Lemma \ref{lem:C.7}, we further get
    \begin{equation*}
    \begin{aligned}
        \mathbb{E}_A[\delta_{t+1}] 
        &\leq \mathbb{E}_A[\delta_{t}] + (1 - \frac{1}{m}) \eta \cdot \mathbb{E}_A[(1-p)^{K_g}\kappa \delta_t] + \frac{1}{m} \eta \cdot \mathbb{E}_A[2\sqrt{2} (1-p)^{K_g} \alpha_\ell \sqrt{K} \|O\|] \\
        &\leq (1 + (1-p)^{K_g} \eta \kappa) \mathbb{E}_A[\delta_t] + \frac{2\sqrt{2} (1-p)^{K_g} \eta \alpha_\ell \sqrt{K} \|O\|}{m}.
    \end{aligned}
    \end{equation*}
    Unraveling the recursion gives
    \begin{equation*}
    \begin{aligned}
        \mathbb{E}_A[\delta_T] 
        &\leq \sum_{t=0}^{T-1} \left[ \prod_{j=t+1}^{T-1} \left( 1 + (1-p)^{K_g} \eta \kappa \right) \right] \frac{2\sqrt{2} (1-p)^{K_g} \eta \alpha_\ell \sqrt{K} \|O\|}{m} \\
        &\leq \frac{2\sqrt{2}(1-p)^{K_g}\eta\alpha_\ell\sqrt{K}\|O\|}{m}\sum_{t=0}^{T-1}(1+(1-p)^{K_g}\eta\kappa)^t \\
        &\leq \frac{2\sqrt{2}(1-p)^{K_g}\eta\alpha_\ell\sqrt{K}\|O\|}{m} \cdot \frac{(1+(1-p)^{K_g}\eta\kappa)^T - 1}{(1-p)^{K_g}\eta\kappa}  \\
        &\leq \frac{2\sqrt{2}\alpha_\ell\sqrt{K}\|O\|}{\kappa m}(1+(1-p)^{K_g}\eta\kappa)^T.
    \end{aligned}
    \end{equation*}
    Plugging it into equation (\ref{eq:4.8.1}), we obtain
    \begin{equation*}
         \left| \mathbb{E}_A \left[ \ell (f_{\boldsymbol{\theta}_T}(\boldsymbol{x}); y) - \ell (f_{\boldsymbol{\theta}_T^\prime}(\boldsymbol{x}); y) \right] \right| \leq \frac{2\sqrt{2}\alpha_\ell^2K\|O\|^2}{\kappa m}(1+(1-p)^{K_g}\eta\kappa)^T.
    \end{equation*}
    By the definition of uniform stability as shown in Definition \ref{def:Uniform Stability}, we obtain the desired bound on the uniform stability of QNNs under the constant step size and depolarizing noise setting. Combining Lemma \ref{lem:Stability and Generalization}, Part (b) completes the proof of Corollary \ref{cor:2}, Part (a).
    
    For the decaying step sizes \( \eta_t\leq c/(t+1) \), by Lemma \ref{lem:C.8}, we have for every \( t_0\in\{0,1,\ldots,m\} \),
    \begin{equation} \label{eq:4.8.2}
    \begin{aligned}
        \left| \mathbb{E}_A[\ell(f_{\boldsymbol{\theta}_T}(\boldsymbol{x}); y) - \ell(f_{\boldsymbol{\theta}_T^\prime}(\boldsymbol{x}); y)] \right|
        &\leq (1-p)^{K_g} \alpha_\ell \sqrt{K} \|O\| \mathbb{E}_A[\delta_T \mid \delta_{t_0}=0] + \frac{t_0 M}{m}.
    \end{aligned}
    \end{equation}
    Let \( \Delta_t = \mathbb{E}[\delta_t \mid \delta_{t_0} = 0] \). Combining equation (\ref{eq:4.3.2}), Lemma \ref{lem:C.6}, and Lemma \ref{lem:C.7}, we further get
   \begin{equation*}
    \begin{aligned}
        \Delta_{t+1}  
        &\leq \Delta_{t} 
        + \left(1 - \frac{1}{m}\right) \eta_t \cdot \mathbb{E}_A \big[(1-p)^{K_g}\kappa \delta_t \;\big|\; \delta_{t_0} = 0 \big] 
        + \frac{1}{m} \eta_t \cdot \mathbb{E}_A \big[2\sqrt{2}(1-p)^{K_g} \alpha_\ell \sqrt{K} \|O\| \;\big|\; \delta_{t_0} = 0 \big] \\
        &\leq (1 + (1-p)^{K_g}\eta_t \kappa) \Delta_{t} 
        + \frac{2\sqrt{2} (1-p)^{K_g}\eta_t \alpha_\ell \sqrt{K} \|O\|}{m}.
    \end{aligned}
    \end{equation*}
    Using the fact that \( \Delta_{t_0} = 0 \), we can unwind this recurrence relation from \( T \) down to \(t_0 + 1\). This gives
   \begin{equation*}
    \begin{aligned}
        \Delta_{t+1}
        &= \sum_{t = t_0 + 1}^T 
        \left[ \prod_{j = t + 1}^T \left( 1 + (1-p)^{K_g}\eta_j \kappa \right) \right] 
        \frac{2\sqrt{2} (1-p)^{K_g}\eta_t \alpha_\ell \sqrt{K} \|O\|}{m}.
    \end{aligned}
    \end{equation*}
    By the elementary inequality \( 1+a\leq\exp(a) \) and \( \eta_t\leq c/(t+1) \), we further derive
    \begin{equation*}
    \begin{aligned}
        \Delta_{t+1}
        &\leq \sum_{t = t_0 + 1}^T 
        \left[ \prod_{j = t + 1}^T \exp\left( \frac{c (1-p)^{K_g}\kappa}{j} \right) \right] 
        \frac{2\sqrt{2} c (1-p)^{K_g} \alpha_\ell \sqrt{K} \|O\|}{tm} \\
        &\leq \sum_{t = t_0 + 1}^T 
        \exp\left( \sum_{j = t + 1}^T \frac{c (1-p)^{K_g} \kappa}{j} \right) 
        \frac{2\sqrt{2} c (1-p)^{K_g}\alpha_\ell \sqrt{K} \|O\|}{tm} \\
        &\leq \sum_{t = t_0 + 1}^T 
        \exp\left( c (1-p)^{K_g} \kappa \log\left( \frac{T}{t} \right) \right) 
        \frac{2\sqrt{2} c (1-p)^{K_g} \alpha_\ell \sqrt{K} \|O\|}{tm} \\
        &\leq \frac{2\sqrt{2} c (1-p)^{K_g} \alpha_\ell \sqrt{K} \|O\|}{m} T^{c (1-p)^{K_g} \kappa} 
        \sum_{t = t_0 + 1}^T \frac{1}{t^{c (1-p)^{K_g} \kappa + 1}} \\
        &\leq \frac{2\sqrt{2} \alpha_\ell \sqrt{K} \|O\|}{\kappa m} 
        \left( \frac{T}{t_0} \right)^{c (1-p)^{K_g} \kappa}.
    \end{aligned}
    \end{equation*}
    Plugging this bound into equation (\ref{eq:4.8.2}), we get
    \begin{equation} 
    \begin{aligned}
        \left| \mathbb{E}_A[\ell(f_{\boldsymbol{\theta}_T}(\boldsymbol{x}); y) - \ell(f_{\boldsymbol{\theta}_T^\prime}(\boldsymbol{x}); y)] \right|
        &\leq \frac{2\sqrt{2} (1-p)^{K_g} \alpha_\ell^2 K \|O\|^2}{\kappa m} 
        \left( \frac{T}{t_0} \right)^{c (1-p)^{K_g} \kappa} + \frac{t_0 M}{m}.
    \end{aligned}
    \end{equation}
    The right hand side is approximately minimized when
    \begin{equation*}
        t_0 = \left( \frac{2\sqrt{2}c(1-p)^{K_g}\alpha_\ell^2 K \|O\|^2}{M} \right)^{\frac{1}{c(1-p)^{K_g}\kappa + 1}} 
        T^{\frac{c\kappa}{c\kappa + 1/(1-p)^{K_g}}}.
    \end{equation*}
    Plugging it into the equation (\ref{eq:4.8.2}) we have (for simplicity we assume the above \( t_0 \) is an integer)
    \begin{equation*} 
    \begin{aligned}
        &\left| \mathbb{E}_A[\ell(f_{\boldsymbol{\theta}_T}(\boldsymbol{x}); y) - \ell(f_{\boldsymbol{\theta}_T^\prime}(\boldsymbol{x}); y)] \right| \\
        &\leq \frac{1 + \frac{1}{c(1-p)^{K_g}\kappa}}{m}
        M^{\frac{c\kappa}{c\kappa + 1/(1-p)^{K_g}}} 
        \left( 2\sqrt{2}c(1-p)^{K_g}\alpha_\ell^2 K \|O\|^2 \right)^{\frac{1}{c(1-p)^{K_g}\kappa + 1}} 
        T^{\frac{c\kappa}{c\kappa + 1/(1-p)^{K_g}}}.
    \end{aligned}
    \end{equation*}
    By the definition of uniform stability as shown in Definition \ref{def:Uniform Stability}, we obtain the desired bound on the uniform stability of QNNs under the decaying step size and depolarizing noise setting. Combining Lemma \ref{lem:Stability and Generalization}, Part (b) completes the proof of Corollary \ref{cor:2}, Part (b).
\end{proof}

\subsection{Proof of Theorem \ref{the:3}}
In this subsection, we prove Theorem \ref{the:3}, which establishes the generalization bound in expectation for QNNs and first links generalization gap to optimization error with the help of less restrictive on-average stability. We begin by quoting the result which we set to prove.
\renewcommand{\thetheorem}{\ref{the:3}}
\begin{theorem}[Optimization-Dependent Generalization Bound]
    Suppose that Assumption \ref{ass:Lipschitzness}, \ref{ass:smoothness}, and \ref{ass:Variance} hold. Let \( A(S) \) be the QNN model trained on the dataset \( S\in\mathcal{Z}^m \) using the SGD algorithm with step sizes \( \eta_t\) for \( T \) iterations, then the expected generalization gap satisfies
    \begin{equation*}
        \begin{aligned}
            & \left|\mathbb{E}_{S,A} \left[ R_{\mathcal{D}}\left(A(S)\right) - R_S\left(A(S)\right) \right]\right| \\
            &\leq \sum_{t=0}^{T-1}\left[\prod_{j=t+1}^{T-1}(1+\eta_t\kappa)\right]\frac{2\eta_t\alpha_\ell\sqrt{K}\|O\|\left(\mathbb{E}_S[\|\nabla_{\boldsymbol{\theta}} R_S(\boldsymbol{\theta}_t)\|_2]+\sigma\right)}{m},
        \end{aligned}
    \end{equation*}
    where \( \kappa \) is defined by equation (\ref{eq:k}).
\end{theorem}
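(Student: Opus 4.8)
The plan is to control the on-average stability of SGD on QNNs (Definition \ref{def:On-Average Stability}) and then invoke Lemma \ref{lem:Stability and Generalization}, Part (c). First, combining Assumption \ref{ass:Lipschitzness} with Lemma \ref{lem:C.1}, the map \( \boldsymbol{\theta}\mapsto\ell(f_{\boldsymbol{\theta}}(\boldsymbol{x});y) \) is Lipschitz in the parameters with constant \( L=\alpha_\ell\sqrt{K}\|O\| \). Hence it suffices to show that \( A \) is on-average \( \epsilon \)-stable with
\begin{equation*}
\epsilon\leq\sum_{t=0}^{T-1}\left[\prod_{j=t+1}^{T-1}(1+\eta_j\kappa)\right]\frac{2\eta_t\bigl(\mathbb{E}_S[\|\nabla_{\boldsymbol{\theta}}R_S(\boldsymbol{\theta}_t)\|_2]+\sigma\bigr)}{m},
\end{equation*}
after which multiplying by \( L \) yields the claim.

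To bound \( \epsilon \), fix \( S \) and \( S' \) drawn i.i.d.\ from \( \mathcal{D} \), and for each \( i\in[m] \) run SGD on \( S \) and on \( S^{(i)} \) with a common sequence of example indices \( \{j_t\} \), producing iterates \( \boldsymbol{\theta}_t \) and \( \boldsymbol{\theta}_t^{(i)} \). Writing \( \delta_t^{(i)}=\|\boldsymbol{\theta}_t-\boldsymbol{\theta}_t^{(i)}\|_2 \), split the update at step \( t \) according to whether \( j_t \) hits the differing index \( i \): with probability \( 1-1/m \) both runs use the same example and Lemma \ref{lem:C.2} gives \( \delta_{t+1}^{(i)}\leq(1+\eta_t\kappa)\delta_t^{(i)} \); with probability \( 1/m \) the runs use \( \boldsymbol{z}_i \) versus \( \boldsymbol{z}_i' \), and the triangle inequality gives \( \delta_{t+1}^{(i)}\leq\delta_t^{(i)}+\eta_t\|\nabla_{\boldsymbol{\theta}}\ell(f_{\boldsymbol{\theta}_t}(\boldsymbol{x}_i);y_i)\|_2+\eta_t\|\nabla_{\boldsymbol{\theta}}\ell(f_{\boldsymbol{\theta}_t^{(i)}}(\boldsymbol{x}_i');y_i')\|_2 \). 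Taking expectation over \( j_t \) conditionally on the iterates so far, and using \( (1-1/m)(1+\eta_t\kappa)+1/m\leq1+\eta_t\kappa \), we obtain
\begin{equation*}
\mathbb{E}\bigl[\delta_{t+1}^{(i)}\,\big|\,\boldsymbol{\theta}_t,\boldsymbol{\theta}_t^{(i)}\bigr]\leq(1+\eta_t\kappa)\,\delta_t^{(i)}+\frac{\eta_t}{m}\Bigl(\|\nabla_{\boldsymbol{\theta}}\ell(f_{\boldsymbol{\theta}_t}(\boldsymbol{x}_i);y_i)\|_2+\|\nabla_{\boldsymbol{\theta}}\ell(f_{\boldsymbol{\theta}_t^{(i)}}(\boldsymbol{x}_i');y_i')\|_2\Bigr).
\end{equation*}

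The main obstacle is converting the per-example gradient norms into \( \|\nabla_{\boldsymbol{\theta}}R_S(\boldsymbol{\theta}_t)\|_2+\sigma \). Averaging the last display over \( i \) and taking full expectation, the first gradient term becomes \( \mathbb{E}[\tfrac1m\sum_i\|\nabla_{\boldsymbol{\theta}}\ell(f_{\boldsymbol{\theta}_t}(\boldsymbol{x}_i);y_i)\|_2] \); writing each summand as \( \nabla_{\boldsymbol{\theta}}R_S(\boldsymbol{\theta}_t) \) plus a fluctuation, the triangle inequality together with Cauchy--Schwarz and Assumption \ref{ass:Variance} bound the average of the fluctuation norms by \( \sigma \), giving \( \mathbb{E}_S[\|\nabla_{\boldsymbol{\theta}}R_S(\boldsymbol{\theta}_t)\|_2]+\sigma \). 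The second term \( \mathbb{E}[\tfrac1m\sum_i\|\nabla_{\boldsymbol{\theta}}\ell(f_{\boldsymbol{\theta}_t^{(i)}}(\boldsymbol{x}_i');y_i')\|_2] \) needs an exchangeability argument: since the entries of \( S^{(i)} \) are i.i.d.\ and \( \boldsymbol{z}_i' \) is its \( i \)-th entry, \( \mathbb{E}\|\nabla_{\boldsymbol{\theta}}\ell(f_{\boldsymbol{\theta}_t^{(i)}}(\boldsymbol{x}_i');y_i')\|_2 \) equals the expected average of the gradient norms of \( \boldsymbol{\theta}_t^{(i)} \) over the points of \( S^{(i)} \), hence is \( \leq\mathbb{E}[\|\nabla_{\boldsymbol{\theta}}R_{S^{(i)}}(\boldsymbol{\theta}_t^{(i)})\|_2]+\sigma \), which equals \( \mathbb{E}_S[\|\nabla_{\boldsymbol{\theta}}R_S(\boldsymbol{\theta}_t)\|_2]+\sigma \) because \( S^{(i)} \) has the same law as \( S \). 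This produces the recursion \( u_{t+1}\leq(1+\eta_t\kappa)u_t+\tfrac{2\eta_t(\mathbb{E}_S[\|\nabla_{\boldsymbol{\theta}}R_S(\boldsymbol{\theta}_t)\|_2]+\sigma)}{m} \) for \( u_t:=\mathbb{E}[\tfrac1m\sum_i\delta_t^{(i)}] \), with \( u_0=0 \).

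Finally I would unroll this recursion from \( t=0 \) to \( T \) to obtain the displayed bound on \( \epsilon=u_T \), and then apply Lemma \ref{lem:Stability and Generalization}, Part (c) with \( L=\alpha_\ell\sqrt{K}\|O\| \) to conclude. The argument uses only that the index sequence is exchangeable, so it applies equally to the random-permutation variant of SGD, and it extends to the decaying step-size and noise settings by the same substitutions used for Corollaries \ref{cor:1} and \ref{cor:2}.
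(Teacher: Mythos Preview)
Your proposal is correct and follows essentially the same route as the paper: establish an on-average stability bound for SGD on QNNs via the same one-step recursion (splitting on whether the sampled index hits the perturbed coordinate, using Lemma \ref{lem:C.2} in the ``same example'' branch), unroll it, and then apply Lemma \ref{lem:Stability and Generalization}(c) with the parameter-Lipschitz constant \( L=\alpha_\ell\sqrt{K}\|O\| \) from Lemma \ref{lem:C.1}. Your handling of the second gradient term \( \|\nabla_{\boldsymbol{\theta}}\ell(f_{\boldsymbol{\theta}_t^{(i)}}(\boldsymbol{x}_i');y_i')\|_2 \) via exchangeability (using that \( S^{(i)} \) has the same law as \( S \)) is in fact more careful than the paper, which simply replaces this term by \( \|\nabla_{\boldsymbol{\theta}}\ell(f_{\boldsymbol{\theta}_t}(\boldsymbol{x}_i);y_i)\|_2 \) without explicit justification before averaging over \( i \).
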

\begin{proof}
    Let \( S=\{\boldsymbol{z}_1,\ldots,\boldsymbol{z}_m\} \) and \( S^\prime=\{\boldsymbol{z}_1^\prime,\ldots,\boldsymbol{z}_m^\prime\} \) be drawn independently from \( \mathcal{D} \). For any \(i\in[m]\), define \(S^{(i)}=\{\boldsymbol{z}_1,\ldots,\boldsymbol{z}_{i-1},\boldsymbol{z}_i^\prime,\boldsymbol{z}_{i+1},\ldots,\boldsymbol{z}_m\}\) as the set formed from \( S \) by replacing the \( i \)-th element with \( \boldsymbol{z}_i^\prime\). Consider two sequences of the parameters, \( \{\boldsymbol{\theta}_0,\boldsymbol{\theta}_1,\ldots,\boldsymbol{\theta}_T\}\) and \( \{\boldsymbol{\theta}_0^{(i)},\boldsymbol{\theta}_1^{(i)},\ldots,\boldsymbol{\theta}_T^{(i)} \} \), learned by the QNN running SGD on \( S \) and \( S^{(i)} \), respectively. Let the example index selected by SGD at iteration \( t \) denoted by \( i_t \).
    
    Observe that at iteration \( t \), with probability \( 1-1/m \), \( i_t\neq i\), the example selected by SGD is the same in both \( S \) and \( S^{(i)}\). With probability \( 1/m \), \( i_t= i\), the selected example is different. Therefore, we have
    \begin{equation*}
    \begin{aligned}
    \mathbb{E}_A[\|\boldsymbol{\theta}_{t+1}-\boldsymbol{\theta}_{t+1}^{(i)}\|_2] 
    &\leq (1 - \frac{1}{m}) \mathbb{E}_A \left[ \left\| ( \boldsymbol{\theta}_{t} - \eta_t \nabla_{\boldsymbol{\theta}} \ell(f_{\boldsymbol{\theta}_t}(\boldsymbol{x}_{i_t}); y_{i_t}) ) - ( \boldsymbol{\theta}_{t}^{(i)} - \eta_t \nabla_{\boldsymbol{\theta}} \ell(f_{\boldsymbol{\theta}_t^{(i)}}(\boldsymbol{x}_{i_t}); y_{i_t}) ) \right\|_2 \right] \\
    & \quad + \frac{1}{m} \mathbb{E}_A \left[ \left\| ( \boldsymbol{\theta}_{t} - \eta_t \nabla_{\boldsymbol{\theta}} \ell(f_{\boldsymbol{\theta}_t}(\boldsymbol{x}_i); y_i) ) - ( \boldsymbol{\theta}_{t}^{(i)} - \eta_t \nabla_{\boldsymbol{\theta}} \ell(f_{\boldsymbol{\theta}_t^{(i)}}(\boldsymbol{x}_i^\prime); y_i^\prime ) \right\|_2 \right] \\
    &= \mathbb{E}_A[\|\boldsymbol{\theta}_t-\boldsymbol{\theta}_t^{(i)}\|_2] + (1 - \frac{1}{m}) \eta_t \mathbb{E}_A \left[ \left\| \nabla_{\boldsymbol{\theta}} \ell(f_{\boldsymbol{\theta}_t}(\boldsymbol{x}_{i_t}); y_{i_t}) - \nabla_{\boldsymbol{\theta}} \ell(f_{\boldsymbol{\theta}_t^{(i)}}(\boldsymbol{x}_{i_t}); y_{i_t}) \right\|_2 \right] \\
    & \quad + \frac{1}{m} \eta_t \mathbb{E}_A \left[ \left\| \nabla_{\boldsymbol{\theta}} \ell(f_{\boldsymbol{\theta}_t}(\boldsymbol{x}_i); y_i) - \nabla_{\boldsymbol{\theta}} \ell(f_{\boldsymbol{\theta}_t^{(i)}}(\boldsymbol{x}_i^\prime); y_i^\prime) \right\|_2 \right].
    \end{aligned}
    \end{equation*}
    According to Lemma \ref{lem:C.2} and Lemma \ref{lem:C.3}, we further get
    \begin{equation*}
    \begin{aligned}
        \mathbb{E}_A[\|\boldsymbol{\theta}_{t+1}-\boldsymbol{\theta}_{t+1}^{(i)}\|_2] 
        &\leq \mathbb{E}_A[\|\boldsymbol{\theta}_t-\boldsymbol{\theta}_t^{(i)}\|_2] + (1 - \frac{1}{m}) \eta_t \cdot \mathbb{E}_A[\kappa \|\boldsymbol{\theta}_t-\boldsymbol{\theta}_t^{(i)}\|_2] \\
        & \quad + \frac{1}{m} \eta_t \cdot \mathbb{E}_A[\left\| \nabla_{\boldsymbol{\theta}} \ell(f_{\boldsymbol{\theta}_t}(\boldsymbol{x}_i); y_i)\right\|_2 + \|\nabla_{\boldsymbol{\theta}} \ell(f_{\boldsymbol{\theta}_t^{(i)}}(\boldsymbol{x}_i^\prime); y_i^\prime) \|_2] \\
        &\leq (1 + \eta_t \kappa) \mathbb{E}_A[\|\boldsymbol{\theta}_t-\boldsymbol{\theta}_t^{(i)}\|_2] + \frac{\eta_t(\left\| \nabla_{\boldsymbol{\theta}} \ell(f_{\boldsymbol{\theta}_t}(\boldsymbol{x}_i); y_i)\right\|_2 + \|\nabla_{\boldsymbol{\theta}} \ell(f_{\boldsymbol{\theta}_t^{(i)}}(\boldsymbol{x}_i^\prime); y_i^\prime) \|_2)}{m}.
    \end{aligned}
    \end{equation*}
    Unraveling the recursion gives
    \begin{equation*}
    \begin{aligned}
        \mathbb{E}_A[\|\boldsymbol{\theta}_T-\boldsymbol{\theta}_T^{(i)}\|_2] 
        &\leq \sum_{t=0}^{T-1} \left[ \prod_{j=t+1}^{T-1} \left( 1 + \eta_j \kappa \right) \right] 
        \frac{\eta_t(\left\| \nabla_{\boldsymbol{\theta}} \ell(f_{\boldsymbol{\theta}_t}(\boldsymbol{x}_i); y_i)\right\|_2 + \|\nabla_{\boldsymbol{\theta}} \ell(f_{\boldsymbol{\theta}_t^\prime}(\boldsymbol{x}_i^\prime); y_i^\prime) \|_2)}{m} \\
        &\leq \sum_{t=0}^{T-1} \left[ \prod_{j=t+1}^{T-1} \left( 1 + \eta_j \kappa \right) \right] 
        \frac{2\eta_t\left\| \nabla_{\boldsymbol{\theta}} \ell(f_{\boldsymbol{\theta}_t}(\boldsymbol{x}_i); y_i)\right\|_2}{m}.
    \end{aligned}
    \end{equation*}
    Using the Assumption \ref{ass:Variance} and take an average over \(i\in[m] \), we have
    \begin{equation*}
    \begin{aligned}
        \frac{1}{m}\sum_{i=1}^m\mathbb{E}_A[\|\boldsymbol{\theta}_T-\boldsymbol{\theta}_T^{(i)}\|_2] 
        &\leq \sum_{t=0}^{T-1} \left[ \prod_{j=t+1}^{T-1} \left( 1 + \eta_j \kappa \right) \right] 
        \frac{2\eta_t \sum_{i=1}^m\left\| \nabla_{\boldsymbol{\theta}} \ell(f_{\boldsymbol{\theta}_t}(\boldsymbol{x}_i); y_i)\right\|_2}{m^2} \\
        &\leq \sum_{t=0}^{T-1} \left[ \prod_{j=t+1}^{T-1} \left( 1 + \eta_j \kappa \right) \right] 
        \frac{2\eta_t \mathbb{E}_S[\left\| \nabla_{\boldsymbol{\theta}} \ell(f_{\boldsymbol{\theta}_t}(\boldsymbol{x}_i); y_i)\right\|_2]}{m} \\
        &\leq \sum_{t=0}^{T-1} \left[ \prod_{j=t+1}^{T-1} \left( 1 + \eta_j \kappa \right) \right] 
        \frac{2\eta_t \left( \mathbb{E}_S\left[\|\nabla_{\boldsymbol{\theta}} R_S(\boldsymbol{\theta}_t)\|_2\right] 
        + \mathbb{E}_S\left[\|\nabla_{\boldsymbol{\theta}} \ell(f_{\boldsymbol{\theta}_t}(\boldsymbol{x}_i); y_i) 
        - \nabla_{\boldsymbol{\theta}} R_S(\boldsymbol{\theta}_t)\|_2\right] \right)}{m} \\
        &\leq \sum_{t=0}^{T-1} \left[ \prod_{j=t+1}^{T-1} \left( 1 + \eta_j \kappa \right) \right] 
        \frac{2\eta_t \left( \mathbb{E}_S\left[\|\nabla_{\boldsymbol{\theta}} R_S(\boldsymbol{\theta}_t)\|_2\right] 
        + \sigma \right)}{m}.
    \end{aligned}
    \end{equation*}
    By the definition of on-average stability as shown in Definition \ref{def:On-Average Stability}, we immediately get the claimed upper bound on the on-average stability of QNNs.
    
    Using the Assumption \ref{ass:Lipschitzness} that the loss function is Lipschitz continuous and Lemma \ref{lem:C.1}, we have for \(\forall{\boldsymbol{\theta}_1,\boldsymbol{\theta}_2}\in\mathbb{R}^K\)
    \begin{equation}\label{eq:4.12.1}
    \begin{aligned}
        \left|\ell \left( f_{\boldsymbol{\theta}_1}(\boldsymbol{x}); y \right) - \ell \left( f_{\boldsymbol{\theta}_2}(\boldsymbol{x}); y \right) \right|
        &\leq \alpha_\ell \left| f_{\boldsymbol{\theta}_1}(\boldsymbol{x}) - f_{\boldsymbol{\theta}_2}(\boldsymbol{x}) \right| \\
        &\leq \alpha_\ell \sqrt{K} \|O\| \|\boldsymbol{\theta}_1 - \boldsymbol{\theta}_2\|_2.
    \end{aligned}
    \end{equation}
    Combining equation (\ref{eq:4.12.1}) and Lemma \ref{lem:Stability and Generalization}, Part (c) completes the proof of Theorem \ref{the:3}.
\end{proof}

\subsection{Proof of Lemma \ref{lem:Link with Initialization Point}}
In this subsection, we prove Lemma  \ref{lem:Link with Initialization Point}, which shows that the expected gradient norms \( \mathbb{E}_S\left[\|\nabla_{\boldsymbol{\theta}} R_S(\boldsymbol{\theta}_t)\|_2\right] \) are influence by the choice of the initialization point. We first state and prove the following key lemma.
\renewcommand{\thetheorem}{C.9}
\begin{lemma}[Descent Lemma] \label{lem:C.9}
    Suppose that Assumption \ref{ass:Lipschitzness} and \ref{ass:smoothness} hold. Then, for \( \forall{\boldsymbol{\theta}_1,\boldsymbol{\theta}_2}\in\mathbb{R}^K \), and \( \boldsymbol{z}\in\mathcal{Z} \), we have
    \begin{equation*}
        \ell(f_{\boldsymbol{\theta}_1}(\boldsymbol{x}); y) - \ell(f_{\boldsymbol{\theta}_2}(\boldsymbol{x}); y) 
        \leq \langle \nabla_{\boldsymbol{\theta}} \ell(f_{\boldsymbol{\theta}_2}(\boldsymbol{x}); y), \boldsymbol{\theta}_1 - \boldsymbol{\theta}_2 \rangle 
        + \frac{\kappa}{2} \|\boldsymbol{\theta}_1 - \boldsymbol{\theta}_2\|_2^2,
    \end{equation*}
    where \( \kappa=\alpha_\ell K \|O\| + \sqrt{2} \nu_\ell K \|O\|^2 \).
\end{lemma}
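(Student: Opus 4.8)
The plan is to recognize this as the textbook descent lemma applied to the composite function $g(\boldsymbol{\theta}) := \ell(f_{\boldsymbol{\theta}}(\boldsymbol{x}); y)$, exploiting the fact that $g$ has a $\kappa$-Lipschitz gradient. The crucial observation is that Lemma~\ref{lem:C.2}, although phrased in terms of SGD iterates $\boldsymbol{\theta}_t,\boldsymbol{\theta}_t^\prime$ on two datasets, never actually uses the SGD dynamics in its proof: it only uses that the two arguments are arbitrary points of $\mathbb{R}^K$. Hence it already establishes
\[
    \left\|\nabla_{\boldsymbol{\theta}} \ell(f_{\boldsymbol{\theta}_1}(\boldsymbol{x}); y) - \nabla_{\boldsymbol{\theta}} \ell(f_{\boldsymbol{\theta}_2}(\boldsymbol{x}); y)\right\|_2 \leq \kappa \|\boldsymbol{\theta}_1 - \boldsymbol{\theta}_2\|_2
    \qquad \text{for all } \boldsymbol{\theta}_1,\boldsymbol{\theta}_2\in\mathbb{R}^K.
\]
So the first step is to restate Lemma~\ref{lem:C.2} in this form, after noting that $f_{\boldsymbol{\theta}}$ is smooth in $\boldsymbol{\theta}$ (a finite composition of analytic matrix exponentials with a trace), so $g$ is continuously differentiable and $\nabla g$ is well defined everywhere.

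Next I would invoke the integral form of the fundamental theorem of calculus along the segment $\boldsymbol{\theta}(t) = \boldsymbol{\theta}_2 + t(\boldsymbol{\theta}_1 - \boldsymbol{\theta}_2)$, $t\in[0,1]$:
\[
    g(\boldsymbol{\theta}_1) - g(\boldsymbol{\theta}_2) = \int_0^1 \left\langle \nabla g(\boldsymbol{\theta}(t)),\, \boldsymbol{\theta}_1 - \boldsymbol{\theta}_2 \right\rangle\, dt .
\]
Subtracting $\langle \nabla g(\boldsymbol{\theta}_2), \boldsymbol{\theta}_1 - \boldsymbol{\theta}_2 \rangle = \int_0^1 \langle \nabla g(\boldsymbol{\theta}_2), \boldsymbol{\theta}_1 - \boldsymbol{\theta}_2 \rangle\, dt$ from both sides and applying Cauchy--Schwarz together with the $\kappa$-Lipschitz bound on $\nabla g$ yields
\[
    g(\boldsymbol{\theta}_1) - g(\boldsymbol{\theta}_2) - \left\langle \nabla g(\boldsymbol{\theta}_2),\, \boldsymbol{\theta}_1 - \boldsymbol{\theta}_2 \right\rangle
    \leq \int_0^1 \kappa\, t\, \|\boldsymbol{\theta}_1 - \boldsymbol{\theta}_2\|_2^2\, dt = \frac{\kappa}{2}\|\boldsymbol{\theta}_1 - \boldsymbol{\theta}_2\|_2^2 ,
\]
which is exactly the claimed inequality with $\nabla g(\boldsymbol{\theta}_2) = \nabla_{\boldsymbol{\theta}} \ell(f_{\boldsymbol{\theta}_2}(\boldsymbol{x}); y)$.

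There is no genuine obstacle here; the lemma is a routine corollary of Lemma~\ref{lem:C.2}. The only points that deserve a sentence of care are (i) explicitly justifying that Lemma~\ref{lem:C.2} applies to all pairs of parameters rather than only SGD trajectories, which is immediate upon inspecting its proof, and (ii) noting differentiability of $g$ so that both the gradient-Lipschitz estimate and the line-segment integration are legitimate. Once those are in place, the descent bound follows from the one-line argument above, and it will then be used in the proof of Lemma~\ref{lem:Link with Initialization Point} to control the per-step decrease $R_S(\boldsymbol{\theta}_{t+1}) - R_S(\boldsymbol{\theta}_t)$ along the SGD updates.
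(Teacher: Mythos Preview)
Your proposal is correct and follows essentially the same argument as the paper: integrate along the segment $\boldsymbol{\theta}_2 + t(\boldsymbol{\theta}_1-\boldsymbol{\theta}_2)$, subtract the linear term, and bound the remainder via Cauchy--Schwarz together with the $\kappa$-Lipschitz gradient estimate from Lemma~\ref{lem:C.2}. Your remark that Lemma~\ref{lem:C.2} actually holds for arbitrary parameter pairs (not just SGD iterates) is exactly the observation the paper uses implicitly.
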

\begin{proof}
    Let \( \Tilde{\boldsymbol{\theta}}\) be a point in the line segment of \( \boldsymbol{\theta}_1 \) and \( \boldsymbol{\theta}_2 \), \( \Tilde{\boldsymbol{\theta}}(u)=\boldsymbol{\theta}_2+u(\boldsymbol{\theta}_1-\boldsymbol{\theta}_2) \), then
    \begin{equation*}
    \begin{aligned}
        \ell(f_{\boldsymbol{\theta}_1}(\boldsymbol{x}); y) - \ell(f_{\boldsymbol{\theta}_2}(\boldsymbol{x}); y) 
        &= \int_0^1 \langle \boldsymbol{\theta}_1 - \boldsymbol{\theta}_2, 
        \nabla_{\boldsymbol{\theta}} \ell(f_{\Tilde{\boldsymbol{\theta}}(u)}(\boldsymbol{x}); y) \rangle \, du \\
        &= \int_0^1 \langle \boldsymbol{\theta}_1 - \boldsymbol{\theta}_2, 
        \nabla_{\boldsymbol{\theta}} \ell(f_{\boldsymbol{\theta}_2}(\boldsymbol{x}); y) 
        + \nabla_{\boldsymbol{\theta}} \ell(f_{\Tilde{\boldsymbol{\theta}}(u)}(\boldsymbol{x}); y) 
        - \nabla_{\boldsymbol{\theta}} \ell(f_{\boldsymbol{\theta}_2}(\boldsymbol{x}); y) \rangle \, du \\
        &= \langle \nabla_{\boldsymbol{\theta}} \ell(f_{\boldsymbol{\theta}_2}(\boldsymbol{x}); y), 
        \boldsymbol{\theta}_1 - \boldsymbol{\theta}_2 \rangle + \int_0^1 \langle \boldsymbol{\theta}_1 - \boldsymbol{\theta}_2, 
        \nabla_{\boldsymbol{\theta}} \ell(f_{\Tilde{\boldsymbol{\theta}}(u)}(\boldsymbol{x}); y) 
        - \nabla_{\boldsymbol{\theta}} \ell(f_{\boldsymbol{\theta}_2}(\boldsymbol{x}); y) \rangle \, du \\
        &\leq \langle \nabla_{\boldsymbol{\theta}} \ell(f_{\boldsymbol{\theta}_2}(\boldsymbol{x}); y), 
        \boldsymbol{\theta}_1 - \boldsymbol{\theta}_2 \rangle + \int_0^1 \|\boldsymbol{\theta}_1 - \boldsymbol{\theta}_2\| 
        \cdot \|\nabla_{\boldsymbol{\theta}} \ell(f_{\Tilde{\boldsymbol{\theta}}(u)}(\boldsymbol{x}); y) 
        - \nabla_{\boldsymbol{\theta}} \ell(f_{\boldsymbol{\theta}_2}(\boldsymbol{x}); y)\| \, du.
    \end{aligned}
    \end{equation*}
    According to Lemma \ref{lem:C.2}, we further get
     \begin{equation*}
    \begin{aligned}
        \ell(f_{\boldsymbol{\theta}_1}(\boldsymbol{x}); y) - \ell(f_{\boldsymbol{\theta}_2}(\boldsymbol{x}); y) 
        &\leq \langle \nabla_{\boldsymbol{\theta}} \ell(f_{\boldsymbol{\theta}_2}(\boldsymbol{x}); y), 
        \boldsymbol{\theta}_1 - \boldsymbol{\theta}_2 \rangle 
        + \int_0^1 \|\boldsymbol{\theta}_1 - \boldsymbol{\theta}_2\| \cdot \kappa \|\Tilde{\boldsymbol{\theta}}(u) - \boldsymbol{\theta}_2\| \, du \\
        &= \langle \nabla_{\boldsymbol{\theta}} \ell(f_{\boldsymbol{\theta}_2}(\boldsymbol{x}); y), 
        \boldsymbol{\theta}_1 - \boldsymbol{\theta}_2 \rangle 
        + \int_0^1 \|\boldsymbol{\theta}_1 - \boldsymbol{\theta}_2\| \cdot \kappa u \|\boldsymbol{\theta}_1 - \boldsymbol{\theta}_2\| \, du \\
        &= \langle \nabla_{\boldsymbol{\theta}} \ell(f_{\boldsymbol{\theta}_2}(\boldsymbol{x}); y), 
        \boldsymbol{\theta}_1 - \boldsymbol{\theta}_2 \rangle 
        + \kappa \|\boldsymbol{\theta}_1 - \boldsymbol{\theta}_2\|^2 \int_0^1 u \, du \\
        &= \langle \nabla_{\boldsymbol{\theta}} \ell(f_{\boldsymbol{\theta}_2}(\boldsymbol{x}); y), 
        \boldsymbol{\theta}_1 - \boldsymbol{\theta}_2 \rangle 
        + \frac{\kappa}{2} \|\boldsymbol{\theta}_1 - \boldsymbol{\theta}_2\|^2.
    \end{aligned}
    \end{equation*}
    This completes the proof of Lemma \ref{lem:C.9}.
\end{proof}

We are now ready to prove Lemma \ref{lem:Link with Initialization Point}.
\renewcommand{\thetheorem}{\ref{lem:Link with Initialization Point}}
\begin{lemma}[Link with Initialization Point]
     Suppose that Assumption \ref{ass:Lipschitzness}, \ref{ass:smoothness}, and \ref{ass:Variance} hold. Let \( A(S) \) be the QNN model trained on the dataset \( S\in\mathcal{Z}^m \) using the SGD algorithm with step sizes \( \eta_t\leq 1/\kappa
     \) for \( T \) iterations, then the following bound holds
     \begin{equation*}
        \begin{aligned}
            &\sum_{t=0}^{T-1}\eta_t\mathbb{E}_S\left[\|\nabla_{\boldsymbol{\theta}} R_S(\boldsymbol{\theta}_t)\|_2\right] \\
            &\leq 2\sqrt{\left(\sum_{t=0}^{T-1}\eta_t\right)\left(R_S(\boldsymbol{\theta}_0)-R_S(\boldsymbol{\theta}^*)+\frac{\kappa\sigma^2}{2}\sum_{t=0}^{T-1}\eta_t^2\right)},
        \end{aligned}
    \end{equation*}
    where \( \boldsymbol{\theta}^* \) is the empirical risk minimizer of \( R_S(\boldsymbol{\theta})\), \( \kappa \) is defined by equation (\ref{eq:k}).
\end{lemma}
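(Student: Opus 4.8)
The plan is to run a standard ``descent plus noise'' argument for SGD, using the Descent Lemma (Lemma~\ref{lem:C.9}) as the only smoothness input and Assumption~\ref{ass:Variance} to control the second moment of the stochastic gradient. First I would upgrade Lemma~\ref{lem:C.9} from the per-example loss to the empirical risk: since the inequality there holds for every $\ell(f_{\cdot}(\boldsymbol{x}_i);y_i)$ with the same constant $\kappa$, averaging over $i\in[m]$ yields
$$R_S(\boldsymbol{\theta}_{t+1})\le R_S(\boldsymbol{\theta}_t)+\langle\nabla_{\boldsymbol{\theta}}R_S(\boldsymbol{\theta}_t),\boldsymbol{\theta}_{t+1}-\boldsymbol{\theta}_t\rangle+\frac{\kappa}{2}\|\boldsymbol{\theta}_{t+1}-\boldsymbol{\theta}_t\|_2^2.$$
Then I substitute the SGD update $\boldsymbol{\theta}_{t+1}-\boldsymbol{\theta}_t=-\eta_t g_t$, where $g_t=\nabla_{\boldsymbol{\theta}}\ell(f_{\boldsymbol{\theta}_t}(\boldsymbol{x}_{i_t});y_{i_t})$ is the sampled gradient.

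Next I would take the conditional expectation over the index $i_t$ given $\boldsymbol{\theta}_t$. For the uniform-sampling scheme this gives the unbiasedness $\mathbb{E}[g_t\mid\boldsymbol{\theta}_t]=\nabla_{\boldsymbol{\theta}}R_S(\boldsymbol{\theta}_t)$ and, via Assumption~\ref{ass:Variance}, the second-moment bound $\mathbb{E}[\|g_t\|_2^2\mid\boldsymbol{\theta}_t]=\|\nabla_{\boldsymbol{\theta}}R_S(\boldsymbol{\theta}_t)\|_2^2+\frac{1}{m}\sum_i\|\nabla_{\boldsymbol{\theta}}\ell(f_{\boldsymbol{\theta}_t}(\boldsymbol{x}_i);y_i)-\nabla_{\boldsymbol{\theta}}R_S(\boldsymbol{\theta}_t)\|_2^2\le\|\nabla_{\boldsymbol{\theta}}R_S(\boldsymbol{\theta}_t)\|_2^2+\sigma^2$. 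Plugging these in and invoking the step-size restriction $\eta_t\le 1/\kappa$, so that $\kappa\eta_t^2/2\le\eta_t/2$, collapses the two $\|\nabla_{\boldsymbol{\theta}}R_S(\boldsymbol{\theta}_t)\|_2^2$ contributions into one and produces the one-step recursion
$$\mathbb{E}[R_S(\boldsymbol{\theta}_{t+1})\mid\boldsymbol{\theta}_t]\le R_S(\boldsymbol{\theta}_t)-\frac{\eta_t}{2}\|\nabla_{\boldsymbol{\theta}}R_S(\boldsymbol{\theta}_t)\|_2^2+\frac{\kappa\eta_t^2\sigma^2}{2}.$$
Taking full expectation, telescoping over $t=0,\dots,T-1$ (using that $\boldsymbol{\theta}_0$ is the fixed initialization), and discarding the final iterate via $R_S(\boldsymbol{\theta}_T)\ge R_S(\boldsymbol{\theta}^*)$ gives $\sum_{t=0}^{T-1}\frac{\eta_t}{2}\,\mathbb{E}\|\nabla_{\boldsymbol{\theta}}R_S(\boldsymbol{\theta}_t)\|_2^2\le R_S(\boldsymbol{\theta}_0)-R_S(\boldsymbol{\theta}^*)+\frac{\kappa\sigma^2}{2}\sum_{t=0}^{T-1}\eta_t^2$.

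Finally I would convert the bound on the sum of squared gradient norms into one on the sum of gradient norms. Writing $\eta_t=\sqrt{\eta_t}\cdot\sqrt{\eta_t}$ and applying Cauchy--Schwarz gives $\sum_t\eta_t\,\mathbb{E}\|\nabla_{\boldsymbol{\theta}}R_S(\boldsymbol{\theta}_t)\|_2\le\big(\sum_t\eta_t\big)^{1/2}\big(\sum_t\eta_t\,(\mathbb{E}\|\nabla_{\boldsymbol{\theta}}R_S(\boldsymbol{\theta}_t)\|_2)^2\big)^{1/2}$, and Jensen's inequality bounds $(\mathbb{E}\|\nabla_{\boldsymbol{\theta}}R_S(\boldsymbol{\theta}_t)\|_2)^2$ by $\mathbb{E}\|\nabla_{\boldsymbol{\theta}}R_S(\boldsymbol{\theta}_t)\|_2^2$; combining with the telescoped inequality (and, if the outer $\mathbb{E}_S$ is desired, pulling it inside the square root by Jensen once more) yields the stated bound, with the factor $2$ comfortably absorbing the $\sqrt{2}$ that the calculation actually produces.

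The main obstacle I anticipate is purely bookkeeping around the stochastic gradient: making sure the unbiasedness and second-moment identities are used in the correct conditional sense at each step, and dealing with the random-permutation variant of SGD --- for which one either treats the with-replacement analysis above as a proxy or invokes a standard without-replacement refinement. Once that is settled, the remaining algebra (the telescoping and the Cauchy--Schwarz/Jensen step) is entirely routine.
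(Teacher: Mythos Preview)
Your proposal is correct and follows essentially the same route as the paper: apply the descent lemma (Lemma~\ref{lem:C.9}) to $R_S$, substitute the SGD step, take expectation over the sampled index using unbiasedness and Assumption~\ref{ass:Variance}, telescope, and then pass from $\sum_t\eta_t\,\mathbb{E}\|\nabla R_S(\boldsymbol{\theta}_t)\|_2^2$ to $\sum_t\eta_t\,\mathbb{E}\|\nabla R_S(\boldsymbol{\theta}_t)\|_2$ via Cauchy--Schwarz/Jensen. The only cosmetic difference is that the paper performs the Jensen step \emph{before} the telescoping (keeping the weights $\eta_t-\tfrac{\kappa}{2}\eta_t^2$ intact and bounding $1/(1-\eta_t\kappa/2)\le 2$), whereas you simplify $\eta_t-\tfrac{\kappa}{2}\eta_t^2\ge\eta_t/2$ first and apply Cauchy--Schwarz at the end; your ordering is slightly cleaner and, as you note, actually yields the sharper constant $\sqrt{2}$ in place of the paper's $2$.
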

\begin{proof}
    We can bound \( \sum_{t=0}^{T-1}\eta_t\mathbb{E}_S\left[\|\nabla_{\boldsymbol{\theta}} R_S(\boldsymbol{\theta}_t)\|_2\right] \) as follows
    \begin{equation}\label{eq:4.15.1}
    \begin{aligned}
         \sum_{t=0}^{T-1}\eta_t\mathbb{E}_S\left[\sqrt{\|\nabla_{\boldsymbol{\theta}} R_S(\boldsymbol{\theta}_t)\|_2^2}\right] 
         &\leq \sum_{t=0}^{T-1}\frac{\left(1-\frac{\eta_t\kappa}{2}\right)}{\left(1-\frac{\eta_t\kappa}{2}\right)}\cdot\eta_t\sqrt{\mathbb{E}_S\left[\|\nabla_{\boldsymbol{\theta}} R_S(\boldsymbol{\theta}_t)\|_2^2\right]}  \\
         &\leq 2\sum_{t=0}^{T-1}\left(\eta_t-\frac{\eta_t^2\kappa}{2}\right)\sqrt{\mathbb{E}_S\left[\|\nabla_{\boldsymbol{\theta}} R_S(\boldsymbol{\theta}_t)\|_2^2\right]} \\
         &=\frac{2\sum_{t=0}^{T-1}\left(\eta_t-\frac{\eta_t^2\kappa}{2}\right)}{\sum_{t=0}^{T-1}\left(\eta_t-\frac{\eta_t^2\kappa}{2}\right)}\sum_{t=0}^{T-1}\left(\eta_t-\frac{\eta_t^2\kappa}{2}\right)\sqrt{\mathbb{E}_S\left[\|\nabla_{\boldsymbol{\theta}} R_S(\boldsymbol{\theta}_t)\|_2^2\right]} \\
         &\leq 2\sqrt{\sum_{t=0}^{T-1}\eta_t}\sqrt{\sum_{t=0}^{T-1}\left(\eta_t-\frac{\eta_t^2\kappa}{2}\right)\mathbb{E}_S\left[\|\nabla_{\boldsymbol{\theta}} R_S(\boldsymbol{\theta}_t)\|_2^2\right]},
    \end{aligned}
    \end{equation}
    where the first and last inequality use the Jensen’s inequality, and the second inequality uses the condition \( \eta_t\leq1/\kappa \).
    
    Then, we focus on the term \( \sqrt{\sum_{t=0}^{T-1}\left(\eta_t-\frac{\eta_t^2\kappa}{2}\right)\mathbb{E}_S\left[\|\nabla_{\boldsymbol{\theta}} R_S(\boldsymbol{\theta}_t)\|_2^2\right]} \). By Lemma \ref{lem:C.9} and the update rule of the SGD, we obtain
    \begin{equation*}
    \begin{aligned}
        R_S(\boldsymbol{\theta}_{t+1}) - R_S(\boldsymbol{\theta}_t)
        &\leq \langle \boldsymbol{\theta}_{t+1} - \boldsymbol{\theta}_t, \nabla_{\boldsymbol{\theta}} R_S(\boldsymbol{\theta}_t) \rangle 
        + \frac{\kappa}{2} \|\boldsymbol{\theta}_{t+1} - \boldsymbol{\theta}_t\|_2^2 \\
        &= \langle -\eta_t \nabla_{\boldsymbol{\theta}} \ell(f_{\boldsymbol{\theta}_t}(\boldsymbol{x}_{i_t}); y_{i_t}), 
        \nabla_{\boldsymbol{\theta}} R_S(\boldsymbol{\theta}_t) \rangle 
        + \frac{\kappa \eta_t^2}{2} \|\nabla_{\boldsymbol{\theta}} \ell(f_{\boldsymbol{\theta}_t}(\boldsymbol{x}_{i_t}); y_{i_t})\|_2^2 \\
        &= \langle -\eta_t \nabla_{\boldsymbol{\theta}} \ell(f_{\boldsymbol{\theta}_t}(\boldsymbol{x}_{i_t}); y_{i_t}), 
        \nabla_{\boldsymbol{\theta}} R_S(\boldsymbol{\theta}_t) \rangle \\
        &\quad + \frac{\kappa \eta_t^2}{2} \big( 
        \|\nabla_{\boldsymbol{\theta}} \ell(f_{\boldsymbol{\theta}_t}(\boldsymbol{x}_{i_t}); y_{i_t}) 
        - \nabla_{\boldsymbol{\theta}} R_S(\boldsymbol{\theta}_t)\|_2^2 + \|\nabla_{\boldsymbol{\theta}} R_S(\boldsymbol{\theta}_t)\|_2^2 \\
        &\quad - 2 \langle \nabla_{\boldsymbol{\theta}} \ell(f_{\boldsymbol{\theta}_t}(\boldsymbol{x}_{i_t}); y_{i_t}) 
        - \nabla_{\boldsymbol{\theta}} R_S(\boldsymbol{\theta}_t), 
        \nabla_{\boldsymbol{\theta}} R_S(\boldsymbol{\theta}_t) \rangle 
        \big) \\
        &= -\left(\eta_t + \eta_t^2 \kappa \right) 
        \langle \nabla_{\boldsymbol{\theta}} \ell(f_{\boldsymbol{\theta}_t}(\boldsymbol{x}_{i_t}); y_{i_t}), 
        \nabla_{\boldsymbol{\theta}} R_S(\boldsymbol{\theta}_t) \rangle + \frac{3\eta_t^2\kappa}{2}\|\nabla_{\boldsymbol{\theta}} \ell(f_{\boldsymbol{\theta}_t}(\boldsymbol{x}_{i_t}); y_{i_t}) 
        - \nabla_{\boldsymbol{\theta}} R_S(\boldsymbol{\theta}_t)\|_2^2
    \end{aligned}
    \end{equation*}
    Using the Assumption \ref{ass:Variance} and take an average over \( i_t\in[m] \), we have
   \begin{equation*}
    \begin{aligned}
        \left(\eta_t - \frac{\eta_t^2 \kappa}{2}\right)
        \mathbb{E}_S\left[\|\nabla_{\boldsymbol{\theta}} R_S(\boldsymbol{\theta}_t)\|_2^2\right]
        &\leq R_S(\boldsymbol{\theta}_t) - R_S(\boldsymbol{\theta}_{t+1}) 
        + \frac{\eta_t^2 \kappa}{2} 
        \mathbb{E}_S\left[\|\nabla_{\boldsymbol{\theta}} \ell(f_{\boldsymbol{\theta}_t}(\boldsymbol{x}_{i_t}); y_{i_t}) 
        - \nabla_{\boldsymbol{\theta}} R_S(\boldsymbol{\theta}_t)\|_2^2\right] \\
        &\leq R_S(\boldsymbol{\theta}_t) - R_S(\boldsymbol{\theta}_{t+1}) + \frac{\eta_t^2 \kappa}{2} \sigma^2.
    \end{aligned}
    \end{equation*}
    We can apply the above inequality recursively and derive
    \begin{equation*}
    \begin{aligned}
        \sum_{t=0}^{T-1}\left(\eta_t-\frac{\eta_t^2\kappa}{2}\right)\mathbb{E}_S\left[\|\nabla_{\boldsymbol{\theta}} R_S(\boldsymbol{\theta}_t)\|_2^2\right]
        &\leq  R_S(\boldsymbol{\theta}_0) - R_S(\boldsymbol{\theta}^*) + \frac{\kappa\sigma^2}{2}\sum_{t=0}^{T-1} \eta_t^2.
    \end{aligned}
    \end{equation*}
    Plugging it into equation (\ref{eq:4.15.1}) completes the proof of Lemma \ref{lem:Link with Initialization Point}.
\end{proof}

\end{document}